\documentclass{article} % For LaTeX2e
\usepackage{iclr2027_conference,times}
\iclrfinalcopy
\makeatletter
\def\@oddhead{%
  \vbox to 0pt{%
    \hbox to\textwidth{Preprint. Under review.\hfil}%
    \vskip 1pt%
    \hbox to\textwidth{\rule{\textwidth}{0.4pt}}%
    \vss
  }%
}
\def\@evenhead{\@oddhead}
\makeatother
\usepackage{amsmath,amsfonts,bm}

\def\eqref#1{equation~\ref{#1}}
\def\1{\bm{1}}

\DeclareMathAlphabet{\mathsfit}{\encodingdefault}{\sfdefault}{m}{sl}
\SetMathAlphabet{\mathsfit}{bold}{\encodingdefault}{\sfdefault}{bx}{n}

\newcommand{\R}{\mathbb{R}}

\DeclareMathOperator*{\argmax}{arg\,max}

\DeclareMathOperator{\sign}{sign}

\usepackage{url}
\usepackage{algorithm}
\usepackage{algpseudocode}
\usepackage{enumitem}

\usepackage{amsmath}
\usepackage{amssymb}
\usepackage{amsfonts}
\usepackage{mathtools}
\usepackage{amsthm}

\newtheorem{theorem}{Theorem}
\newtheorem{proposition}{Proposition}
\newtheorem{corollary}{Corollary}
\newtheorem{definition}{definition}

\usepackage{booktabs}
\usepackage{array}
\usepackage{multirow}

\usepackage{graphicx}
\usepackage{xcolor}
\usepackage{subcaption}

\usepackage{nicefrac}
\usepackage{microtype}
\usepackage{wrapfig}
\usepackage{booktabs}
\usepackage{multirow}
\usepackage{siunitx}
\usepackage{xcolor}
\usepackage{colortbl}
\usepackage{pifont}

\usepackage{etoc}
\usepackage[colorlinks=true,linkcolor=mauvetaupe, urlcolor=deepNavy, citecolor=mauvetaupe]{hyperref}

\definecolor{portlandorange}{rgb}{1.0, 0.35, 0.21}
\definecolor{persian}{rgb}{0.0, 0.65, 0.58}
\definecolor{forestgreen}{rgb}{0.13, 0.55, 0.13}
\definecolor{richlavender}{rgb}{0.67, 0.38, 0.8}
\definecolor{indigo}{rgb}{0.0, 0.25, 0.42}
\definecolor{huntergreen}{rgb}{0.21, 0.37, 0.23}
\definecolor{liver}{rgb}{0.33, 0.29, 0.31}
\definecolor{outerspace}{rgb}{0.25, 0.29, 0.3}
\definecolor{vectororange}{HTML}{FFAC1C}
\definecolor{vectorboox}{HTML}{FFE6B8}
\definecolor{vectorborder}{HTML}{FFAC1C}
\definecolor{smokyblack}{rgb}{0.06, 0.05, 0.03}

\definecolor{mauvetaupe}{rgb}{0.25, 0.41, 0.88}
\definecolor{deepNavy}{rgb}{1.0, 0.0, 0.56}
\definecolor{hiddenbox}{RGB}{232,236,241}
\definecolor{hiddenborder}{RGB}{92,101,116}

\definecolor{tiedbox}{RGB}{218,246,235}
\definecolor{tiedborder}{RGB}{0,150,120}

\definecolor{vectorbox}{RGB}{253,230,242}
\definecolor{vectorborder}{RGB}{205,78,145}

\newcommand{\hlpara}[2]{%
  \noindent\colorbox{#1}{\textbf{#2}}%
}

\usepackage{tikz}
\usepackage{pgfplots}
\pgfplotsset{compat=1.18}
\usetikzlibrary{arrows.meta, calc, positioning}
\newcommand{\ip}[2]{\langle #1,#2\rangle}
\newcommand{\norm}[1]{\left\lVert #1\right\rVert}
\newcommand{\tie}{N_{\mathrm{tie},c}}
\newcommand{\alias}{N_{\mathrm{alias}}}
\newcommand{\LC}{L_c}

\DeclareMathOperator{\RMS}{RMS}
\DeclareMathOperator{\supp}{supp}
\DeclareMathOperator{\PAPR}{PAPR}
\title{AF-Muon: An AdamW-Free Muon Optimizer for Tied-Embedding Models}
\author{
Arash Lagzian$^{1*}$,
Paniz Halvachi$^{2}$,
Junming Zhang$^{3}$,
Zhouhan Lin$^{3}$,
Dianbo Liu$^{1*}$\\[0.5em]
$^{1}$National University of Singapore, Singapore\\
$^{2}$Independent Researcher\\
$^{3}$LUMIA Lab, School of Artificial Intelligence, Shanghai Jiao Tong University, Shanghai, China\\[0.5em]
$^{*}$Corresponding authors:
\texttt{arashlagzian94@nus.edu.sg},
\texttt{dianbo@nus.edu.sg}
}

\begin{document}

\etocdepthtag.toc{main}
\maketitle

% \begin{center}
% \small
% Correspondence: \texttt{arashlagzian94@nus.edu.sg}, \texttt{dianbo@nus.edu.sg}
% \end{center}

\begin{abstract}
Muon improves large-scale training by applying a spectral-norm steepest-descent update to matrix parameters, but practical models also contain parameter blocks that do not fit dense-matrix geometry. One important case is the tied vocabulary table, which appears in language models and other token generators and can receive multiple structurally different gradient sources, from sparse input lookups to dense output-classifier updates. 
In the reference recipe these blocks are handed to an auxiliary AdamW optimizer, which restores second-moment state and updates the aliased table as a generic tensor.
We propose AF-Muon, an AdamW-free extension of Muon that keeps the Muon matrix update for hidden weight matrices while using a support-aware finite-cap linear minimization oracle for tied vocabulary tables and an RMS-normalized update for one-dimensional auxiliary parameters. AF-Muon therefore trains every parameter class with a single first-moment buffer and no second-moment state, saving around $20\%$ optimizer-state memory relative to Hybrid Muon in our benchmark. Across nine tied-token settings---decoder-only language models from 124M to 1B parameters, a fully shared T5-style encoder--decoder, and ImageGPT-style
image-token, protein, and sparse-MoE variants, spanning text, image, and protein-sequence data---AF-Muon improves mean validation loss and perplexity over both Hybrid Muon and a SCION-style Sign endpoint. Long-horizon runs and hyperparameter sensitivity studies confirm the gain is robust, and identical-momentum diagnostics attribute it to the finite cap, which preserves more within-row magnitude than Sign while bounding the coordinate concentration of row-RMS. These results identify tied vocabulary tables as a distinct optimizer geometry and yield a robust AdamW-free Muon variant across models, modalities, and architectures, with about $1\%$ step-time overhead in matched training.
% Anonymous code is provided in the supplementary material and at
% \url{https://github.com/lighstar0188/afmoun}.

% \textcolor{red}{by Dianbo Liu , the abstract pitch: 1) background: importance of Muon optimizer 2) non-matrix parameters, other optimizers are often used together with Muon. 3) (gap) however, in parameters with more than one roles , such as tied embedding, existing method cannot handle well. 4) we propose our solution to address this and briefly mentioned results  }

\end{abstract}

\noindent\textbf{Github:}
\url{https://github.com/arashlagzian/afmoun}

\section{Introduction}

Adam and AdamW remain standard for large-model training because their
coordinate-wise first- and second-moment adaptation is robust across
heterogeneous parameter types~\citep{kingma2015adam,loshchilov2019decoupled}.
Muon replaces this geometry for selected two-dimensional hidden weights:
it maps momentum through a short Newton--Schulz iteration approximating the
polar factor, yielding a spectral-norm steepest-descent direction in the
converged limit~\citep{jordan2024muon}. This matrix-aware update is effective
at language-model scale, but the reference Muon recipe is still hybrid:
embeddings, output heads, normalization parameters, and biases remain on an
auxiliary AdamW branch~\citep{liu2025muonscalable}. Thus Muon removes
coordinate-wise second-moment adaptation from only part of the model, including
neither the embedding nor output blocks that dominate token-model parameter
count. This leaves a basic question: how should the remaining parameter roles
be updated without AdamW while respecting their functional geometry?

% \vspace{-5pt}

Norm-based optimizer design emphasizes that tensor shape alone does not
determine an appropriate update rule. Parameters with identical array shapes may implement different functions and therefore require different
geometries~\citep{bernstein2024oldnorm,large2024scalable,bernstein2025modular}. Tied vocabulary tables provide a particularly sharp example. Weight tying uses one stored matrix \(E\in\mathbb{R}^{V\times d}\) for both token lookup and output classification~\citep{press2017using,inan2017tying}. 
% The lookup path
% contributes row-sparse gradients associated with tokens appearing in the input, whereas the output-softmax path generally contributes a dense gradient over vocabulary rows~\citep{press2017using,lopardo2026weight}. In a fully shared encoder--decoder architecture---used in recent lightweight models such as T5Gemma~2~\citep{t5gemma2025}---the same table can additionally receive separate encoder- and decoder-lookup contributions. Autograd accumulates these structurally different signals into one merged gradient on a single aliased tensor before the optimizer acts. The tied table is therefore neither an ordinary hidden linear map nor a pure input embedding or output classifier considered in isolation.
The lookup path gives row-sparse gradients for observed input tokens, whereas the output-softmax path usually gives dense gradients over vocabulary rows~\citep{press2017using,lopardo2026weight}. In fully shared encoder--decoder models, including recent models such as T5Gemma~2~\citep{t5gemma2025}, same table can receive separate encoder- and decoder-lookup contributions. Autograd merges these structurally different signals on one aliased tensor before the optimizer acts, so the tied table is neither an ordinary hidden linear map nor an isolated embedding or classifier.

\begin{figure}[t]
   \centering
   \includegraphics[
     width=1.0\linewidth,
     trim=0 8 0 5,
     clip
   ]{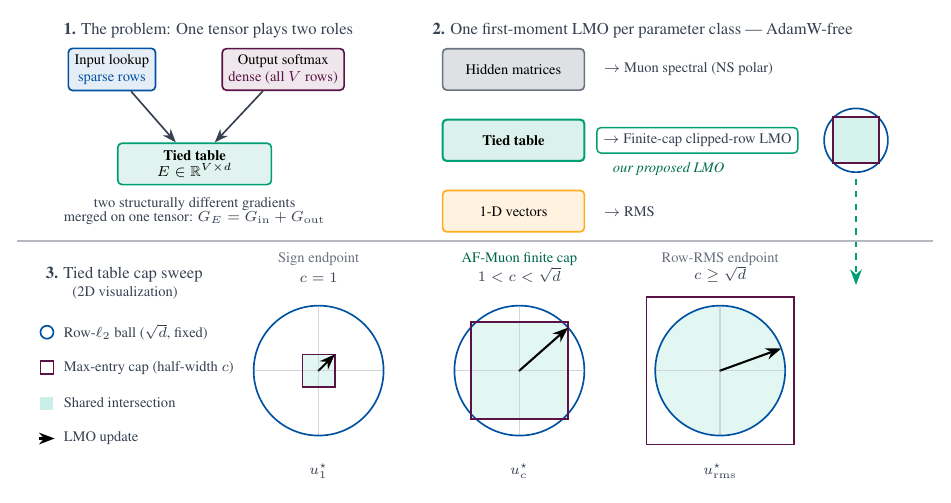}
   \vspace{-1.5em}
   \caption{
\textbf{AF-Muon tied-table geometry.}
Top: a hard-tied vocabulary table serves both sparse input lookup and dense
output-softmax roles, so their gradients merge on one tensor. AF-Muon assigns
one first-moment LMO per parameter class: Muon for hidden matrices, a finite-cap
clipped-row LMO for the tied table, and RMS normalization for 1-D vectors.
Bottom: the tied-table feasible set is the intersection of a fixed row-$\ell_2$
ball and a coordinate cap box. Varying $c$ interpolates between the Sign endpoint
($c=1$), the finite-cap AF-Muon update ($1<c<\sqrt d$), and row-RMS
($c\ge\sqrt d$).
}
   \vspace{-1.0em}
   \label{fig:overview}
\end{figure}
\vspace{-7pt}
Existing choices capture only parts of this structure. Hybrid Muon assigns the tied table to AdamW, retaining coordinate-wise second-moment state. SCION gives
the closest role-aware LMO baseline: for untied one-hot inputs it recommends ColNorm--Spectral--Sign, where ColNorm corresponds to per-token row-RMS normalization in our \(E\in\mathbb{R}^{V\times d}\) orientation; under input--output weight sharing it recommends Sign--Spectral--Sign, so the aliased table receives a Sign LMO~\citep{pethick2025scion}. We use this rule as our
SCION-style endpoint baseline. Sign controls coordinate concentration but gives weak and dominant momentum coordinates equal magnitude; row-RMS preserves within-row magnitude information but can concentrate update energy in a few coordinates. Rescaling either endpoint changes only overall magnitude, leaving the geometric trade-off unresolved.

Figure~\ref{fig:overview} summarizes the tied-table problem, the AF-Muon parameter-role assignment, and the finite-cap geometry between the Sign endpoint and the uncapped row-RMS limit. We introduce \emph{AF-Muon}, a
role-complete AdamW-free extension of Muon for models with tied token tables.
AF-Muon keeps Muon's spectral update for hidden matrices, uses RMS-normalized updates for one-dimensional auxiliary tensors, and applies a new support-aware finite-cap LMO to the merged momentum of the tied vocabulary table. The tied-table oracle intersects a row-wise RMS constraint with an entrywise cap, connecting Sign at the most restrictive cap to row-RMS when the cap is removed. Intermediate caps preserve more within-row magnitude information than Sign while limiting the coordinate concentration allowed by row-RMS. The cap \(c\) controls normalized direction geometry, while a separate scale \(s\) controls update magnitude. Thus every parameter class considered is trained with first-moment state only, with no parameter delegated to AdamW.

% We evaluate AF-Muon across decoder-only and encoder--decoder models, multiple parameter-sharing patterns, and several token domains under matched training budgets. Beyond the decoder-only language models, the fully shared T5-style setting places encoder lookup, decoder lookup, and output classification in one vocabulary tensor, giving a direct test of the aliased-gradient motivation; the evaluation further spans image-token modeling on ImageNet-32, protein language modeling, and a sparse Mixture-of-Experts architecture. In every setting AF-Muon attains lower mean validation loss than both Hybrid Muon and the SCION-style Sign endpoint; notably, the Sign endpoint alone can fall \emph{below} Hybrid Muon, whereas the finite cap does not, indicating that the cap---not merely removing AdamW---drives the gain. Long-horizon runs and sensitivity studies over batch size, learning-rate allocation, cap $c$, and scale $s$ show the advantage persists across the tested horizons and hyperparameter ranges, and diagnostics computed on identical momentum states show that finite clipping retains most of the row-RMS linear objective while sharply reducing its coordinate concentration and preserving substantially more within-row information than Sign. Our contributions are listed as follows:
We evaluate AF-Muon under matched budgets across nine tied-token settings:
decoder-only language models, a fully shared encoder--decoder model,
image-token modeling, protein language modeling, and sparse MoE. AF-Muon
achieves the lowest mean validation loss in every setting, outperforming both
Hybrid Muon and the SCION-style Sign endpoint. Long-horizon runs, sensitivity
studies, and matched-momentum diagnostics further support that the finite cap,
rather than merely removing AdamW, drives the gain. Our contributions are:

\begin{itemize}
    \item We identify and formulate optimization of a tied vocabulary table as an aliased multi-role problem: one stored tensor serves sparse lookup and dense classifier roles, so its merged gradient combines structurally different contributions.
    \item We derive an exact, support-aware finite-cap LMO that connects Sign and row-RMS while separating directional geometry, controlled by \(c\), from update magnitude, controlled by \(s\). Combined with Muon for hidden matrices and RMS-normalized updates for vectors, this yields an AdamW-free, second-moment-free optimizer that trains every parameter class with a single first-moment buffer, reducing optimizer-state memory relative to Hybrid Muon.
    \item We provide controlled evidence across nine tied-token settings, spanning
    124M--1B-parameter decoder-only models plus encoder--decoder, image-token,
    protein-sequence, and sparse-MoE variants, trained on text, image, and protein
    data with three paired seeds, up to $\sim\!10\text{B}$ tokens, and tied-geometry sensitivity studies.
\end{itemize}

% \newpage
\section{Method}
\label{sec:method}
Muon~\citep{jordan2024muon} approximately projects momentum updates for selected hidden matrices onto their polar factors, which, in the exact limit, corresponds to steepest descent over a spectral-norm ball. Common Hybrid-Muon implementations keep this matrix update for hidden weights but optimize embeddings, output heads, normalization parameters, and biases with an auxiliary AdamW
branch~\citep{loshchilov2019decoupled}. This works well in practice but mixes optimizer
geometries and carries AdamW second-moment state for the auxiliary branch.

AF-Muon is a single modular optimizer that keeps Muon's hidden-matrix update
while replacing the auxiliary AdamW branch with first-moment LMO updates
matched to each parameter's role. In the tied-head RoPE (Rotary Position Embedding) architectures considered here, learned tensors are partitioned into hidden matrices, the
shared tied token-embedding/LM-head table, and one-dimensional auxiliary
tensors. Hidden matrices use the spectral LMO, vector parameters use an RMS
LMO, and the tied table receives our new finite-cap LMO: since one stored
table is aliased between token-lookup and output-classifier roles, we propose
an intersection of a max-row-$\ell_2$ ball and a max-entry ball for its merged
update.

% ----------------------------------------------------------------------
\subsection{Steepest descent under a norm}
\label{subsec:lmo}

Let a parameter block $W\in\mathcal{W}$ carry a norm $\|\cdot\|$ with dual norm
$\|Z\|_\star := \max_{\|U\|\le 1}\langle Z,U\rangle$, where
$\langle A,B\rangle = \mathrm{tr}(A^\top B)$. 
For oracle input $Z$, a
normalized steepest-descent direction, or \emph{linear minimization oracle}
(LMO), is
\begin{equation}
  \label{eq:lmo}
  D_{\|\cdot\|}(Z)\in\arg\max_{\|U\|\le 1}\langle Z,U\rangle .
\end{equation}
For $Z\neq 0$, $\langle Z,-\eta D_{\|\cdot\|}(Z)\rangle=-\eta\|Z\|_\star<0$, so $-D_{\|\cdot\|}(Z)$ maximizes linearized decrease under $\|\cdot\|$.
The tied table and the one-dimensional auxiliary tensors use the unnormalized momentum buffer $B_t=\mu B_{t-1}+G_t$ as their oracle input. Hidden matrices use Muon's Nesterov-style
construction $M_t=\mu M_{t-1}+(1-\mu)G_t$, $Z_t=(1-\mu)G_t+\mu M_t$, with $Z_t$
passed to the Newton--Schulz map. The identity therefore refers to the oracle input, not to each minibatch gradient $G_t$.

% ----------------------------------------------------------------------
\subsection{Parameter classes}
\label{subsec:classes}

We partition stored parameters by functional role.
%Throughout, $\operatorname{RMS}(x)=\|x\|_2/\sqrt d$ for $x\in\mathbb{R}^d$.
\vspace{-0.5em}

% \paragraph{Hidden matrices.}
\hlpara{hiddenbox}{Hidden matrices.}\quad
For hidden matrices $W\in\mathbb{R}^{m\times n}$, we use Muon's spectral LMO.
If $Z_t=\Phi\Sigma\Psi^\top$, the exact spectral-norm LMO is the polar factor
$\Phi\Psi^\top$ (\cite{pethick2025scion}), approximated by $K$-step Newton--Schulz iteration, $\mathrm{NS}_K(Z_t)$. Following
Muon, we apply the rectangular shape factor $\sqrt{\max(1,m/n)}$; this is the
only branch to which we apply matrix \textbf{weight decay} in our experiments.

% \paragraph{Vector parameters.}
\hlpara{vectorboox}{Vector parameters.}\quad
% For one-dimensional parameters (normalization gains, QK-norm gains, biases),
% the ideal RMS LMO is 
% \vspace{-0.5em}
% \[
% D_{\mathrm{RMS}}(b)
% =
% \begin{cases}
% b/\operatorname{RMS}(b), & b\neq 0,\\
% 0, & b=0.
% \end{cases} \qquad \text{where}\quad \operatorname{RMS}(b)=\|b\|_2/\sqrt d \quad\text{for} \; b\in\mathbb{R}^d
% \]
% \vspace{-0.5em}
% \\
For one-dimensional parameters (normalization gains, QK-norm gains, biases),
the ideal RMS LMO is
$D_{\mathrm{RMS}}(b)=b/\operatorname{RMS}(b)$ for $b\neq0$ and $0$ otherwise,
where $\operatorname{RMS}(b)=\|b\|_2/\sqrt d$ for $b\in\mathbb{R}^d$.
% $D_{\mathrm{RMS}}(b)=b/\operatorname{RMS}(b)$ for
% $b\neq0$ (and $0$ otherwise);
We implement the stabilized form
$\widehat D_{\mathrm{RMS}}(b)=b/\max\{\operatorname{RMS}(b),\varepsilon\}$,
which equals $D_{\mathrm{RMS}}$ whenever $\operatorname{RMS}(b)\ge\varepsilon$.
This removes AdamW from these parameters while preserving a native shape
update.

% \paragraph{Tied table (aliased).}
\hlpara{tiedbox}{Tied table (aliased).}\quad
\label{subsec:tied-lmo}
The tied token-embedding/LM-head is one stored tensor $E\in\mathbb{R}^{V\times
d}$, where $V$ is the vocabulary size
and $d$ is the hidden width. It is used in two roles, $i\mapsto E_{i,:}$ and $h\mapsto Eh\in\mathbb{R}^V$;
autograd accumulates a merged gradient
$G^E=G^E_{\mathrm{in}}+G^E_{\mathrm{out}}$ on the shared tensor. Because the
tied table receives a dense output-role gradient through the softmax, its LMO
is applied to all rows. Unlike an untied head, the merged update should
\emph{not} be row-centered: shifting every row by $a^\top$ is a
logit-preserving symmetry only for the output role, but under tying, it also
shifts every token embedding, so the head-only quotient does not apply here (Appendix \ref{app:sec-hardtiedtables})
%proposition ~\ref{app:pro_remove-shift-redundancy});
; AF-Muon therefore applies its LMO to the
merged, unprojected tied momentum.

More generally, if one stored tensor $W$ is used in $R$ roles through linear
maps $A_r:\mathcal{W}\to\mathcal{X}_r$ with role norm $N_r$ and radius
$\rho_r$, pulling the product geometry back to $W$ gives the \emph{alias norm}
\begin{equation}
\label{eq:alias-norm}
N_{\mathrm{alias}}(U):=\max_{1\le r\le R}\frac{N_r(A_rU)}{\rho_r},
\end{equation}
a norm whenever $\bigcap_r\ker A_r=\{0\}$ (Appendix~\ref{app:sec_aliaspulllback}),
whose unit ball is the intersection of the pulled-back role balls and whose
LMO satisfies every role constraint at once. The tied table is the $R=2$ case
with $A_1=A_2=\mathrm{Id}$: a row-wise radius, reflecting lookup-row control,
intersected with a Sign-compatible max-entry cap on the output side. This
intersection geometry is a modeling choice, not a consequence of tying alone;
we evaluate it against both endpoints.

For $U\in\mathbb{R}^{V\times d}$ with rows $u_i$, let
$\|U\|_{\mathrm{row}}:=\max_i\|u_i\|_2/\sqrt d$ and
$\|U\|_{\max}:=\max_{i,j}|U_{ij}|$. For cap $c\ge1$, define the tied-table
norm
\begin{equation}
\label{eq:tied-norm}
\|U\|_{\mathrm{tie},c}:=\max\left\{\|U\|_{\mathrm{row}},\ \|U\|_{\max}/c\right\},
\end{equation}
whose unit ball caps every row's $\ell_2$ norm at $\sqrt d$ and every entry at
$c$. We write $\mathcal \mathcal{C}_c=\{U:\|U\|_{\mathrm{tie},c}\le1\}$ for its unit ball. This feasible set is a Cartesian product across rows, so its LMO
decomposes row-by-row. For a row momentum $b\in\mathbb{R}^d$ with support size
$k(b)$, choose $\tau_c(b)\ge0$ solving
\begin{equation}
\label{eq:tau-general}
\textstyle\sum_j\min\{c^2,\tau_c(b)^2b_j^2\}=\min\{d,c^2k(b)\},
\end{equation}
and define the \emph{clipped-row LMO}
\begin{equation}
\label{eq:clipped-row-lmo}
\mathcal{L}_c(b)_j=\operatorname{sign}(b_j)\min\{c,\tau_c(b)|b_j|\}.
\end{equation}

\begin{proposition}[Clipped-row LMO]
\label{prop:clipped-row-lmo}
For $c\ge1$, $\mathcal{L}_c(b)\in\arg\max\{\langle b,u\rangle:\|u\|_2\le\sqrt
d,\ \|u\|_\infty\le c\}$, with $\|\mathcal{L}_c(b)\|_2^2=\min\{d,c^2k(b)\}$.
The endpoints are $\mathcal{L}_1(b)=\operatorname{sign}(b)$ and, for
$c\ge\sqrt d$, $\mathcal{L}_c(b)=b/\operatorname{RMS}(b)$ (row-RMS). We write $c=\infty$ as shorthand for omitting the coordinate cap.
%We write $c=\infty$ for this unconstrained limit.
\end{proposition}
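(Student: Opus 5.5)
The proof is a KKT/water-filling argument for maximizing a linear function over the intersection of an $\ell_2$ ball and an $\ell_\infty$ box. We first treat $b=0$, where $k(b)=0$, $\tau_c$ is arbitrary and $\mathcal{L}_c(0)=0$ is trivially optimal. For $b\neq0$, existence of $\tau_c(b)$ comes from continuity and monotonicity of $\phi(\tau):=\sum_j\min\{c^2,\tau^2b_j^2\}$ on $[0,\infty)$. This function is continuous and nondecreasing, with $\phi(0)=0$ and $\phi(\tau)\to c^2k(b)\ge\min\{d,c^2k(b)\}$. It is strictly increasing until it saturates at $c^2k(b)$, so a solution exists. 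When $c^2k(b)\le d$ we may take $\tau$ large enough that every support coordinate is clipped. The norm identity $\|\mathcal{L}_c(b)\|_2^2=\phi(\tau_c(b))=\min\{d,c^2k(b)\}$ is then immediate from \eqref{eq:tau-general}. Feasibility follows because the entries are bounded by $c$ and $\min\{d,\cdot\}\le d$.

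For optimality, we will exhibit a dual certificate instead of invoking general KKT theory. We split into two cases.

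\emph{Case A: $c^2k(b)\le d$.} Here $\mathcal{L}_c(b)_j=c\,\mathrm{sign}(b_j)$ on the support, so $\langle b,\mathcal{L}_c(b)\rangle=c\|b\|_1$. Any feasible $u$ satisfies $\langle b,u\rangle\le\|b\|_1\|u\|_\infty\le c\|b\|_1$, which proves optimality.

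\emph{Case B: $c^2k(b)>d$.} Write $\tau=\tau_c(b)>0$ and $u^\star=\mathcal{L}_c(b)$. For any feasible $u$ we decompose
\[
\langle b,u\rangle=\tfrac1\tau\langle \tau b,u\rangle=\tfrac1\tau\Big(\langle u^\star,u\rangle+\langle \tau b-u^\star,u\rangle\Big).
\]
The vector $r:=\tau b-u^\star$ is sign-aligned with $b$, with $|r_j|=(\tau|b_j|-c)_+$, and it vanishes wherever $|u^\star_j|<c$. Applying Cauchy--Schwarz to the first term and the $\ell_1/\ell_\infty$ bound to the second gives
\[
\langle b,u\rangle\le\tfrac1\tau\big(\|u^\star\|_2\sqrt d+c\|r\|_1\big).
\]
Since $\|u^\star\|_2^2=d$, this equals $\tfrac1\tau\big(\langle u^\star,u^\star\rangle+\langle r,u^\star\rangle\big)=\langle b,u^\star\rangle$. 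The last equality uses that $u^\star_j=c\,\mathrm{sign}(b_j)$ wherever $r_j\neq0$, so $\langle r,u^\star\rangle=c\|r\|_1$. The upper bound is therefore attained by $u^\star$, which proves optimality. The main care point is this certificate: we must check that $r$ and $u^\star$ are aligned on the clipped set so that both inequalities are tight simultaneously.

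For the endpoints we use the norm identity together with uniqueness or simple verification.

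\emph{$c=1$.} We have $\min\{d,k\}=k$, which puts us in Case A, so every support coordinate is clipped to $1$ and $\mathcal{L}_1(b)=\mathrm{sign}(b)$.

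\emph{$c\ge\sqrt d$.} Take $\tau=\sqrt d/\|b\|_2=1/\mathrm{RMS}(b)$. Then $\tau|b_j|\le\sqrt d\le c$, so nothing is clipped, and $\phi(\tau)=d=\min\{d,c^2k\}$ because $c^2k\ge d$. Hence $\mathcal{L}_c(b)=b/\mathrm{RMS}(b)$.

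Finally, $\mathcal{L}_c(b)$ does not depend on which solution $\tau$ of \eqref{eq:tau-general} is chosen in the non-saturated regime, since $\phi$ is strictly increasing there. In the saturated regime any admissible $\tau$ clips all support coordinates and gives the same output.
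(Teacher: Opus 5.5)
Your proof is correct, but it takes a somewhat different route from the paper's.

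\textbf{The paper's route.} It first sign-aligns and discards off-support coordinates. It then treats the supported problem as a convex program with a Slater point, so the KKT conditions are necessary and sufficient. From stationarity it \emph{derives} that every optimum has the form $a_j=\min\{c,\tau q_j\}$ with $\tau=(2\lambda)^{-1}$. Strict monotonicity of $\phi$ then gives a unique root, which yields optimality together with uniqueness among support-preserving maximizers.

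\textbf{Your route.} You guess the maximizer and \emph{verify} it. Your split $\tau b=u^\star+r$ is exactly that KKT point written out: $r/\tau$ is the signed box multiplier $\beta$, and $u^\star/\tau$ is the ball term $2\lambda u^\star$. You check it directly by Cauchy--Schwarz and the $\ell_1/\ell_\infty$ bound, so no constraint qualification is needed. A side benefit is that $(u^\star/\tau,\,r/\tau)$ is an explicit optimal pair in the row-wise version of the paper's infimal-convolution formula for the tied dual norm. This gives $h_c(b)=(d+c\|r\|_1)/\tau$ directly and ties the primal oracle to the dual decomposition.

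\textbf{What each buys.} Your route does not give the uniqueness statement of the appendix theorem, though that is not needed for this proposition. It also does not explain where the clipped form comes from; the KKT derivation does. For the endpoints, the paper argues that one constraint becomes redundant, which implicitly uses that uniqueness to identify the canonical map with $\operatorname{sign}(b)$ or row-RMS. Your explicit root $\tau=1/\operatorname{RMS}(b)$, together with your remark that the output does not depend on which root is chosen, identifies $\mathcal{L}_c(b)$ itself more directly.
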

Applying $\mathcal{L}_c$ row-wise to the table momentum $B$ gives a global LMO
for $\|\cdot\|_{\mathrm{tie},c}$ (proof in Appendix~\ref{app:sec_exactClippedRow});
% for dense rows, $c^2k(b)\ge d$ holds under full-softmax training and the selected direction has $\ell_2$ norm $\sqrt d$. 
under full-softmax training, momentum rows typically become dense enough that $c^2k(b)\ge d$, in which case the selected direction has $\ell_2$ norm $\sqrt d$. The cap $c$ thus controls the
normalized \emph{shape} of the update within each row, interpolating between
Sign and row-RMS, while a separate scalar $s$ (equivalently, the scaled
feasible set $s\,\mathcal{C}_c$) controls per-row \emph{magnitude} without
changing that shape.

The finite cap trades symmetry for concentration control: $\mathcal{L}_c$ is
equivariant under row and signed-feature permutations,
% (Appendix~\ref{app:equivariance})
but, unlike row-RMS, not under general
right rotations $R\in O(d)$, unavoidably so, since any right-orthogonally equivariant, fixed-energy map that is positively aligned with its input must be exactly row-RMS, with scale invariance following automatically (Appendix~\ref{app:sec_symmetry-nogo})
% since any row-separable, scale-invariant, fixed-energy map that is fully rotation-equivariant must be exactly row-RMS (Appendix~\ref{app:sec_symmetry-nogo}).
Concretely, for dense rows
($c^2k(b)\ge d$) the peak-to-average ratio
$\operatorname{PAPR}(u):=\|u\|_\infty^2/(\|u\|_2^2/d)$ obeys
$\operatorname{PAPR}(\mathcal{L}_c(b))\le c^2$, and optimal linear
objective is nondecreasing in $c$ from Sign to row-RMS
(Appendix~\ref{app:sec_geo-rete-concen}): finite-cap family traces an
alignment--concentration frontier between two endpoints.

As with the other non-hidden-matrix blocks, the tied table accumulates $B_t^E=\mu B_{t-1}^E+G_t^E$ from the merged gradient $G_t^E$. For the tied table, $O_{t,i}=\mathcal{L}_c(\mathrm{row}_i(B^E_t))$ and $E_t=E_{t-1}-\eta_{\mathrm{tie},t}O_t$.
% For the tied table, $O_i=\mathcal{L}_c(\mathrm{row}_i(B^E_t))$ and $E_t=E_{t-1}-\eta_{\mathrm{tie}}O$. 
We implement the scalar tied multiplier as a width-scaled learning rate: with shared schedule multiplier $a_t$,
$\eta_{\mathrm M,t}=\eta_{\mathrm M}^{\max}a_t$, and
\begin{equation}
\label{eq:tied-lr}
\eta_{\mathrm{tie},t}=\eta_{\mathrm M,t}\left(\rho_{\mathrm{out}}/\rho_{\mathrm{hid}}\right)s/d.
\end{equation}
% In our experiments $\eta_{\mathrm M}^{\max}=0.02$ and
% $\rho_{\mathrm{out}}/\rho_{\mathrm{hid}}=3000/50=60$, giving
% $\eta_{\mathrm{tie},t}=1.2\,a_ts/d$; our main setting is $c=3$, $s=0.5$, which
% for all evaluated widths ($d>9$) lies strictly between Sign and row-RMS.

% \newpage
\begin{wrapfigure}[37]{r}{0.50\textwidth}
\vspace{-0.5em}
\fcolorbox{mauvetaupe}{white}{%
\begin{minipage}{\dimexpr\linewidth-2\fboxsep-2\fboxrule\relax}
\vspace{0.35em}

\refstepcounter{algorithm}
{\scriptsize
\noindent\textbf{\textcolor{mauvetaupe}{Algorithm~\thealgorithm:} AF-Muon with tied-table clipped LMO}\par
}
\label{alg:afmuon}

\vspace{0.2em}
{\color{mauvetaupe}\hrule}
\vspace{0.35em}

% \vspace{0.35em}
\scriptsize
\setlength{\tabcolsep}{0pt}
\algrenewcommand\alglinenumber[1]{\textcolor{black}{\tiny #1:}}
%----------------------
\begin{algorithmic}[1]
\Require LR schedule $a_t$, peak Muon LR $\eta_{\mathrm M}^{\max}$, peak vector LR $\eta_{\mathrm{vec}}^{\max}$, momentum $\mu$, matrix weight decay $\lambda_{\mathrm M}$, tied update scale $s$, tied cap $c$, output radius $\rho_{\mathrm{out}}$, hidden radius $\rho_{\mathrm{hid}}$, Newton--Schulz steps $K$, constant $\varepsilon$
\Statex \vspace{-0.5em}
\State Let $\mathcal{P}$ be the set of unique stored parameter tensors, deduplicated by object identity
\State Initialize hidden-matrix momentum buffers $M_0^W\leftarrow 0$
\State Initialize auxiliary momentum buffers $B_0^W\leftarrow 0$
\Statex \vspace{-0.5em}
\For{step $t=1,\dots,T$}
\Statex\vspace{-0.2cm}
  \State $\eta_{\mathrm M,t}\leftarrow \eta_{\mathrm M}^{\max}a_t$;\quad
         $\eta_{\mathrm{vec},t}\leftarrow \eta_{\mathrm{vec}}^{\max}a_t$
  \State Compute gradients $G_t^W\leftarrow\nabla_W\mathcal{L}_t(\theta_{t-1})$
\Statex\vspace{-0.2cm}
  \For{each $W\in\mathcal{P}$}
\Statex\vspace{-0.1cm}
    \If{$G_t^W$ is \texttt{None}}
      \State continue
    \EndIf
\Statex\vspace{-0.1cm}
    \If{$W$ is a hidden matrix in $\mathbb{R}^{m\times n}$}
    \Statex\vspace{-0.2cm}
      % \Comment{$m=$ output dimension, $n=$ input dimension}
      \State $M_t^{W}\leftarrow \mu M_{t-1}^{W}+(1-\mu)G_t^{W}$
      \State $Z_t^{W}\leftarrow (1-\mu)G_t^{W}+\mu M_t^{W}$
      \State $O_t^{W}\leftarrow \mathrm{NS}_K(Z_t^{W})\cdot\sqrt{\max(1,m/n)}$
      \State $W_t\leftarrow (1-\eta_{\mathrm M,t}\lambda_{\mathrm M})W_{t-1}
             -\eta_{\mathrm M,t}O_t^{W}$
\Statex\vspace{-0.1cm}
    \ElsIf{$W\in\mathbb{R}^{V\times d}$ is the tied token embedding/\\ \hspace*{2.9em} LM-head table}
      % \Comment{merged-gradient clipped-row LMO}
      \Statex\vspace{-0.2cm}
      \State \textit{$G_t^W$ is the shared tensor gradient, containing both \\ \hspace*{4.1em} input-lookup and output-softmax contributions}
      \Statex\vspace{-0.2cm}
      \State $B_t^{W}\leftarrow \mu B_{t-1}^{W}+G_t^{W}$
      \State $\eta_{\mathrm{tie},t}\leftarrow
      \eta_{\mathrm M,t}
      \left(\rho_{\mathrm{out}}/\rho_{\mathrm{hid}}\right)s/d$
      \Statex\vspace{-0.2cm}
      \For{each row $i=1,\dots,V$}
        \State $b_i\leftarrow \mathrm{row}_i(B_t^{W})$
        \State $u_i\leftarrow \mathcal{L}_c(b_i)$
      \EndFor
      \Statex\vspace{-0.2cm}
      \State $O_t^{W}\leftarrow U$, where $\mathrm{row}_i(U)=u_i$
      \State $W_t\leftarrow W_{t-1}-\eta_{\mathrm{tie},t}O_t^{W}$
    \Statex\vspace{-0.1cm}
    \ElsIf{$W$ is a one-dimensional auxiliary tensor}
    \Statex\vspace{-0.2cm}
      % \Comment{vector RMS auxiliary LMO: norm gains, QK-norm gains, biases}
      \State $B_t^{W}\leftarrow \mu B_{t-1}^{W}+G_t^{W}$
      \State $O_t^{W}\leftarrow
      B_t^{W}/\max\{\operatorname{RMS}(B_t^{W}),\varepsilon\}$
      \State $W_t\leftarrow W_{t-1}-\eta_{\mathrm{vec},t}O_t^{W}$
    \Statex\vspace{-0.1cm}
    \Else
      \State \textbf{error:} every parameter must be assigned an\\ \hspace{4em} explicit role
      \Statex\vspace{-0.1cm}
    \EndIf
    \Statex\vspace{-0.1cm}

  \EndFor
\EndFor
\State \Return $\theta_T$
\end{algorithmic}

% \vspace{0.35em}
\end{minipage}%
}
% \vspace{-0.5em}
\end{wrapfigure}
% \vspace{0.5em}
In our experiments $\eta_{\mathrm M}^{\max}=0.02$ and
$\rho_{\mathrm{out}}/\rho_{\mathrm{hid}}=3000/50=60$, giving
$\eta_{\mathrm{tie},t}=1.2\,a_ts/d$; our main setting is $c=3$, $s=0.5$, which
for all evaluated widths ($d>9$) lies strictly between Sign and row-RMS.

\paragraph{When does sharing require a new geometry?}
Not always. If one dense matrix is reused at $L$ depths under the
\emph{same} spectral constraint, the intersection of identical balls
collapses back to that ball, so the right update is the ordinary spectral LMO $D_{\|\cdot\|_{\mathrm{op}}}$ applied once to the accumulated gradient, $D_{\|\cdot\|_{\mathrm{op}}}(\sum_\ell G_\ell)$. 
% the ordinary spectral LMO applied once to the accumulated gradient, $D_{\mathrm{sp}}(\sum_\ell G_\ell)$ --- not a per-occurrence intersection. 
The tied vocabulary table differs because its lookup and classifier aliases impose genuinely heterogeneous row-wise and entrywise constraints whose intersection does not collapse: in a four-depth shared-MLP diagnostic, combining per-depth Muon directions after matching spectral radius retains only $74.1\%$ (up-projection) and $83.9\%$ (down-projection) of the linear objective  (Appendix~\ref{app:homogeneous-dense-sharing}).
% retains only $74.1\%$ and $83.9\%$ of the linear objective attained by applying Muon once to the accumulated up- and down-projection gradients (Appendix~\ref{app:homogeneous-dense-sharing}).

% \paragraph{When does sharing require a new geometry?}
% Not always. If one dense matrix is reused at $L$ depths under the
% \emph{same} spectral constraint, the intersection of identical balls
% collapses back to that ball, so the right update is the ordinary spectral LMO $D_{\|\cdot\|_{\mathrm{op}}}$ applied once to the accumulated gradient, $D_{\|\cdot\|_{\mathrm{op}}}(\sum_\ell G_\ell)$. 
% % the ordinary spectral LMO applied once to the accumulated gradient, $D_{\mathrm{sp}}(\sum_\ell G_\ell)$ --- not a per-occurrence intersection. 
% The tied vocabulary table differs because its lookup and classifier aliases impose genuinely heterogeneous row-wise and entrywise constraints whose intersection does not collapse: in a four-depth shared-MLP diagnostic, combining per-depth Muon directions after matching spectral radius retains only $74.1\%$ (up-projection) and $83.9\%$ (down-projection) of the linear objective  (Appendix~\ref{app:homogeneous-dense-sharing}).
\vspace{13pt}
\subsection{Full optimizer}
\label{subsec:algo}
% \vspace{-5pt}
Algorithm~\ref{alg:afmuon} combines these branches into a single AdamW-free optimizer, using the oracle inputs of \S\ref{subsec:lmo}: the Nesterov construction for hidden matrices, and the unnormalized buffer $B_t=\mu B_{t-1}+G_t$ for the tied table and auxiliary tensors. Decoupled weight decay applies only to hidden matrices; the tied table and auxiliary tensors receive none. The row oracle $\mathcal{L}_c$ uses closed-form fast
paths at $c=1$ and $c\ge\sqrt d$; for $1<c<\sqrt d$ we solve
\eqref{eq:tau-general} by per-row bisection in FP32, in row-preserving chunks so that chunking never changes the computed update.
The algorithm is stated for tied-head RoPE decoders, our headline setting; untied heads and learned positional tables are not part of the tied-table claim, and in GPT-style diagnostics with learned positional embeddings we keep WPE on the fixed baseline Sign rule rather than claim a new positional-table optimizer.\\
\\
AF-Muon differs from Hybrid Muon in two ways: no parameter class is delegated to AdamW or carries second-moment state, and the tied table is not treated as a generic auxiliary tensor but receives the clipped intersection LMO above, motivated by its dual input-lookup/output-softmax role.
% \textcolor{portlandorange}{TD: If there was free space, it might be good to add the convergence theorem}
% Convergence guarantees for AF-Muon are established in Appendix~\ref{app:sec-convergenceAFMuon}.
While we establish exact oracle properties and conditional convergence bounds for idealized variants of AF-Muon in Appendix~\ref{app:sec-convergenceAFMuon}, we emphasize that the practical implementation relies on momentum and approximate oracle maps (including finite-step Newton--Schulz and numerical bisection). Consequently, we make no claim of unconditional convergence for this stochastic setting.
% \vspace{-20pt}
\paragraph{Relationship to prior norm-based optimizers. }
AF-Muon remains within the norm-based/LMO optimizer family
\citep{large2024scalable,jordan2024muon,pethick2025scion,riabinin2025gluon}.
We claim no novelty for Muon's spectral update or for the existence of Sign or row-RMS LMOs. Our contribution is the tied-table geometry: when input and output embeddings are tied, the stored tensor is aliased across two roles, and we propose the clipped row-normalizer family \eqref{eq:clipped-row-lmo}, contains the standard endpoints as special cases but exposes finite interior updates, evaluated in
Section~\ref{sec:experiments}.

% Full optimizer summary version
% \begin{algorithm}[t]
% \caption{AF-Muon (AdamW-free)}
% \label{alg:afmuon}
% \begin{algorithmic}[1]
% \Require Muon LR $\eta_{\mathrm M}$, vector LR $\eta_{\mathrm{vec}}$, momentum
% $\mu$, matrix decay $\lambda_{\mathrm M}$, cap $c$, scale $s$, radii
% $\rho_{\mathrm{out}},\rho_{\mathrm{hid}}$, $\varepsilon$
% \For{$t=1,\dots,T$; each unique parameter $W$}
%   \If{$W$ is a hidden matrix}
%     \State $M_t\leftarrow\mu M_{t-1}+(1-\mu)G_t$;\ \ $Z_t\leftarrow(1-\mu)G_t+\mu M_t$
%     \State $W\leftarrow(1-\eta_{\mathrm M}\lambda_{\mathrm M})W-\eta_{\mathrm M}\sqrt{\max(1,m/n)}\,\mathrm{NS}(Z_t)$
%   \ElsIf{$W=E$ is the tied table}
%     \State $B_t\leftarrow\mu B_{t-1}+G_t$;\quad
%            $\operatorname{row}_i(O)\leftarrow\mathcal{L}_c(\operatorname{row}_i(B_t))$
%     \State $E\leftarrow E-\eta_{\mathrm{tie},t}\,O$ \Comment{$\eta_{\mathrm{tie},t}$ from Eq.~\eqref{eq:tielr}}
%   \Else \Comment{scalar / one-dimensional}
%     \State $B_t\leftarrow\mu B_{t-1}+G_t$;\quad
%            $W\leftarrow W-\eta_{\mathrm{vec}}\,B_t/\max\{\operatorname{RMS}(B_t),\varepsilon\}$
%   \EndIf
% \EndFor
% \end{algorithmic}
% \end{algorithm}

% \newpage
\section{Experiments}
\label{sec:experiments}
% \vspace{-7pt}
\subsection{Experimental setup}
\label{subsec:exp-setup}

We evaluate AF-Muon on nine tied-token settings: four decoder-only language models on FineWeb, two shared-vocabulary encoder--decoder configurations, sparse NanoGPT-MoE, ImageGPT-style ImageNet-32 token modeling, and protein language modeling on UniRef50. Unless otherwise stated, each optimizer comparison uses three paired seeds with the same model, tokenizer or token cache, data order, token budget, batching, precision, evaluation shard, and matrix-update protocol. All Muon-family runs use Muon's spectral update for hidden matrices with matrix weight decay $0.1$, FP32 trainable parameters and optimizer state, BF16 autocast,
and gradient clipping at norm $1.0$. Hybrid Muon keeps embeddings, tied tables, normalization parameters, and biases on AdamW, with auxiliary weight decay $0.01$ where used. SCION-style Sign replaces the tied table by the Sign endpoint $(c,s)=(1,1)$. AF-Muon uses the tied-table clipped LMO from Section~\ref{subsec:tied-lmo} with $(c,s)=(3,0.5)$, peak tied-table learning rate $\eta_{\mathrm{tie}}^{\max}=1.2s/d$, and scheduled rate
$\eta_{\mathrm{tie},t}=a_t\eta_{\mathrm{tie}}^{\max}$. For the seven
two-role tied-table settings, the output-to-hidden radius ratio is
$\rho_{\mathrm{out}}/\rho_{\mathrm{hid}}=3000/50=60$~\citep{pethick2025scion}, matching the SCION-style output-layer scaling used for Sign relative to Spectral updates. The T5Gemma2-style experiment uses the main $\rho_{\mathrm{out}}/\rho_{\mathrm{hid}}=60$
setting but for the small fully shared T5-style table, which receives encoder-input,
decoder-input, and output-classifier gradients, we additionally sweep this ratio
and use $\rho_{\mathrm{out}}/\rho_{\mathrm{hid}}=1$; details are reported in
Appendix~\ref{app:t5test_shared_vocab}.
\vspace{-5pt}

\begin{table*}[t]
\centering
\caption{
Main optimizer comparison across tied-token models and domains. Results are
mean $\pm$ SD over three paired seeds. All rows compare matched tied-table
optimizer arms under the same model, data, token budget, precision, batching,
and matrix-update protocol; detailed settings and per-seed results are reported
in Appendix~\ref{sec:details_of_experiments}.
NanoGPT, SmolLM2, Qwen2.5, Llama, T5, MoE, and protein rows report validation
perplexity; ImageGPT reports validation bits/dim.
}
\label{tab:main_results_all}
\scriptsize
\setlength{\tabcolsep}{12.5pt}
\renewcommand{\arraystretch}{0.92}
\begin{tabular}{llccc}
\toprule
Model / setting & Optimizer & Train loss $\downarrow$ & Val. loss $\downarrow$ & Val. PPL / bpd $\downarrow$ \\
\midrule
\multirow{3}{*}{NanoGPT}
& Hybrid Muon & 3.5150 $\pm$ 0.0028 & 3.5177 $\pm$ 0.0032 & 33.71 $\pm$ 0.11 \\
& SCION-style Sign & 3.5125 $\pm$ 0.0107 & 3.5188 $\pm$ 0.0124 & 33.75 $\pm$ 0.42 \\
& AF-Muon & \textbf{3.4898 $\pm$ 0.0011} & \textbf{3.4919 $\pm$ 0.0019} & \textbf{32.85 $\pm$ 0.06} \\
\midrule
\multirow{3}{*}{SmolLM2-135M}
& Hybrid Muon & 3.5088 $\pm$ 0.0028 & 3.5065 $\pm$ 0.0021 & 33.33 $\pm$ 0.07 \\
& SCION-style Sign & 3.4727 $\pm$ 0.0069 & 3.4706 $\pm$ 0.0040 & 32.16 $\pm$ 0.13 \\
& AF-Muon & \textbf{3.4453 $\pm$ 0.0006} & \textbf{3.4446 $\pm$ 0.0036} & \textbf{31.33 $\pm$ 0.11} \\
\midrule
\multirow{3}{*}{Qwen2.5-0.5B}
& Hybrid Muon & 3.6269 $\pm$ 0.0074 & 3.6393 $\pm$ 0.0039 & 38.06 $\pm$ 0.15 \\
& SCION-style Sign & 3.5984 $\pm$ 0.0046 & 3.6109 $\pm$ 0.0031 & 37.00 $\pm$ 0.11 \\
& AF-Muon & \textbf{3.5713 $\pm$ 0.0052} & \textbf{3.5831 $\pm$ 0.0026} & \textbf{35.99 $\pm$ 0.09} \\
\midrule
\multirow{3}{*}{Llama-3.2-1B}
& Hybrid Muon & 3.4932 $\pm$ 0.0027 & 3.5388 $\pm$ 0.0035 & 34.42 $\pm$ 0.12 \\
& SCION-style Sign & 3.4771 $\pm$ 0.0033 & 3.5225 $\pm$ 0.0075 & 33.87 $\pm$ 0.25 \\
& AF-Muon & \textbf{3.4594 $\pm$ 0.0027} & \textbf{3.5056 $\pm$ 0.0048} & \textbf{33.30 $\pm$ 0.16} \\
\midrule
\multirow{3}{*}{Small T5-style}
& Hybrid Muon
& 5.3124 $\pm$ 0.0050
& 5.3374 $\pm$ 0.0102
& 207.98 $\pm$ 2.12 \\
& SCION-style Sign
& 5.1477 $\pm$ 0.0080
& 5.1763 $\pm$ 0.0150
& 177.03 $\pm$ 2.66 \\
& AF-Muon
& \textbf{5.1476 $\pm$ 0.0139}
& \textbf{5.1708 $\pm$ 0.0034}
& \textbf{176.06 $\pm$ 0.61} \\
\midrule
\multirow{3}{*}{T5Gemma2-style}
& Hybrid Muon
& 3.6409 $\pm$ 0.0044
& 3.6548 $\pm$ 0.0041
& 38.66 $\pm$ 0.16 \\
& SCION-style Sign
& 3.5698 $\pm$ 0.0011
& 3.5859 $\pm$ 0.0017
& 36.09 $\pm$ 0.06 \\
& AF-Muon
& \textbf{3.5573 $\pm$ 0.0010}
& \textbf{3.5735 $\pm$ 0.0010}
& \textbf{35.64 $\pm$ 0.04} \\
\midrule
\multirow{3}{*}{NanoGPT-MoE}
& Hybrid Muon & 3.4873 $\pm$ 0.0031 & 3.4902 $\pm$ 0.0019 & 32.79 $\pm$ 0.06 \\
& SCION-style Sign & 3.5106 $\pm$ 0.0033 & 3.5129 $\pm$ 0.0025 & 33.55 $\pm$ 0.08 \\
& AF-Muon & \textbf{3.4662 $\pm$ 0.0011} & \textbf{3.4677 $\pm$ 0.0007} & \textbf{32.06 $\pm$ 0.02} \\
\midrule
\multirow{3}{*}{ImageGPT-style}
& Hybrid Muon & 2.3236 $\pm$ 0.0067 & 2.3346 $\pm$ 0.0049 & 1.1227 $\pm$ 0.0024 \\
& SCION-style Sign & 2.3568 $\pm$ 0.0049 & 2.3712 $\pm$ 0.0055 & 1.1403 $\pm$ 0.0027 \\
& AF-Muon & \textbf{2.3208 $\pm$ 0.0101} & \textbf{2.3243 $\pm$ 0.0055} & \textbf{1.1177 $\pm$ 0.0026} \\
\midrule
\multirow{4}{*}{Protein LM}
& AdamW & 7.951 $\pm$ 0.023 & 7.925 $\pm$ 0.091 & 2772.6 $\pm$ 258.2 \\
& Hybrid Muon & 7.478 $\pm$ 0.013 & 6.874 $\pm$ 0.008 & 967.0 $\pm$ 7.9 \\
& SCION-style Sign & 7.464 $\pm$ 0.015 & 6.816 $\pm$ 0.012 & 912.7 $\pm$ 11.3 \\
& AF-Muon & \textbf{7.426 $\pm$ 0.019} & \textbf{6.760 $\pm$ 0.019} & \textbf{862.3 $\pm$ 16.5} \\
\bottomrule
\end{tabular}
\vspace{-0.8em}
\end{table*}
\subsection{Main decoder-only language modeling results}
\label{subsec:main-results}
Table~\ref{tab:main_results_all} compares Hybrid Muon with
the best current AF-Muon configuration on each model. Across all settings, AF-Muon achieves the best mean validation loss and PPL/BPD. Paired three-seed tests against the strongest non-AF baseline are reported with the per-seed results in Appendix~\ref{sec:details_of_experiments}.

\textbf{NanoGPT.}
NanoGPT provides the cleanest controlled decoder-only test of the aliased sparse--dense tied table. AF-Muon obtains the best validation loss and perplexity across all three paired seeds, showing that replacing the auxiliary AdamW treatment of the shared table with the finite-cap LMO improves the matched Muon matrix-update protocol. long horizon curves and details are provided in Appendix~\ref{app:nanogpt_tied_vocab} Ablations over gradient source, batch size, auxiliary learning rate, vocabulary size, model width, and cap/scale choices are reported in Appendix~\ref{sec:ablation}.

\textbf{SmolLM2-135M.}
SmolLM2-135M tests the same tied-table update in a pretrained-family
decoder-only architecture at larger width and depth. At 2.49B training tokens, AF-Muon achieves the best validation loss and perplexity across all paired seeds, showing that the finite-cap tied-table update remains useful beyond the controlled NanoGPT architecture. Details are reported in Appendix~\ref{app:smollm2_tied_vocab}.

% \textbf{Qwen2.5-0.5B.} Qwen2.5-0.5B tests the tied-table update in a larger decoder-only architecture with a substantially larger shared vocabulary table. At the near-1B evaluation point, AF-Muon improves over the strongest non-AF baseline, SCION-style Sign, by $0.0278\pm0.0022$ paired validation-loss points; the gain over Hybrid Muon is $0.0562\pm0.0016$. The improvement is positive for all three paired seeds. We also continue seed 43 in a long-horizon run, with stability results reported in
% Appendix~\ref{app:qwen25_0p5b_long_horizon}.

\textbf{Qwen2.5-0.5B.}
Qwen2.5-0.5B tests the tied-table update in a larger decoder-only architecture with a substantially larger shared vocabulary table. At the near-1B evaluation point, AF-Muon achieves the best validation loss and perplexity across all paired seeds. A seed-43 long-horizon continuation shows the same stable ordering later in training, with details reported in Appendix~\ref{app:qwen25_0p5b_tied_vocab}, and ~\ref{app:qwen25_0p5b_long_horizon}.

% \begin{table}[ht]
% \centering
% \caption{
% Qwen2.5-0.5B decoder-only language-model training on FineWeb across three
% paired seeds. We report mean $\pm$ SD over seeds at the last evaluation before
% the 1B-token budget, corresponding to 0.918B tokens. More details are given in
% Appendix~\ref{app:qwen25_0p5b_tied_vocab}.
% }
% \label{tab:qwen25_0p5b_tied_vocab}
% \begin{tabular}{lccc}
% \toprule
% Optimizer & Train loss $\downarrow$ & Validation loss $\downarrow$ & Validation PPL $\downarrow$ \\
% \midrule
% Hybrid Muon & 3.6269 $\pm$ 0.0074 & 3.6393 $\pm$ 0.0039 & 38.06 $\pm$ 0.15 \\
% SCION-style Sign & 3.5984 $\pm$ 0.0046 & 3.6109 $\pm$ 0.0031 & 37.00 $\pm$ 0.11 \\
% AF-Muon & \textbf{3.5713 $\pm$ 0.0052} & \textbf{3.5831 $\pm$ 0.0026} & \textbf{35.99 $\pm$ 0.09} \\
% \bottomrule
% \end{tabular}
% \end{table}
% \begin{figure}[ht]
%    \centering
%    \includegraphics[width=\linewidth]{figs/qwen_3seeds/qwen25_0p5b_1b_curves_tail250m.pdf}
%    \caption{
%    Qwen2.5-0.5B training dynamics on FineWeb. Curves show the mean over three
%    paired seeds, and shaded regions denote $\pm 1$ standard error of the mean.
%    The plot shows the short-horizon region used for the main three-seed
%    comparison, omitting the earliest training region for readability. All
%    methods use tied input/output embeddings, FP32 trainable parameters and
%    optimizer state with BF16 autocast, matrix weight decay $0.1$, and the same
%    matched large-batch constant-learning-rate protocol.
%    }
%    \label{fig:qwen25_0p5b_tied_vocab_curves}
% \end{figure}
\begin{figure*}[!t]
\centering
\makebox[\linewidth][c]{%
\includegraphics[
  width=1.0\linewidth,
  trim=0 22 0 8,
  clip
]{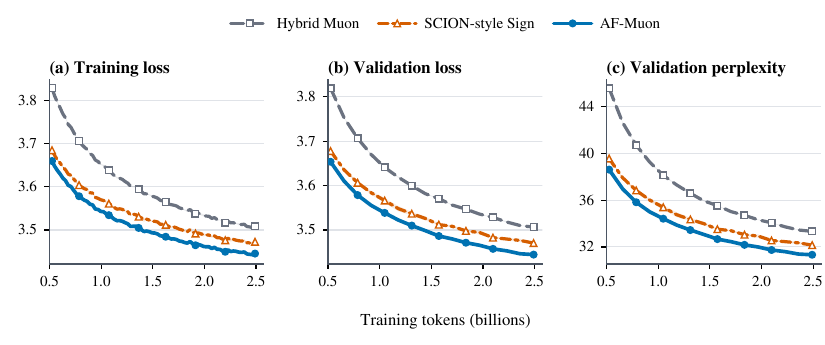}
}

\vspace{-0.6em}

\makebox[\linewidth][c]{%
\includegraphics[
  width=1.0\linewidth,
  trim=0 7 0 38,
  clip
]{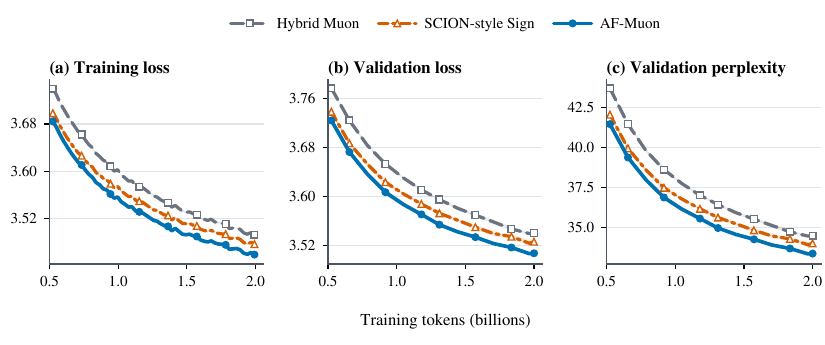}
}

\vspace{-7pt}

\caption{
Decoder-only tied-vocabulary training dynamics on FineWeb for SmolLM2-135M (top) and Llama-3.2-1B (bottom). Each row reports metrics under the matched protocol in Section~\ref{subsec:exp-setup}. Curves show the mean over three paired seeds. Llama model long horizon curves reported in Appendix~\ref{app:llama32_1b_long_horizon}}
\label{fig:decoder_only_tied_vocab_curves}
\end{figure*}

%-----llama-------
% \newpage
% \textbf{Llama-3.2-1B.}
% Llama-3.2-1B is the largest decoder-only tied-vocabulary setting in our main comparison. At the final available evaluation near the 2B-token budget, AF-Muon improves over the strongest non-AF baseline, SCION-style Sign, by $0.0168\pm0.0039$ paired validation-loss points; the gain over Hybrid Muon is $0.0331\pm0.0036$. The improvement is positive for all three paired seeds. We also continue seed 43 to 9.98B tokens as a long-horizon stability check, with results reported in Appendix~\ref{app:llama32_1b_tied_vocab}, and~\ref{app:llama32_1b_long_horizon}.

\textbf{Llama-3.2-1B.}
Llama-3.2-1B is the largest decoder-only tied-vocabulary setting in our main comparison. Near the 2B-token budget, AF-Muon again achieves the best validation loss and perplexity across all paired seeds. A seed-43 continuation to 9.98B tokens provides a longer-horizon stability check and preserves the same ordering;
full results are reported in Appendix~\ref{app:llama32_1b_tied_vocab}, and~\ref{app:llama32_1b_long_horizon}.
\vspace{-5pt}
\subsection{Generalization beyond dense decoder-only language modeling}
\textbf{Encoder--decoder shared vocabulary.}
Motivated by T5Gemma~2~\citep{t5gemma2025}, we use
two experiments to ask whether AF-Muon transfers to three-way shared
vocabularies and whether it remains effective in a larger configuration. In the
small T5-style control (275M tokens, 49k shared vocabulary, AF-Muon and SCION-style Sign both improve over tied Muon. In T5Gemma2-style (786M parameters, 262k shared vocabulary, 750M target tokens), AF-Muon gives the best validation loss in the evaluated seeds. Together, these results show that AdamW-free shared-table optimization transfers to three-way tying. Full results are in Appendix~\ref{app:t5test_shared_vocab}.

\textbf{Sparse NanoGPT-MoE.}
following the MoE scaling direction of recent models such as DeepSeek-V3 and OLMoE~\citep{deepseekv3,olmoe}, and Muon-family optimizers have been used at frontier MoE scale in Kimi K2~\citep{kimiteam2025kimik2}. We replace NanoGPT MLPs with four top-1 routed experts while keeping the vocabulary table tied. 
% Thus routing changes the hidden feed-forward computation, but the tied vocabulary table remains a global token interface. 
At 2.62B tokens, AF-Muon achieves the best validation loss and perplexity; Sign underperforms Muon, indicating that the finite cap, not merely removing AdamW, accounts for the improvement.  Full details are in (Appendix~\ref{app:moe_nanogpt_tied_vocab}).

\begin{figure}[t]
    \centering
    \includegraphics[
        width=\linewidth,
        trim={0 6pt 0 4pt},
        clip
    ]{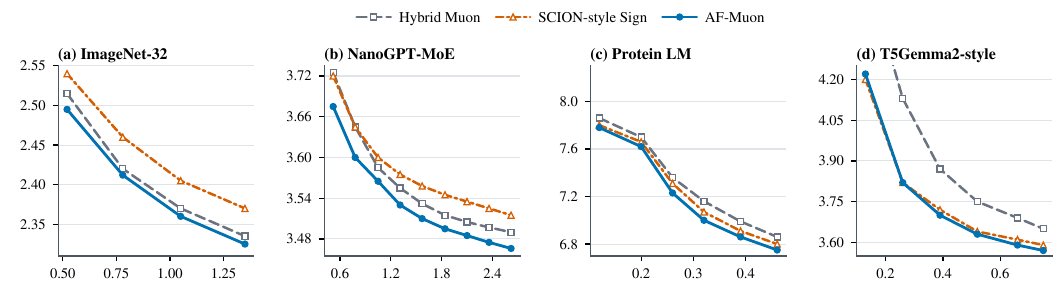}
    \vspace{-5pt}
    \caption{
Validation-loss curves beyond decoder-only language modeling. Each panel shows one tied-token setting with its own training-token scale. AF-Muon reaches the lowest final validation loss in all four settings; full training-loss and perplexity curves are in the related appendix.
}
\vspace{-10pt}
%     \caption{
% Validation-loss curves beyond decoder-only language modeling. Each panel shows one tied-token setting, with the x-axis in billions of training tokens for that experiment. AF-Muon gives the lowest final validation loss in all four settings.
% }
    \label{fig:nondecoder_val_loss_overview}
\end{figure}

\textbf{ImageGPT-style ImageNet-32.}
% Autoregressive modeling of discrete visual tokens is a competitive approach to image generation, from LlamaGen's decoder-only next-token transformer to VAR's next-scale prediction~\citep{sun2024llamagen,tian2024var}. Unified image--text models also tokenize both modalities autoregressively~\citep{chameleon2024,emu3}.
Autoregressive discrete-token modeling underlies modern image and image--text generators, including LlamaGen, VAR, Chameleon, and Emu3~\citep{sun2024llamagen,tian2024var,chameleon2024,emu3}. Motivated by this discrete-token modeling paradigm, we test an ImageGPT-style causal transformer on ImageNet-32 with an intentionally tied input/output color-token table. Images are represented as 1024 RGB554 tokens from a 16k vocabulary, so the shared table again receives sparse lookup and dense classifier gradients. At 1.31B image tokens, AF-Muon achieves the best validation loss and bits/dim, and a fixed top-half completion evaluation gives the same ordering. Full settings and results are reported in Appendix~\ref{app:tied_imagegpt_imagenet32}.

% \textbf{Protein language modeling.}
% Autoregressive protein language models such as ProtGPT2 and ProGen2 model protein sequences with GPT-style next-token objectives
% \citep{ferruz2022protgpt2,nijkamp2022progen2}, so the aliased sparse-lookup/dense-classifier tied-table structure can be tested on a non-linguistic token distribution. Following the GPT-2-based ProtGPT2, we adopt its BPE tokenizer and tie the token embedding / LM head, training a randomly initialized 363M-parameter RoPE decoder on the UniRef50 2021\_04 release~\citep{suzek2015uniref}. At 458.8M training tokens, AF-Muon improves over the strongest non-AF baseline, SCION-style Sign, by $0.0569\pm0.0152$ paired eval-loss points, with a larger gain over Hybrid Muon, and reduces validation perplexity from $912.7$ (Sign) to $862.3$. All Muon-family arms substantially outperform tuned AdamW in this setting, while AF-Muon is best within the Muon-family comparison. Full details, per-seed results, and curves are in Appendix~\ref{app:protein_lm_per_seed}.

\textbf{Protein language modeling.}
Autoregressive protein language models such as ProtGPT2 and ProGen2 model protein sequences with GPT-style next-token objectives
\citep{ferruz2022protgpt2,nijkamp2022progen2}, letting us test the aliased sparse-lookup/dense-classifier tied-table structure on a non-linguistic token distribution. Following ProtGPT2, we use its BPE tokenizer and train a randomly initialized 363M-parameter RoPE decoder with tied token embedding / LM head on the UniRef50 2021\_04 release~\citep{suzek2015uniref}. At 458.8M training
tokens, AF-Muon achieves the best metrics, and all Muon-family arms substantially outperform tuned AdamW. Full details are in Appendix~\ref{app:protein_lm_per_seed}.

\vspace{-5pt}
\subsection{Ablations and robustness}
\label{subsec:ablations}
\vspace{-5pt}
% \textbf{Causal attribution and scope.}
% A $2\times2$ factorial ablation independently replaces the tied-table AdamW update with the finite-cap LMO and the one-dimensional AdamW update with the RMS-LMO. Relative to the AdamW/AdamW auxiliary baseline, changing only the tied table improves validation loss by $0.0133$, changing only the one-dimensional
% parameters improves it by $0.0075$, and changing both improves it by $0.0413$. Thus, the tied-table rule makes an independent contribution, while the non-additive joint gain also reveals an interaction between the two role-specific updates (Appendix~\ref{app:nanogpt_factorial}). A complementary
% gradient-source intervention shows why the tied table requires a mixed-role treatment: the dense output component is dominant, but adding the sparse input component improves validation loss by $0.0239$, and the two components have cosine $0.003$ (Appendix~\ref{app:nanogpt_tied_source_ablation}). Conversely, a
% negative boundary control shows that sharing alone is insufficient: when one dense matrix is reused at multiple depths, every use induces the same spectral geometry, so their intersection collapses to the original spectral ball and Muon should be applied once to the accumulated gradient (Appendix~\ref{app:homogeneous-dense-sharing}).

\textbf{Causal attribution and scope.}
A $2\times2$ factorial ablation independently replaces the tied-table AdamW update with the finite-cap LMO and the one-dimensional AdamW update with the RMS-LMO  (Figure~\ref{fig:app_nanogpt_factorial_curves};Appendix~\ref{app:nanogpt_factorial}). Both substitutions contribute, and the joint improvement is larger than
the sum of the isolated effects, indicating an interaction between the two role-specific updates. A complementary
gradient-source intervention shows why the tied table requires mixed-role treatment: the dense output component dominates, but adding the sparse input component still improves validation, and the two components are nearly orthogonal (cosine $0.003$) (Appendix~\ref{app:nanogpt_tied_source_ablation}). Conversely, a
negative boundary control shows that sharing alone is insufficient: when one dense matrix is reused at multiple depths, every use induces the same spectral geometry, so their intersection collapses to the original spectral ball and Muon
should be applied once to the accumulated gradient (Appendix~\ref{app:homogeneous-dense-sharing}).

\textbf{Geometry versus update scale.}
At fixed $c=3$, every tested scale $s\in[0.125,0.5]$ achieves lower validation loss than the standard SCION-style Sign arm in the single-seed sensitivity sweep, with a broad favorable region around intermediate scales (Appendix~\ref{app:nanogpt_tied_step_scale}). A joint sweep over $c\in\{1,2,3,4,6,10,\infty\}$ and $s\in\{0.5,1\}$ further shows that performance is more sensitive to $s$ than to the precise cap: at $s=0.5$, validation losses span only $0.0069$ across the complete cap range (Appendix~\ref{app:nanogpt_cap_scale_sensitivity}). We therefore do not present $c=3$ as a uniquely optimal post-hoc choice. Rather, it realizes a nondegenerate interior geometry: in this audit it caps only $0.29\%$ of coordinates while affecting $90.3\%$ of rows and attaining $1.253\times$ the same-state Sign linear objective. The sweep also recovers the expected Sign and row-RMS endpoint behavior at $c=1$ and $c=\infty$, respectively.

\textbf{Robustness across scales and protocols.}
The AF-Muon advantage over Sign persists across genuinely re-tokenized 50k, 100k, and 150k vocabularies, with within-tokenizer mean validation-loss improvements of $0.036$, $0.045$, and $0.052$
(Appendix~\ref{app:nanogpt_retokenized_vocab}). It also persists at real hidden widths $d\in\{768,1024,1536\}$, where the three-seed paired improvements are $0.036$, $0.040$, and $0.032$
(Appendix~\ref{app:nanogpt_real_width}). In single-seed protocol sweeps, AF-Muon obtains the lowest observed validation loss at every tested batch size from 32k to 524k tokens per update
(Figure~\ref{fig:app_nanogpt_batch_sensitivity};Appendix~\ref{app:nanogpt_batch_sensitivity}) and at every point in a $16\times$ auxiliary-branch learning-rate grid
(Appendix~\ref{app:nanogpt_auxlr_sensitivity}). We treat these latter sweeps as sensitivity evidence rather than multi-seed uncertainty estimates.

\begin{figure}[t]
    \centering
    \includegraphics[
        width=\linewidth,
        trim={0.0in 0.40in 0.00in 0.05in},
        clip
    ]{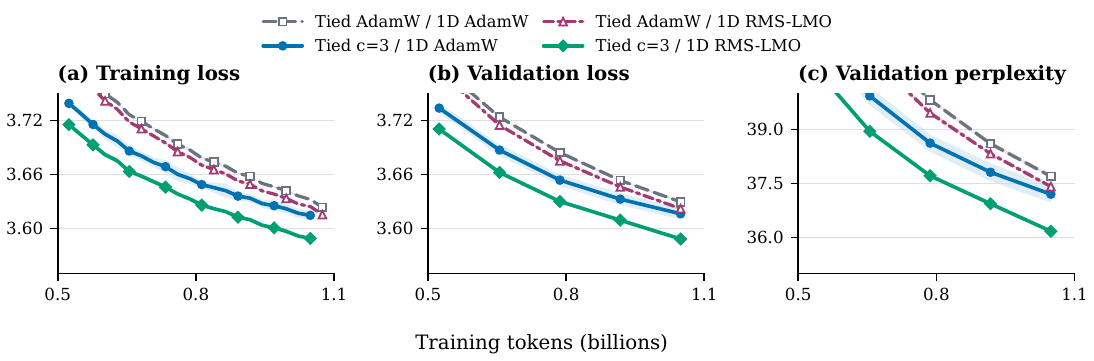}
    \caption{
NanoGPT $2\times2$ factorial ablation over training tokens (billions) for three paired seeds.
}
    \label{fig:app_nanogpt_factorial_curves}
\end{figure}

\begin{figure}[h]
    \centering
    \includegraphics[
  width=\linewidth,
  trim=0.0in 0.38in 0.0in 0.1in,
  clip
]{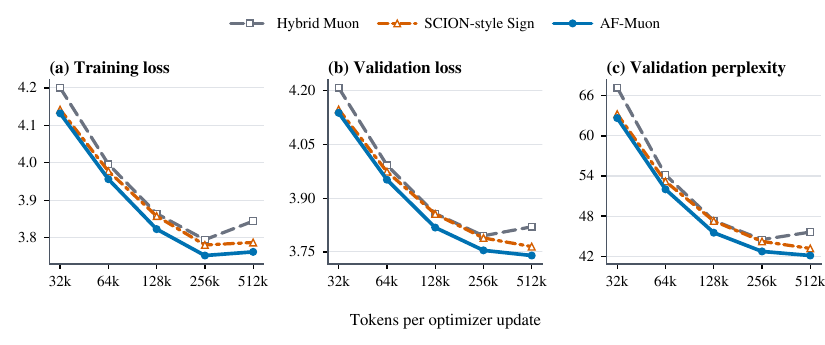}
    \caption{
    NanoGPT batch-size sensitivity at a matched 458.8M-token comparison. The x-axis varies the number of tokens per optimizer update. 
    AF-Muon leads across all tested batch sizes.
    }
    \label{fig:app_nanogpt_batch_sensitivity}
\end{figure}

\vspace{-15pt}
\section{Limitations}
\label{sec:limitations}
\vspace{-7pt}
% AF-Muon is designed for architectures with a genuinely aliased vocabulary table, where lookup and classifier uses induce different gradient structures. The finite-cap construction does not directly prescribe updates for untied input embeddings and output heads, learned positional embeddings, or homogeneous dense parameters reused across depth; such parameters require their own role-appropriate geometries. Weight tying also does not uniquely imply the intersection of max-row-$\ell_2$ and max-entry constraints: we prove that our update is the exact support-aware LMO for this proposed feasible set, not that the set is optimal among all possible role-aware geometries. Our default $c=3,s=0.5$ is a robust interior configuration rather than a universally optimal choice. The sensitivity experiments show broad favorable regions, but tied-step calibration can depend on the sharing topology, as in the fully shared encoder--decoder setting. The row-wise oracle uses chunked FP32 bisection, which reduces persistent optimizer state but does not guarantee lower wall-clock time on every hardware or kernel stack. Finally, our results establish robustness under matched fixed-token protocols; they do not exhaustively tune every optimizer family or test frontier-scale pretraining.
Weight tying does not uniquely imply the intersection of max-row-$\ell_2$ and max-entry constraints: we prove that our update is the exact support-aware LMO for this proposed feasible set, not that the set is optimal among all possible role-aware geometries. The default $c=3$, $s=0.5$ is a robust configuration rather than a universally optimal choice. Our results establish robustness under matched fixed-token protocols; they do not tune every optimizer family or test frontier-scale pretraining.
\vspace{-7pt}
\section{Conclusion}
\label{sec:conclusion}
\vspace{-7pt}

We introduced AF-Muon, an AdamW-free extension of Muon that treats the tied vocabulary table as an \emph{aliased} parameter block---one stored matrix receiving sparse lookup and dense classifier gradients---rather than a generic auxiliary tensor. It keeps Muon's spectral update for hidden matrices, RMS
updates for one-dimensional parameters, and assigns the tied table the exact support-aware LMO of an intersection of max-row-$\ell_2$ and max-entry constraints, removing all second-moment state and cutting optimizer memory by about 20\% at roughly 1\% training-time overhead. Across nine tied-token settings spanning 124M--1B-parameter decoder-only, fully shared encoder--decoder, image-token, protein, and sparse-MoE, trained on text, image, and protein data up to $\sim\!10\text{B}$ tokens, AF-Muon improves fixed-budget validation over Muon and the SCION-style Sign endpoint under matched three-seed protocols, with ablations and sensitivity studies. These results support treating functional aliasing, not stored tensor shape alone, as a first-class consideration in optimizer design.

% \newpage
\subsection*{AI use statement}

We used generative AI tools for manuscript polishing, LaTeX debugging, code debugging, and internal consistency checks. All experiments, results, claims, and final artifacts were reviewed and verified by the authors, who take full responsibility for the content of the paper.

\subsection*{Ethics statement}

This work studies optimizer design for tied-token models and does not involve human subjects, private user data, or the release of a new dataset. Experiments use public or standard research datasets and model families for language, image-token, and protein-sequence modeling. The main ethical considerations are computational cost, reproducibility, and responsible reporting of empirical claims. We therefore report matched-seed comparisons, ablations, implementation details, and limitations, and we avoid claims about downstream deployment or societal impact beyond the optimizer behavior studied here.

\subsection*{Reproducibility statement}

We support reproducibility by reporting the full optimizer specification in Algorithm~\ref{alg:afmuon}, the tied-table geometry and derivations in the appendix, and matched experimental settings in Section~\ref{subsec:exp-setup}. The appendix provides per-seed results, ablations, diagnostics, overhead measurements, and additional training curves for the model families and domains studied in the main paper. We provide source code, configuration files, plotting scripts, and reproduction instructions at
\url{https://github.com/arashlagzian/afmoun}.

% \newpage
\bibliography{iclr2027_conference}
\bibliographystyle{iclr2027_conference}

\newpage
\appendix

\etocdepthtag.toc{appendix}
\etocsettagdepth{main}{none}
\etocsettagdepth{appendix}{subsection}

{
  \hypersetup{linkcolor=smokyblack}
  \tableofcontents
}
% \newpage
\section{Related Work}
\label{sec:related-work}
\paragraph{Hybrid Muon and auxiliary updates.}
Adam and AdamW use coordinate-wise exponential moving averages of first and
second moments and remain strong default optimizers for language-model
pretraining~\citep{kingma2015adam,loshchilov2019decoupled}.  Related methods
modify this trade-off in different ways: Adafactor factorizes matrix second
moments to reduce persistent state~\citep{shazeer2018adafactor}, Shampoo uses
tensor-mode gradient statistics to construct structured
preconditioners~\citep{gupta2018shampoo}, and Lion removes the second moment but
applies a coordinate-wise Sign update to an interpolation of current and past
gradients~\citep{chen2023lion}.  Muon instead maps momentum for selected
two-dimensional hidden weights through a short Newton--Schulz iteration that
approximates the polar factor, i.e., the exact spectral-norm steepest-descent
direction in the converged limit~\citep{jordan2024muon}.  The reference Muon
recipe explicitly leaves input embeddings, output heads, gains, and biases to
AdamW, and subsequent scaling work emphasizes that weight decay and
parameter-class update scales are important at language-model
scale~\citep{liu2025muonscalable}. At frontier scale, Muon underlies trillion-parameter training: Kimi~K2 pretrains a one-trillion-parameter mixture-of-experts model with a Muon-based optimizer and a QK-clipping mechanism that suppresses attention-logit explosion~\citep{kimiteam2025kimik2}, underscoring that the output-side and stability behavior of Muon-style updates remains a live concern at scale. AF-Muon preserves Muon's hidden-matrix
branch but replaces the entire auxiliary AdamW branch with one-buffer,
role-specific updates.

\paragraph{Norm- and LMO-based design.}
The norm-based view treats an update rule as a choice of geometry rather than
as a shape-agnostic transformation of coordinates.  In particular,
\citet{bernstein2024oldnorm} argue that tensors with the same array shape can
require different norms when they implement different functions, such as
linear maps and embedding lookups.  The modular norm recursively constructs an
architecture-level geometry and studies hyperparameter transfer across model
scale~\citep{large2024scalable}, while modular duality assigns operator norms
from layer semantics and derives efficient duality maps for Embed, Linear, and
Conv2D modules~\citep{bernstein2025modular}.  SCION develops stochastic
optimization with norm-ball linear minimization oracles (LMOs), applies them to
unconstrained neural-network training, and studies transferable layer
radii~\citep{pethick2025scion}.  Gluon develops a layerwise LMO framework and
generalized-smoothness model that more closely reflects practical Muon and
SCION implementations~\citep{riabinin2025gluon}.  A unifying view casts Muon, Lion, and related rules as instances of optimistic dual averaging~\citep{pethick2026soda}, and a growing body of analysis establishes convergence for Muon-type methods under non-Euclidean and inexact-oracle assumptions~\citep{convmuon2025,inexactlmo2026}. Recent scaling evidence also finds the output layer to be especially sensitive to its group learning rate and identifies its operator norm as a transferable
quantity~\citep{filatov2025optimal}.  AF-Muon follows this lineage but
separates two choices for the tied table: the cap $c$ selects its normalized LMO geometry,
whereas the radius multiplier $s$ calibrates the magnitude of the completed
direction.

\paragraph{Vocabulary geometry.}
SCION already distinguishes input, hidden, and output layers: for an untied
one-hot input it uses a normalized-column oracle (row normalization in our
$V\times d$ table orientation) and a Sign output oracle, while under
first/last-layer weight sharing it selects Sign for the shared
layer~\citep{pethick2025scion}.  Thus SCION Sign is a direct endpoint
baseline for our setting.  Concurrent symmetry-based work derives equivariant updates for
separate input embeddings and \emph{untied} LM heads, including row-normalized,
one-sided spectral, hybrid, and centered maps; its reported end-to-end stacks
retain AdamW for scalar and vector blocks~\citep{lau2026symmetry}.  These
results strongly support functional-role-specific vocabulary geometry, but they do not prescribe a
joint oracle for one aliased embedding/head tensor.  We do not claim novelty
for Sign or row-RMS themselves.  Our contribution is the exact LMO of a
proposed intersection of max-row-$\ell_2$ and max-entry balls for the merged
tied-table momentum.  It contains SCION Sign and row-RMS as endpoints and
exposes finite-cap directions between them without adding two separately scaled
updates.

\paragraph{Weight tying and mixed gradients.}
Input--output weight tying was proposed independently by
\citet{press2017using} and \citet{inan2017tying} to reduce the
vocabulary-dependent parameter count and often improve language modeling.  The
update analysis of \citet{press2017using} reveals the asymmetry central to our
problem: the lookup path contributes only to rows corresponding to observed
input tokens, whereas the output-softmax path generally contributes to every
vocabulary row.  Consequently, a tied table evolves more like the output
embedding of an untied model than its input embedding.  Later work further
distinguishes the structures learned by input and output
embeddings~\citep{bertolotti2024tying}.  Most recently, \citet{lopardo2026weight} give causal evidence that output-softmax gradients dominate the tied table early in training, biasing the shared matrix toward the output (unembedding) space, and mitigate this representational bias by rescaling the input gradient. This independently corroborates the sparse-lookup/dense-classifier asymmetry central to our setting; AF-Muon addresses the same imbalance from a complementary direction, changing the LMO geometry used to update the aliased tensor rather than reweighting its merged gradient. Together, these studies motivate treating a tied table as neither a pure lookup table nor a pure output classifier.

Prior work has also addressed conflicts in shared embeddings by changing the
parameterization or gradient flow.  Projected and generalized tying relax hard
equality between input and output
representations~\citep{gulordava2018represent,pappas2018beyond}; in the
different setting of generator/discriminator embedding sharing, DeBERTaV3 disentangles gradients
that otherwise produce ``tug-of-war'' dynamics~\citep{he2023debertav3}.
AF-Muon instead preserves standard causal-LM weight tying, keeps the single
autograd-merged lookup and softmax gradient and one momentum buffer, and changes
only the geometry used to update that aliased tensor.

\paragraph{Muon variants.}
MuonAll pursues an AdamW-free recipe by applying Muon to existing
two-dimensional parameters, including embedding and head matrices, and by
diagonalizing one-dimensional tensors before Newton--Schulz; its experiments
focus on supervised fine-tuning rather than from-scratch
pretraining~\citep{page2025muonall}.  Very recent work on Muon representation
geometry systematically studies how native, nearest-square, skinny, and vector
representations alter the polar update~\citep{chang2026muse}.  Other Muon
variants combine orthogonalization with element-wise or neuron-wise
second-moment adaptation~\citep{si2025adamuon,li2025normuon}.  These approaches
either extend a matrix-orthogonalization template to additional stored shapes
or restore adaptive second-moment statistics.  AF-Muon takes a complementary
route: it reserves spectral geometry for hidden linear maps, assigns RMS
geometry to one-dimensional auxiliary tensors, and introduces a native
finite-cap geometry for the tied vocabulary table, without second-moment state
in any branch.

\paragraph{Memory-efficient optimization.}
A complementary line of work targets optimizer \emph{memory} rather than update
geometry. Adafactor factorizes the second moment to sublinear
state~\citep{shazeer2018adafactor}, Adam-mini replaces per-coordinate second
moments with a single rate per parameter block~\citep{zhang2024adammini}, and
GaLore maintains moments in a low-rank gradient
subspace~\citep{zhao2024galore}; minimalist optimizer designs pretrain with
substantially reduced persistent state~\citep{xu2025minimalist}. Muon-family
methods are already lighter than AdamW because they store one momentum buffer
and no second moment~\citep{jordan2024muon,liu2025muonscalable}. Several recent
variants, however, reintroduce second-moment statistics on the orthogonalized
update to improve per-neuron conditioning~\citep{si2025adamuon,li2025normuon},
trading memory back for adaptivity. AF-Muon moves in the opposite direction: it
eliminates the auxiliary AdamW second moment entirely and keeps a single buffer
per tensor in every branch, so the optimizer is second-moment-free throughout
while still assigning role-specific geometry to hidden matrices, the tied table,
and vector parameters.

\vspace{1.5em}
\noindent\fbox{%
    \parbox{\textwidth}{%
    \begin{itemize}[leftmargin=1.5em]
    \item[\ding{114}] Appendix \ref{app:sec_setting} fixes notation and recalls the standard norm-ball LMOs used as building blocks.
    \item[\ding{114}] Appendix \ref{app:sec_aliaspulllback} develops the alias-pullback framework in general: the pullback norm and its intersection ball
    (Theorem \ref{app:the_aliaspullbacknorm}), its dual as an infimal convolution (Theorem \ref{app:the_alias_aware_dual}), and the intersection form of simultaneous role-wise redundancy (Theorem \ref{app:the_redundancy_intersection}).
    \item[\ding{114}] Appendix \ref{app:sec-hardtiedtables} instantiates it for a hard-tied table with two roles, giving the tied norm (\ref{app:eq_tiehardnorm}). 
    \item[\ding{114}] Appendix \ref{app:sec_exactClippedRow} solves the resulting LMO exactly (Theorem \ref{app:the_exactsupportware}). 
    \item[\ding{114}] Appendix \ref{app:sec_geo-rete-concen} computes the dual norm, the objective-retention ratio, and the concentration bound.
    \item[\ding{114}] Appendix \ref{app:sec_symmetry-nogo} determines the exact symmetry class of the oracle and shows that its departure from full orthogonal equivariance is structural.
    \item[\ding{114}] Appendix \ref{app:sec-convergenceAFMuon} collects the conditional convergence statements and delimits their scope. 
    \item[\ding{114}] Appendix \ref{app:three_way_tying} extends the results to three-way tying.
    \end{itemize}
    }
}

%=============================
% First Section
%=============================
\section{Setting, notation, and standard LMO geometry}
\label{app:sec_setting}
\textit{For Appendices \ref{app:sec_setting} through \ref{app:sec-convergenceAFMuon}, we focus exclusively on decoder-only models; in Appendix \ref{app:three_way_tying}, we demonstrate how our results extend to other architectures.}

We consider a decoder-only language model with stored parameters
\[
\theta=(W_1,\ldots,W_L,E,v_1,\ldots,v_K),
\]
where $W_\ell\in \mathbb{R}^{m_l\times n_l}$ are hidden Matrix weights, $E\in\R^{V\times d}$ is a hard-tied token-embedding/LM-head table, and $v_k$ are scalar or one-dimensional auxiliary parameters such as normalization gains and biases. Here $V$ is the vocabulary size and $d$ is the hidden width. In stored coordinates, the table serves two roles:
\begin{align*}
    &i\longmapsto E^\top e_i\in\R^d \qquad \text{(token lookup)}\\
    &h\longmapsto Eh\in\R^V  \qquad \hspace{0.5em}  \text{(output classification)},
\end{align*}
where $e_i$ is a vocabulary basis vector, and both roles refer to the same parameter object.\\
Hard tying requires a single update for the stored table. AF-Muon chooses its update geometry by imposing both a lookup-side row constraint and an output-side coordinate constraint on that update. These constraints are a modeling choice; weight tying alone does not uniquely determine their norms, radii, or learning rates. Weight tying and the distinction between input and output embeddings predate this construction (\cite{press2017using}; \cite{inan2017tying}).

\subsection{Notations}
All spaces are finite-dimensional and real. We use $\ip{x}{y}=x^\top y$ for vectors and $\ip{A}{B}=\operatorname{tr}(A^T B)=\sum_{ij}A_{ij}B_{ij}$ for matrices. Our norm conventions are
\begin{align*}
\begin{array}{ll}
\norm{A}_{F}=(\sum_{ij}A_{ij}^2)^{1/2},&\text{frobenius norm},\\[3pt]
\norm{A}_{\mathrm{op}}=\sigma_{\max}(A),&\text{spectral/operator norm},\\[3pt]
\norm{A}_{*}=\sum_j\sigma_j(A),&\text{nuclear norm},\\[3pt]
\norm{A}_{\infty}=\max_{ij}|A_{ij}|,&\text{entrywise infinity norm},\\[3pt]
\norm{A}_{1}=\sum_{ij}|A_{ij}|,&\text{entrywise one-norm},\\[3pt]
\RMS(x)=\norm{x}_2/\sqrt p\quad(x\in\R^p),&\text{root-mean-square vector norm}.
\end{array}
\end{align*}
RMS is a scaled Euclidean vector norm, not a matrix operator norm. For $U\in\R^{V\times d}$, write $u_i\in\R^d$ for the transpose of its $i$-th row, so $U_{i,:} = u_i^T$. A row-energy bound $\norm{u_i}_2^2\le d$ is equivalent to $\RMS(u_i)\le1$. For a vector $b$, let $\supp b=\{j:b_j\ne0\}$; throughout, $\sign(0)=0$.

For a general norm $N$, its dual is denoted $N^*$. The notation $N_{\mathrm{tie},c}$ is identical to $\norm{\cdot}_{\mathrm{tie},c}$ in the main text. Subscripts such as $N_{\mathrm{tie},c}^*$ refer to dual norms, whereas $\norm{A}_*$ always denotes the nuclear norm.

We also write $[t]_+=\max\{t,0\}$ for the positive part, $\mathbf 1$ for an all-ones vector, $\partial\psi$ for the convex subdifferential of $\psi$, $\delta_C$ for the indicator of a convex set $C$ (zero on $C$, $+\infty$ elsewhere), $\sigma_C(B)=\sup_{U\in C}\ip{B}{U}$ for its support function, $\sigma_j(A)$ for the $j$-th singular value, and $O(d)$ for the orthogonal group.

\subsection{Dual norms and linear minimization oracles}
For a norm $N$ on a finite-dimensional inner-product space, define
\[
N^*(Z)=\max_{N(U)\le1}\ip{Z}{U},\qquad
D_N(Z)\in\argmax_{N(U)\le1}\ip{Z}{U}.
\]
We call $D_N$ the aligned LMO direction; the actual minimization step is $-\eta D_N(Z)$. Compactness of the unit ball ensures attainment. For $\eta>0$,
\[
\ip{Z}{D_N(Z)}=N^*(Z),\qquad
\ip{Z}{-\eta D_N(Z)}=-\eta N^*(Z)<0\quad(Z\ne0).
\]
This identity concerns the oracle input $Z$. If $Z$ is momentum rather than the current gradient, it does not itself establish descent of the current objective. Also, $D_N(Z)\in\partial N^*(Z)$. Positive rescaling leaves the argmax set unchanged; a scale-independent selection satisfies $D_N(\alpha Z)=D_N(Z)$ for $\alpha>0$. We choose $D_N(0)=0$.

The following are standard norm-ball LMOs (\cite{Tyrrell1970convex} and \cite{boyd2004convex}), not new ingredients of AF-Muon.

\paragraph{1. Spectral norm for hidden matrices.} Let $Z=P_r\Sigma_rQ_r^\top$ be a compact SVD containing exactly the $r=\operatorname{rank}(Z)$ positive singular values. Then
\[
\max_{\norm{U}_{\mathrm{op}}\le1}\ip{Z}{U}=\norm{Z}_*,\qquad
D_{\mathrm{op}}(Z)=P_rQ_r^\top.
\]
For $Z=0$, this compact polar factor is zero. The upper bound follows from spectral/nuclear duality, and equality follows by substitution. Rank-deficient inputs can have other maximizers, including components on null singular subspaces; the compact polar factor fixes the zero action there. A full thin-SVD completion need not make this same selection. Muon's finite Newton--Schulz implementation approximates this ideal direction and requires separate feasibility and accuracy control (Appendix \ref{app:sec-convergenceAFMuon}).

\paragraph{2. RMS norm for vectors.} For $u\in\R^p$, let $N_{\mathrm{RMS}}(u)=\norm{u}_2/\sqrt p$. Cauchy--Schwarz gives
\[
N_{\mathrm{RMS}}^*(b)=\sqrt p\norm{b}_2,\qquad
D_{\mathrm{RMS}}(b)=
\begin{cases}b/\RMS(b),&b\ne0,\\0,&b=0.\end{cases}
\]
This optimizer direction follows from the chosen norm; it is distinct from the RMSNorm architectural operation of \cite{rootmeannormZhang2019}.

 \paragraph{3. Max-entry norm.} For matrices $U$, let $N_\infty(U)=\norm{U}_\infty$. The dual norm is the entrywise $\ell_1$     norm, $$N_\infty^*(B)=\norm{B}_1,$$
    and a corresponding maximizing direction is
    $$D_\infty(B)=\sign(B),$$
    applied entrywise. Each nonzero coordinate is maximized by choosing its sign, while zero coordinates are set to zero by convention. This yields the Sign endpoint.
%For $N_\infty(U)=\norm{U}_\infty$,
% \[
% N_\infty^*(B)=\norm{B}_1,\qquad D_\infty(B)=\sign(B)
% \]
% entrywise. Each nonzero coordinate is maximized at its sign; zero coordinates are set to zero by convention. This is the Sign endpoint.

These maps are steepest directions for different geometries, not interchangeable normalizations of a single norm. Norm-dependent layer assignments are studied by \cite{pethick2025scion}, \cite{large2024scalable}, and \cite{bernstein2024oldnorm,bernstein2025modular}. The construction below combines selected role constraints when several roles share one stored tensor.

%=============================
% Second Section
%=============================

\section{The alias-pullback framework}
\label{app:sec_aliaspulllback}
In an untied model, the input table and output head can have distinct updates. Under hard tying, autograd accumulates their contributions into one gradient on one parameter. Separate role constraints must therefore act on the same stored perturbation. Pullbacks, intersections, support functions, and infimal convolution are standard convex-analysis constructions (\cite{Tyrrell1970convex}); here they organize the shared-parameter geometry.

\subsection{Where does the terminology come from?}
A \emph{pullback} transfers a structure on a target space to a source through a map. Here a linear role map $A_r:\mathcal W\to\mathcal X_r$ transfers a norm constraint from the role space to the stored-parameter space.\\
An \emph{alias}, in programming, is another reference to the same underlying object. In this setting, the roles alias one stored tensor: changing that tensor changes every role view through its corresponding $A_r$.\\
The resulting feasible set is an intersection: every role constraint must hold simultaneously. Its dual admits a decomposition into latent role covectors. This decomposition is geometric and need not coincide with the separately accumulated computational-path gradients.

We therefore call the resulting construction the \emph{alias-pullback geometry}: it is the pullback to the shared, aliased parameter space of the geometries imposed by its multiple functional roles. Its unit ball is naturally the intersection of the role-wise feasible sets, so a stored update is feasible only when it is simultaneously feasible in every role. Dually, the merged covector can be represented by an optimal decomposition into latent role covectors, which explains the infimal-convolution expression for the dual norm. Finally, any redundancy of the shared parameter must also be compatible with every role, leading to an intersection of the role-wise redundancy subspaces.

\subsection{Pullback norm and intersection ball}
Let \(R\ge 1\). For each \(r\in\{1,\dots,R\}\), let \((\mathcal X_r,N_r)\) be a normed space and let \(\rho_r>0\) be the corresponding role radius. Define the scaled product norm
\[
N_\oplus(x_1,\dots,x_R)
=
\max_{1\le r\le R}\frac{N_r(x_r)}{\rho_r},
\qquad x_r\in\mathcal X_r.
\]
Given linear role maps \(A_r:\mathcal W\to\mathcal X_r\), define the alias seminorm
\[
N_{\text{alias}}(U)
=
\max_{r}\frac{N_r(A_rU)}{\rho_r},
\qquad U\in\mathcal W.
\]
Equivalently, with $A=(A_1,\dots,A_R):\mathcal W\to \prod_{r=1}^R\mathcal X_r$, we have
\begin{align*}
    N_{\text{alias}} = N_\oplus\circ A.
\end{align*}
Note that $N_{\text{alias}}$ is a semi-norm on \(\mathcal W\); it is a norm precisely when $\bigcap_{r=1}^R \ker A_r=\{0\}$.

% Let $R\ge1$, let each $\mathcal X_r$ carry a norm $N_r$ and radius $\rho_r>0$, and define
% \[
% N_\oplus(X_1,\ldots,X_R)=\max_r\frac{N_r(X_r)}{\rho_r},\qquad
% \alias(U)=\max_r\frac{N_r(A_rU)}{\rho_r}.
% \]
\begin{theorem}[Alias-pullback norm and intersection ball]
\label{app:the_aliaspullbacknorm}
If $$\bigcap_{r=1}^R\ker A_r=\{0\},$$ then $\alias$ is a norm and
\[
\mathcal C_{\mathrm{alias}}=\{U:\alias(U)\le1\}
=\bigcap_{r=1}^R A_r^{-1}\{X:N_r(X)\le\rho_r\}.
\]
\end{theorem}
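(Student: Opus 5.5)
The plan is to verify the norm axioms for $\alias$ directly from those of the role norms $N_r$ and the linearity of the $A_r$, and then to identify the unit ball by unwinding the definition of a maximum. Throughout we use the factorization $\alias=N_\oplus\circ A$ with $A=(A_1,\dots,A_R)$, which is the composition of a norm with a linear map. This composition is therefore at least a seminorm, and the only extra input needed is injectivity of $A$.

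\emph{Step 1 (seminorm properties).} We check three properties.
\begin{itemize}
\item Nonnegativity is immediate, since each term $N_r(A_rU)/\rho_r$ is nonnegative.
\item Absolute homogeneity follows from $A_r(\alpha U)=\alpha A_rU$ and $N_r(\alpha X)=|\alpha|N_r(X)$. The common factor $|\alpha|$ pulls out of each term, and hence out of the maximum.
\item For the triangle inequality, fix $r$. Linearity and the triangle inequality for $N_r$ give
\[
N_r(A_r(U+U'))/\rho_r\le N_r(A_rU)/\rho_r+N_r(A_rU')/\rho_r\le \alias(U)+\alias(U').
\]
Taking the maximum over $r$ on the left yields $\alias(U+U')\le\alias(U)+\alias(U')$.
\end{itemize}

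\emph{Step 2 (definiteness).} This is the only place the hypothesis enters. Suppose $\alias(U)=0$. Every term of the maximum is nonnegative, so $N_r(A_rU)=0$ for each $r$. Definiteness of each $N_r$ then gives $A_rU=0$, so $U\in\bigcap_r\ker A_r=\{0\}$. Conversely, $\alias(0)=0$ trivially. The converse direction stated before the theorem also holds: if some $U\neq 0$ lies in every kernel, then $\alias(U)=0$ and $\alias$ fails to be a norm. Only the forward direction is needed here.

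\emph{Step 3 (intersection ball).} A maximum of finitely many reals is at most $1$ if and only if each of them is. Hence
\[
\alias(U)\le1 \iff N_r(A_rU)\le\rho_r \text{ for every } r \iff A_rU\in\{X:N_r(X)\le\rho_r\} \text{ for every } r.
\]
The last condition says exactly that $U\in\bigcap_r A_r^{-1}\{X:N_r(X)\le\rho_r\}$. Multiplying through by $\rho_r>0$ preserves each inequality, so this equivalence is a chain of genuine equivalences and gives both set inclusions at once.

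There is no real obstacle; the argument is routine. The one point needing care is Step 2, where nonnegativity of all terms is what lets us pass from a vanishing maximum to the vanishing of every role norm. It is also worth noting that the set identity in Step 3 holds even without the kernel hypothesis, as a sublevel set of a seminorm. The hypothesis is needed only to make the ball bounded, and hence compact, so that the LMO in \eqref{eq:lmo} is attained.
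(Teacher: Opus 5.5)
Your proof is correct and follows the paper's own argument essentially step for step. The paper likewise checks homogeneity and the triangle inequality directly from the role norms, gets definiteness from the trivial common kernel, and reads off the ball identity from the fact that a finite maximum is at most $1$ exactly when each term is; your side remarks about the seminorm case agree with its closing comment.
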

\begin{proof}
Positive homogeneity and nonnegativity follow immediately from the corresponding properties of the norms $N_r$. Further,
\[
\alias(U+V)\le\max_r\left[\frac{N_r(A_rU)}{\rho_r}+\frac{N_r(A_rV)}{\rho_r}\right]
\le\alias(U)+\alias(V).
\]
If $\alias(U)=0$, then $U$ belongs to every $\ker A_r$, hence $U=0$. The ball identity follows directly from the maximum. Without the common-kernel assumption, this is a semi-norm, becoming a norm on the quotient by $\bigcap_r\ker A_r$.
\end{proof}

\subsection{Dual of the alias geometry}
The dual evaluates a covector on the stored space by finding its cheapest representation through role covectors. Adjoint maps are taken with respect to the specified inner products.

\begin{theorem}[Alias-aware dual norm]
\label{app:the_alias_aware_dual}
    Let $\mathcal W$ and $\mathcal X_r$ be finite-dimensional inner-product spaces, $A_r:\mathcal W\to\mathcal X_r$ linear, $N_r$ norms, and $\rho_r>0$. Assume explicitly that $\bigcap_r\ker A_r=\{0\}$. Then for every $B\in\mathcal W$,
    \[
    N_{\mathrm{alias}}^*(B)=
    \min_{\sum_{r=1}^R A_r^*Z_r=B}\sum_{r=1}^R\rho_r N_r^*(Z_r).
    \]
    The minimum is feasible and attained.
\end{theorem}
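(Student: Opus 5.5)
We prove the two inequalities separately: weak duality gives $N_{\mathrm{alias}}^*(B)$ as a lower bound on every decomposition cost, and conic strong duality shows some decomposition attains it. Write $\mathcal X=\prod_r\mathcal X_r$ with the product inner product, let $A=(A_1,\dots,A_R)$, and note $A^*(Z_1,\dots,Z_R)=\sum_r A_r^*Z_r$. By Theorem~\ref{app:the_aliaspullbacknorm}, $N_{\mathrm{alias}}=N_\oplus\circ A$ is a norm, so its dual is well defined. The dual of the scaled max-norm $N_\oplus$ is the weighted sum $N_\oplus^*(Z)=\sum_r\rho_rN_r^*(Z_r)$. This follows since $\max\{\sum_r\langle Z_r,x_r\rangle : N_r(x_r)\le\rho_r\ \forall r\}$ separates across $r$, each block contributing $\rho_rN_r^*(Z_r)$.

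\textbf{Weak duality (the $\le$ direction).} Fix any $Z$ with $A^*Z=B$ and any $U$ with $N_{\mathrm{alias}}(U)\le1$. Then
\[
\langle B,U\rangle=\langle Z,AU\rangle=\sum_r\langle Z_r,A_rU\rangle\le\sum_r N_r^*(Z_r)N_r(A_rU)\le\sum_r\rho_rN_r^*(Z_r).
\]
Taking the supremum over $U$ and the infimum over $Z$ gives $N_{\mathrm{alias}}^*(B)\le\inf_{A^*Z=B}N_\oplus^*(Z)$.

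\textbf{Feasibility.} A representation $A^*Z=B$ exists for every $B\in\mathcal W$. The common-kernel assumption makes $A$ injective, so $\operatorname{range}(A^*)=(\ker A)^\perp=\mathcal W$ in finite dimensions.

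\textbf{Strong duality and attainment.} This is the main step; we would handle it by an explicit finite-dimensional argument rather than a general Fenchel theorem. Set $g(x)=\delta_{\{N_\oplus\le1\}}(x)$, so that $N_{\mathrm{alias}}^*(B)=\sup_U\{\langle B,U\rangle-g(AU)\}$ and $g^*=N_\oplus^*$.

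We would use the perturbation/Hahn--Banach version on the subspace $\operatorname{range}(A)$. Define the linear functional $\ell(AU)=\langle B,U\rangle$ on $\operatorname{range}(A)$; it is well defined because $A$ is injective. This functional is bounded by $N_{\mathrm{alias}}^*(B)\cdot N_\oplus$ on that subspace. By Hahn--Banach, extend $\ell$ to a functional $Z$ on all of $\mathcal X$ with the same dual norm, $N_\oplus^*(Z)=N_{\mathrm{alias}}^*(B)$. The extension restricts to $\ell$, so $\langle Z,AU\rangle=\langle B,U\rangle$ for all $U$, i.e.\ $A^*Z=B$. This $Z$ is therefore feasible and achieves equality with the lower bound, proving both the formula and that the minimum is attained.

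The only delicate points are checking that $\ell$ is well defined, which uses injectivity, and that the norm-preserving extension is exactly $N_\oplus^*$. An alternative would be to invoke Fenchel--Rockafellar duality, using that $g$ is finite and continuous at $0\in\operatorname{range}(A)$. The attainment statement in that theorem gives the minimizer directly.
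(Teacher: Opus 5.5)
Your proof is correct, but it takes a different route from the paper. The paper argues at the level of sets, in two layers. First it writes $\delta_{\mathcal C_{\mathrm{alias}}}=\sum_r\delta_{\mathcal C_r}$ and uses the conjugate-of-a-sum rule, qualified by each $\mathcal C_r$ containing a neighborhood of $0$, to get $\sigma_{\mathcal C_{\mathrm{alias}}}(B)=\min_{\sum_r B_r=B}\sum_r\sigma_{\mathcal C_r}(B_r)$. Second, it applies the conjugate-of-a-composition rule to each pulled-back ball to get $\sigma_{\mathcal C_r}(B_r)=\inf_{A_r^*Z_r=B_r}\rho_rN_r^*(Z_r)$. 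Feasibility then comes from surjectivity of $A^*$, and attainment from coercivity.

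You instead collapse both layers by treating $A=(A_1,\dots,A_R)$ as one injective map into the product space. You compute $N_\oplus^*$ directly and prove the $\le$ direction by a one-line H\"older-type estimate. A single dominated Hahn--Banach extension of $\ell(AU)=\langle B,U\rangle$ off $\operatorname{range}(A)$ then gives the reverse inequality and the minimizer at once. The domination $\ell(AU)\le N_{\mathrm{alias}}^*(B)\,N_\oplus(AU)$ is just the definition of the dual norm together with $N_{\mathrm{alias}}=N_\oplus\circ A$. The extension yields $N_\oplus^*(Z)\le N_{\mathrm{alias}}^*(B)$ with $A^*Z=B$, and your weak-duality bound turns this into equality.

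Your route is more elementary and self-contained. It needs no constraint-qualification theorems and no separate coercivity step, since attainment falls out of the extension itself. It also makes plain that the result is the familiar fact that the dual of a norm restricted to a subspace is the quotient of the dual norm.

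The paper's route buys the intermediate infimal-convolution identity over the role-wise support functions $\sigma_{\mathcal C_r}$. That identity underpins its reading of the minimizers as latent role covectors and its extended-real remark for the semi-norm case. Your argument adapts to that case too: for $B\in(\ker A)^\perp$ the functional $\ell$ is still well defined, and otherwise both sides are $+\infty$. You would need to state this explicitly.
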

\begin{proof}
    Let $\mathcal C_r=\{U:N_r(A_rU)\le\rho_r\}$. Write $\delta_C$ for the convex indicator of $C$ and $\sigma_C(B)=\sup_{U\in C}\ip{B}{U}$ for its support function. Since
    \[
    \delta_{\mathcal C_{\mathrm{alias}}}=\sum_r\delta_{\mathcal C_r},
    \]
    conjugacy gives the infimal-convolution formula. Each $\mathcal C_r$ contains a neighborhood of zero, even when $A_r$ is not injective: continuity of $N_r\circ A_r$ and $\rho_r>0$ ensure this. The common-interior qualification therefore gives exactness and attainment at finite values:
    \begin{align}
    \label{app:eq_infimal-convolution}
    \sigma_{\mathcal C_{\mathrm{alias}}}(B)=
    \min_{\sum_r B_r=B}\sum_r\sigma_{\mathcal C_r}(B_r).
    %\tag{7}
    \end{align}
    For $\mathcal D_r=\{X:N_r(X)\le\rho_r\}$, $\delta_{\mathcal C_r}=\delta_{\mathcal D_r}\circ A_r$. The same continuity qualification at $A_r0=0$ yields
    \begin{align}
    \label{app:eq_infimal-convolution-8}
    \sigma_{\mathcal C_r}(B_r)=
    \inf_{A_r^*Z_r=B_r}\sigma_{\mathcal D_r}(Z_r)
    =\inf_{A_r^*Z_r=B_r}\rho_rN_r^*(Z_r).
    %\tag{8}
    \end{align}
    If the constraint is infeasible, both sides equal $+\infty$, using $\inf\varnothing=+\infty$. Otherwise the infimum is a minimum, since a norm is coercive on the closed affine feasible set. Substitution into (\ref{app:eq_infimal-convolution}) gives the stated aggregate constraint.\\
    For completeness, define $A U=(A_1U,\ldots,A_RU)$. The common-kernel assumption says $A$ is injective, so its adjoint $A^*$ is surjective in finite dimensions. Thus $\sum_r A_r^*Z_r=B$ is feasible for every $B$. The objective $\sum_r\rho_rN_r^*(Z_r)$ is coercive on the product space, ensuring attainment. Finally $\sigma_{\mathcal C_{\mathrm{alias}}}=N_{\mathrm{alias}}^*$ by definition.
\end{proof}
\paragraph{Semi-norm case.} If $K=\bigcap_r\ker A_r\ne\{0\}$, the same support-function identity holds with extended-real values. It is finite precisely for $B\in K^\perp=\operatorname{range}(A^*)$, where it describes the quotient dual. It is not a finite norm on all of $\mathcal W$.

The minimizing $Z_r$ represent an optimal geometric decomposition of a merged gradient or momentum. They are not required to equal the actual gradients produced by the individual roles, and the oracle need not form or store them.

\subsection{Redundancy under aliasing}
Role-wise invariance must be distinguished from invariance of the composed network. Suppose $K_r\subseteq\mathcal X_r$ is a linear subspace of perturbations that leave role $r$ unchanged, with its other arguments held fixed. For exact finite invariance, require this property along every translation in $K_r$ on the domain being considered. If $K_r$ instead denotes a Jacobian kernel, the resulting statement is only infinitesimal and local.

\begin{theorem}[Simultaneous role-wise redundancy is an intersection]
\label{app:the_redundancy_intersection}
For linear role maps $A_r$, the stored perturbations that are redundant in every role are exactly
\[
K_{\mathrm{alias}}:=\{U:A_rU\in K_r\ \text{for all }r\}
=\bigcap_{r=1}^R A_r^{-1}(K_r).
\]
\end{theorem}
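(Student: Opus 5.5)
The statement is essentially definitional, so I would prove it by unwinding definitions and proving the set equality through two inclusions. First I fix what ``redundant in every role'' means for a stored perturbation $U$. Changing the stored tensor by $U$ changes the view of role $r$ by exactly $A_rU$, because $A_r$ is linear and every role sees the same aliased object. So $U$ is redundant for role $r$ precisely when $A_rU\in K_r$. Here $K_r$ carries whichever meaning was fixed above: exact finite invariance along every translation, or merely a Jacobian kernel. The notion is then inherited verbatim by the stored perturbation.

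For the inclusion ``$\subseteq$'', take $U$ redundant in every role. By the observation above, $A_rU\in K_r$ for each $r$. Equivalently $U\in A_r^{-1}(K_r)$ for each $r$, so $U$ lies in the intersection. For ``$\supseteq$'', take $U\in\bigcap_r A_r^{-1}(K_r)$. For each role, the perturbation that role sees is $A_rU\in K_r$, which by hypothesis leaves that role unchanged with its other arguments held fixed.

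In the exact-invariance reading I also need to check translations along the whole line $tU$. This follows because $A_r(tU)=tA_rU\in K_r$, since $K_r$ is a subspace. Hence $U$ is redundant in all roles simultaneously. The middle expression $\{U:A_rU\in K_r\ \forall r\}$ and the intersection of preimages are the same set by the definition of preimage.

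Finally I would note that $K_{\mathrm{alias}}$ is a linear subspace, since preimages of subspaces under linear maps and intersections of subspaces are subspaces. I would also record, as a remark, the caveat the paper stresses: this statement concerns role-wise invariance, not invariance of the composed network, and in the infinitesimal reading it is only local.

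The only real obstacle is bookkeeping rather than mathematics. It is to state clearly that a stored perturbation acts on role $r$ exactly through $A_r$, with no cross-role coupling, so that ``redundant in role $r$'' is equivalent to $A_rU\in K_r$. Once that modeling step is explicit, both inclusions are immediate.
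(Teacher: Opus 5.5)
Your proposal is correct and matches the paper's argument: a perturbation is redundant in role $r$ exactly when $A_rU\in K_r$, and lying in every preimage $A_r^{-1}(K_r)$ is the same as lying in their intersection. Your extra checks, that the subspace property closes the line $tU$ and that $K_{\mathrm{alias}}$ is a subspace, are harmless details the paper leaves implicit.
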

\begin{proof}
    A perturbation is role-wise redundant exactly when each role view lies in that role's redundancy subspace. Membership in all preimages is equivalent to membership in their intersection.
\end{proof} 

This is a characterization of \emph{simultaneous role-wise} redundancy. Such redundancy is sufficient for invariance of a network composed from those roles, but it need not exhaust the network's invariances. Downstream maps can discard changes, and effects through several paths can cancel. In particular, $K_{\mathrm{alias}}=\{0\}$ does not prove that the full network has no symmetries, parameter degeneracies, or loss-flat directions.
\paragraph{Quotient geometry requires a further distinction.} If the objective is invariant along a fixed subspace $K$, the norm induced on equivalence classes is
\[
\overline N([U])=\inf_{k\in K}\alias(U+k).
\]
Identify the quotient with $K^\perp$ using the orthogonal projection $P_{K^\perp}$. Its unit ball is the \emph{projection}
\[
P_{K^\perp}\mathcal C_{\mathrm{alias}},
\]
not generally the section $\mathcal C_{\mathrm{alias}}\cap K^\perp$. For $B\in K^\perp$, an oracle over the projected ball has the same optimal linear value as the original ball. An oracle restricted to the section is a different, potentially more restrictive choice unless projection is nonexpansive in $\alias$.

For example, in $\R^2$ let $N(x,y)=\sqrt{x^2+(y-x)^2}$ and $K=\operatorname{span}(e_2)$. The unit ball projects onto $|x|\le1$, whereas its section at $y=0$ is $|x|\le1/\sqrt2$. Thus orthogonal restriction can reduce the optimum even for a covector orthogonal to the redundant subspace.

These distinctions matter for an untied softmax head, whose common-logit-shift invariance is studied in symmetry-compatible optimizer design (\cite{lau2026symmetry}). A direction that is redundant for the head alone can change the lookup role of a tied table. Appendix \ref{app:sec-hardtiedtables} establishes precisely that role-wise obstruction; it makes no claim to classify every symmetry of a complete transformer.

%=============================
% Third Section
%=============================

\section{Hard-tied token tables}
\label{app:sec-hardtiedtables}

\subsection{Gradient structure of the tied table}
\label{app:gradientStructure}
Hard tying makes autograd add all computational-path contributions to the same stored gradient. The following formulas use a summed cross-entropy loss. For a weighted or averaged loss, the corresponding position weights or normalization factors are included in each summand. Lookup and output positions can have different index sets.

\begin{proposition}[Sparse lookup and generically dense output contributions]
\label{app:pro_sparaseinput_dense}
For lookup positions $t$, token indices $i_t$, and upstream embedding gradients $g_{x,t}=\nabla_{x_t}\mathcal L\in\R^d$,
\[
G_{\mathrm{in}}=\sum_{t\in\mathcal T_{\mathrm{in}}}e_{i_t}g_{x,t}^\top.
\]
Its nonzero rows are contained in the set of token indices used by the lookup. For output positions with $z_t=Eh_t$, $p_t=\operatorname{softmax}(z_t)$, and one-hot target $y_t$,
\[
G_{\mathrm{out}}=\sum_{t\in\mathcal T_{\mathrm{out}}}(p_t-y_t)h_t^\top,
\qquad G_E=G_{\mathrm{in}}+G_{\mathrm{out}}.
\]
The output formula is the direct head contribution, holding $h_t$ fixed in this partial derivative. Dependence of $h_t$ on earlier uses of $E$ is already represented by the lookup-path gradients.
\end{proposition}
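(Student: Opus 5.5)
The proposition is a chain-rule computation on the two ways the stored table $E$ enters the loss. I would treat $E$ as if it were two formally distinct copies, $E^{\mathrm{in}}$ used in the lookups $x_t = (E^{\mathrm{in}})^\top e_{i_t}$ and $E^{\mathrm{out}}$ used in the logits $z_t = E^{\mathrm{out}} h_t$. Then $\mathcal L(E) = \widetilde{\mathcal L}(E^{\mathrm{in}},E^{\mathrm{out}})$ evaluated on the diagonal $E^{\mathrm{in}}=E^{\mathrm{out}}=E$. The multivariable chain rule along the diagonal map $E\mapsto(E,E)$ gives $G_E = \nabla_{E^{\mathrm{in}}}\widetilde{\mathcal L} + \nabla_{E^{\mathrm{out}}}\widetilde{\mathcal L}$, which is exactly the accumulation autograd performs. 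This step also clarifies the last sentence of the statement. Because $h_t$ depends on $E^{\mathrm{in}}$ (through earlier lookups), that dependence is counted in the first partial and not in the second. The output term is therefore the partial derivative with $h_t$ held fixed.

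For the lookup term, I would write $x_t = E^\top e_{i_t}$, so that $\langle g_{x,t}, dx_t\rangle = \langle g_{x,t}, dE^\top e_{i_t}\rangle = \langle e_{i_t} g_{x,t}^\top, dE\rangle$ under the trace inner product. Summing over lookup positions, where each $g_{x,t}$ is the total upstream gradient at $x_t$ including all downstream effects, gives $G_{\mathrm{in}}=\sum_t e_{i_t}g_{x,t}^\top$. Each summand is supported on row $i_t$, so the nonzero rows lie in $\{i_t\}$.

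For the output term, I would use the standard softmax cross-entropy identity $\nabla_{z_t}\bigl(-\log p_{t,y}\bigr) = p_t - y_t$. With $dz_t = dE\,h_t$ this gives $\langle p_t-y_t, dE\,h_t\rangle = \langle (p_t-y_t)h_t^\top, dE\rangle$, and summing over output positions yields $G_{\mathrm{out}}$. Weighted or averaged losses only rescale each summand, as the preamble notes.

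The only delicate point is the bookkeeping that rules out double counting in the split into $G_{\mathrm{in}}$ and $G_{\mathrm{out}}$. The two-copy device handles this: every path from $E$ to the loss passes through either a lookup node or a logit node, and the paths through $h_t$ are attributed to the lookup nodes via $g_{x,t}$. For the ``generically dense'' remark, I would add that $p_t$ has full support, since softmax outputs are strictly positive. Every row of $G_{\mathrm{out}}$ is then $\sum_t (p_t-y_t)_i h_t^\top$, which vanishes only on a measure-zero set of hidden states. The proposition, however, only asserts the formula, so this remark needs no separate proof.
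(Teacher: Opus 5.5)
Your proposal is correct and follows the paper's proof. The paper likewise computes the differential $\langle e_{i_t}g_{x,t}^\top, dE\rangle$ at each lookup and $\langle (p_t-y_t)h_t^\top, dE\rangle$ at each head position, then sums over all uses of the stored tensor; your two-copy diagonal chain rule simply formalizes that summation step, and the density remark goes beyond what the paper proves, since it only notes afterward that summed head contributions may cancel.
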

\begin{proof}
At a lookup, $x_t=E^\top e_{i_t}$ and $dx_t=(dE)^\top e_{i_t}$. Therefore
\begin{align*}
    d\mathcal L_{\mathrm{lookup},t}
=g_{x,t}^\top(dE)^\top e_{i_t}
=\ip{e_{i_t}g_{x,t}^\top}{dE}.
\end{align*}
At the head, $\nabla_{z_t}\ell_t=p_t-y_t$ and $dz_t=(dE)h_t$, so
\begin{align*}
    d\mathcal L_{\mathrm{head},t}
=(p_t-y_t)^\top(dE)h_t
=\ip{(p_t-y_t)h_t^\top}{dE}.
\end{align*}
Summing the differentials over all uses of the stored parameter proves the identity.
\end{proof}

For $V>1$, finite logits, and a full softmax, each $0<p_{t,i}<1$. Thus a nonzero hidden state yields a nonzero vector in every vocabulary row of an individual head contribution. Summed contributions may cancel; individual hidden coordinates may vanish; masking, alternative losses, and numerical underflow can alter support. Consequently full-softmax training does not guarantee that every coordinate of the merged gradient or momentum is nonzero. The support-aware oracle in Appendix \ref{app:sec_exactClippedRow} covers these cases.

\subsection{Why the untied-head quotient disappears}
For an untied output head, adding $\mathbf1a^\top$ adds the common scalar $a^\top h$ to every logit. Softmax probabilities are unchanged. When hidden states range over $\R^d$,
\begin{align}
   K_{\mathrm{out}}=\{\mathbf1a^\top:a\in\R^d\} 
\end{align}
is exactly this head-shift subspace. Restricted hidden-state sets can introduce additional degeneracies. The following result concerns the specified head-shift symmetry and the full lookup map.

\begin{proposition}[Hard tying removes simultaneous head-shift redundancy]
\label{app:pro_remove-shift-redundancy}
Assume the input role map is injective, so $K_{\mathrm{in}}=\{0\}$. The output role, under the softmax loss, has head-shift redundancy: for every $a\in\mathbb R^d$, the perturbation $\mathbf1 a^T$ leaves the output probabilities unchanged to first order, so $\mathbf1 a^\top\in K_{\mathrm{out}}$. Then
\begin{align*}
K_{\mathrm{alias}}
=
K_{\mathrm{in}}\cap K_{\mathrm{out}}
=
\{0\}.
\end{align*}
In particular, a nonzero head-shift direction \(\mathbf1 a^\top\) is redundant for the output role alone but not simultaneously for the two roles.
\end{proposition}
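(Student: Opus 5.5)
The plan is to apply Theorem~\ref{app:the_redundancy_intersection} with $R=2$ and both role maps equal to the identity on $\mathcal W=\R^{V\times d}$. In the hard-tied case, the lookup and output roles act on the same stored perturbation $U$. The simultaneous redundancy is therefore
\[
K_{\mathrm{alias}}=A_{\mathrm{in}}^{-1}(K_{\mathrm{in}})\cap A_{\mathrm{out}}^{-1}(K_{\mathrm{out}})=K_{\mathrm{in}}\cap K_{\mathrm{out}}.
\]
This gives the first equality directly.

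Next I would verify the claimed head-shift redundancy. Perturb the head by $\mathbf1a^\top$. The logits become $z_t+(\mathbf1a^\top)h_t=z_t+(a^\top h_t)\mathbf1$, a common scalar shift of every logit. Softmax is invariant under such shifts, so the output probabilities are unchanged exactly, and in particular to first order. Hence $\mathbf1a^\top\in K_{\mathrm{out}}$ for every $a\in\R^d$, which is the premise as stated.

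For the second equality, injectivity of the lookup role gives $K_{\mathrm{in}}=\{0\}$. The reason is that the lookup map $i\mapsto E^\top e_i$ reads off every row, so a perturbation $U$ leaves all lookups unchanged only if $U^\top e_i=0$ for all $i$, that is, only if $U=0$. Consequently $K_{\mathrm{in}}\cap K_{\mathrm{out}}\subseteq K_{\mathrm{in}}=\{0\}$, and since $0$ lies in both subspaces, the intersection equals $\{0\}$.

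For the final sentence, take $a\neq0$. Then $\mathbf1a^\top\neq0$, because every row equals $a^\top$. It lies in $K_{\mathrm{out}}$ but not in $K_{\mathrm{in}}$: it changes the embedding of every token by $a$, so it is not in $K_{\mathrm{alias}}$.

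The only delicate point is interpretive rather than computational. I must be explicit that $K_{\mathrm{in}}$ is the redundancy subspace of the full lookup map, where all rows are observable. The conclusion concerns simultaneous role-wise redundancy in the sense of Theorem~\ref{app:the_redundancy_intersection}, not every invariance of the full network, in line with the caveats stated after that theorem.
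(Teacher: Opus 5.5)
Your proposal is correct and follows essentially the same route as the paper: invoke Theorem~\ref{app:the_redundancy_intersection} with identity role maps to get $K_{\mathrm{alias}}=K_{\mathrm{in}}\cap K_{\mathrm{out}}$, conclude from $K_{\mathrm{in}}=\{0\}$, and observe directly that $E+\mathbf1a^\top$ shifts every lookup vector by $a$. Your additional check that the head shift leaves the softmax exactly, not just to first order, invariant is a harmless addition.
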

\begin{proof} 
By Theorem~\ref{app:the_redundancy_intersection},
\(K_{\mathrm{alias}}=K_{\mathrm{in}}\cap K_{\mathrm{out}}\). Since
\(K_{\mathrm{in}}=\{0\}\), the intersection is \(\{0\}\). Directly, replacing
\(E\) by \(E+\mathbf1 a^\top\) changes every lookup vector by \(a\), which the
input role detects. Hence such a perturbation is not redundant once the two
roles are tied.
% Apply Theorem \ref{app:the_redundancy_intersection}. Directly, replacing $E$ by $E+\mathbf1a^\top$ changes every lookup vector by $a$, which the input role detects.
\end{proof}

\begin{proposition}[Row-sum identities]
Under the loss convention of Proposition \ref{app:pro_sparaseinput_dense},
\[
\mathbf1^\top G_{\mathrm{out}}=0,\qquad
\mathbf1^\top G_{\mathrm{in}}=\sum_{t\in\mathcal T_{\mathrm{in}}}g_{x,t}^\top.
\]
\end{proposition}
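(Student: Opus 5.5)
The plan is to left-multiply the explicit formulas of Proposition~\ref{app:pro_sparaseinput_dense} by $\mathbf1^\top$ and treat the two contributions separately, using linearity of the sums over positions. The key fact is that $\mathbf1^\top e_{i}=1$ for every vocabulary basis vector $e_i$, and that $\mathbf1^\top p_t=1=\mathbf1^\top y_t$ for a probability vector and a one-hot target.

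For the output term I would write
\[
\mathbf1^\top G_{\mathrm{out}}=\sum_{t\in\mathcal T_{\mathrm{out}}}(\mathbf1^\top p_t-\mathbf1^\top y_t)\,h_t^\top .
\]
Since $p_t=\operatorname{softmax}(z_t)$ has entries summing to one, and $y_t$ is one-hot, each scalar coefficient vanishes, so the whole sum is $0$. This is the algebraic shadow of the head-shift invariance from the preceding discussion: $\langle G_{\mathrm{out}},\mathbf1a^\top\rangle=a^\top G_{\mathrm{out}}^\top\mathbf1=0$ for every $a$.

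For the input term I would use
\[
\mathbf1^\top G_{\mathrm{in}}=\sum_{t\in\mathcal T_{\mathrm{in}}}(\mathbf1^\top e_{i_t})\,g_{x,t}^\top=\sum_{t\in\mathcal T_{\mathrm{in}}}g_{x,t}^\top .
\]
Repeated tokens cause no issue, because the sum is over positions, not over distinct rows.

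The only point needing care is the loss convention. With weighted or averaged losses, the position weights multiply both $(p_t-y_t)$ and $g_{x,t}$, so the output identity still holds termwise. The input identity then holds with the weights already absorbed into $g_{x,t}=\nabla_{x_t}\mathcal L$, so I would state this remark rather than treat it as an obstacle. I expect no genuine difficulty here. The main thing to check is that $p_t$ is the full softmax over all $V$ entries, so its entries sum to one. If masking or sampled softmax were used, that normalization would fail, and I would note this as outside the stated convention.
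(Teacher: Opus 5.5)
Your proposal is correct and follows the paper's proof: apply $\mathbf1^\top$ termwise, use $\mathbf1^\top(p_t-y_t)=1-1=0$ at each output position and $\mathbf1^\top e_{i_t}=1$ at each lookup, and sum over the respective position sets. One small imprecision in your weighted-loss remark: general position weights do not simply multiply $g_{x,t}$, since $g_{x,t}$ already aggregates all downstream weighted terms. As you then say, though, they are absorbed into $g_{x,t}=\nabla_{x_t}\mathcal L$, so the identity is unaffected.
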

\begin{proof}
For every output position $t \in \mathcal{T}_{\mathrm{out}}$, the probability distribution $p_t$ and target distribution $y_t$ both sum to one, so $\mathbf{1}^\top (p_t - y_t) = 1 - 1 = 0$. For every input lookup, the basis vector satisfies $\mathbf{1}^\top e_{i_t} = 1$. Summing these identities over their respective position sets yields the result.
\end{proof}

The second sum need not vanish. Hence, projecting the merged tied gradient with $\Pi_\perp=I_V-\mathbf1\mathbf1^\top/V$ can discard a legitimate input-role contribution. The same issue applies to accumulated tied momentum. AF-Muon therefore uses the merged, unprojected buffer. This conclusion does not exclude an architecture-specific invariance that must be established separately for the composed network.

\subsection{The tied intersection norm}

It is worth stating why both role maps are the identity, given that the two roles act on $E$ differently. The alias framework constrains a \emph{perturbation} $U=dE$ of the stored table, not the forward map. A lookup perturbs the embedding of token $i$ by $U^\top e_i=u_i$, and the head perturbs the logits at hidden state $h$ by $Uh$; both are linear images of the same $U$ and both views live in the same space of $V\times d$ perturbations. Hence $A_{\mathrm{in}}=A_{\mathrm{out}}=\mathrm{Id}$, and the entire role distinction is carried by the \emph{norms} $N_{\mathrm{in}}$ and $N_{\mathrm{out},c}$ rather than by the maps; the common-kernel condition of Theorem \ref{app:the_aliaspullbacknorm} is then
immediate. 
Appendix \ref{app:sec_aliaspulllback} is nevertheless stated for general $A_r$, since nontrivial role maps arise elsewhere, for instance a factorized or projected head $U\mapsto UP$ or a partially shared table, and since Theorem \ref{app:the_redundancy_intersection} is stated through preimages.

AF-Muon chooses
\begin{align*}
    N_{\mathrm{in}}(U)=\max_i\frac{\norm{u_i}_2}{\sqrt d},\qquad
    N_{\mathrm{out},c}(U)=\frac{\norm{U}_\infty}{c},\quad 1\le c<\infty.
\end{align*}
Each norm bounds the quantity its role feeds downstream.
On the lookup side, a feasible perturbation moves every token embedding by at most
$\RMS(u_i)\le1$: a bounded root-mean-square displacement per token, uniform over the vocabulary
and independent of which tokens the batch happens to contain. On the output side, the logit
perturbation at hidden state $h$ is $Uh$, and for $U\in\mathcal C_c$,
\[
\norm{Uh}_\infty=\max_i|\ip{u_i}{h}|\le
\min\bigl\{\,c\norm{h}_1,\ \sqrt d\,\norm{h}_2\,\bigr\}.
\]
The two bounds are not comparable: the cap supplies the tighter one exactly when
$\norm{h}_1/\norm{h}_2<\sqrt d/c$, that is, when the hidden state is concentrated, which is the
regime where a single coordinate of an update row would otherwise dominate a logit direction.
Neither constraint implies the other in the range $1<c<\sqrt d$, and $c$ sets their relative
strength. Figure \ref{fig:overview} shows the resulting feasible sets.

The diagonal alias map $U\mapsto(U,U)$ gives
\begin{equation}
\label{app:eq_tiehardnorm}
\tie(U)=\max\left\{\max_i\frac{\norm{u_i}_2}{\sqrt d},\frac{\norm{U}_\infty}{c}\right\}.
%\tag{9}
\end{equation}
\begin{proposition}[Hard-tied intersection geometry]
The functional in (\ref{app:eq_tiehardnorm}) is a norm with unit ball
\begin{equation}
\label{app:eq_tiehardball}
\mathcal C_c=\{U:\norm{u_i}_2\le\sqrt d\ \forall i,\ |U_{ij}|\le c\ \forall i,j\}.
%\tag{10}
\end{equation}
\end{proposition}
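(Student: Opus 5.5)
The plan is to obtain this as a direct instance of Theorem~\ref{app:the_aliaspullbacknorm}, with a short self-contained check as backup. Take $\mathcal W=\mathcal X_{\mathrm{in}}=\mathcal X_{\mathrm{out}}=\R^{V\times d}$ and $A_{\mathrm{in}}=A_{\mathrm{out}}=\mathrm{Id}$. The role norms are $N_{\mathrm{in}}(U)=\max_i\norm{u_i}_2/\sqrt d$ and $N_{\mathrm{out}}(U)=\norm{U}_\infty$, with radii $\rho_{\mathrm{in}}=1$ and $\rho_{\mathrm{out}}=c$. With these choices $\alias$ coincides with $\tie$ in (\ref{app:eq_tiehardnorm}). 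Since $\ker A_{\mathrm{in}}\cap\ker A_{\mathrm{out}}=\{0\}$ trivially, Theorem~\ref{app:the_aliaspullbacknorm} yields that $\tie$ is a norm, provided each $N_r$ is a norm.

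The first step is therefore to verify that both role functionals are norms on $\R^{V\times d}$. For $N_{\mathrm{out}}$ this is immediate, since $\norm{\cdot}_\infty$ is the standard entrywise max norm. For $N_{\mathrm{in}}$:
\begin{itemize}
\item Homogeneity and nonnegativity follow from those of $\norm{\cdot}_2$.
\item The triangle inequality follows from $\norm{u_i+v_i}_2\le\norm{u_i}_2+\norm{v_i}_2$, after taking the maximum over $i$ on both sides.
\item Definiteness holds because $N_{\mathrm{in}}(U)=0$ forces every row $u_i$ to vanish.
\end{itemize}
Once $N_{\mathrm{in}}$ is definite, $\tie$ is also definite directly, so the common-kernel condition is not really needed.

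The second step identifies the unit ball. Again by Theorem~\ref{app:the_aliaspullbacknorm}, $\mathcal C_c$ is the intersection $\{U:N_{\mathrm{in}}(U)\le1\}\cap\{U:\norm{U}_\infty\le c\}$, because a maximum of two quantities is at most $1$ exactly when each is. Unwinding the definitions, the first set is $\norm{u_i}_2\le\sqrt d$ for all $i$, and the second is $|U_{ij}|\le c$ for all $i,j$. This gives (\ref{app:eq_tiehardball}).

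I expect no real obstacle. The only point needing care is the bookkeeping of the radius $c$, namely that $\norm{U}_\infty/c\le1$ is the same as $|U_{ij}|\le c$. I would also remark that the ball is a Cartesian product over rows, since each row carries its own constraints; this is the fact used later in the row-wise LMO.
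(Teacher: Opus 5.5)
Your proposal is correct and takes the same approach as the paper, which also invokes Theorem~\ref{app:the_aliaspullbacknorm} with identity role maps, norms $N_{\mathrm{in}}$ and $\norm{\cdot}_\infty$, radii $1$ and $c$, notes that the common kernel is trivially zero, and expands the two constraints. Your version only adds the explicit check that $N_{\mathrm{in}}$ is a norm and the remark that the ball is a Cartesian product over rows.
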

\begin{proof}
    Apply Theorem \ref{app:the_aliaspullbacknorm} with identity role maps, norms $N_{\mathrm{in}}$ and $\norm{\cdot}_\infty$, and radii $1$ and $c$. The common kernel is zero. Expanding the two constraints gives (\ref{app:eq_tiehardball}).
\end{proof}
The first constraint bounds row energy; the second caps coordinate magnitude. At $c=1$ the row constraint is redundant. For $c\ge\sqrt d$ the coordinate constraint is redundant. We use $c=\infty$ as shorthand for omitting the latter, with $N_{\mathrm{tie},\infty}=N_{\mathrm{in}}$. The cap changes shape only in the nontrivial range $1\le c<\sqrt d$; an external positive scalar changes the overall radius.\\

\section{Exact support-aware clipped-row oracle}
\label{app:sec_exactClippedRow}
The ball \ref{app:eq_tiehardball} is a Cartesian product across vocabulary rows, so it suffices to solve
\begin{equation}
\label{app:eq_optimizationprob}
\max_{u\in\R^d}\ip{b}{u}\quad\text{subject to}\quad\norm{u}_2\le\sqrt d,\; \norm{u}_\infty\le c.
%\tag{11}
\end{equation}
Let $S(b)=\supp b$, $k(b)=|S(b)|$, and $r_c^2(b)=\min\{d,c^2k(b)\}$. For $b\ne0$, choose $\tau_c(b)\ge0$ satisfying
\begin{equation}
\label{app:eq_conditiontau}
\sum_{j=1}^d\min\{c^2,\tau_c(b)^2b_j^2\}=r_c^2(b),
%\tag{12}
\end{equation}
and set $\tau_c(0)=0$. Define
\begin{equation}
\label{app:eq_LC}
[\LC(b)]_j=\sign(b_j)\min\{c,\tau_c(b)|b_j|\}.
%\tag{13}
\end{equation}
When $c^2k(b)\le d$ and $b\ne0$, every $\tau\ge c/\min_{j\in S(b)}|b_j|$ solves \ref{app:eq_conditiontau} and produces the same direction. When $c^2k(b)>d$, the root is unique.
\\
% \begin{definition}[Support-preserving maximizer]
% \label{app:def_support_preserving}
% Call a feasible $u$ for optimization problem \ref{app:eq_optimizationprob} \emph{support-preserving} if $\supp u=\supp b$; equivalently, $u$ places no energy on coordinates where the input vanishes. Uniqueness in the following Theorem \ref{app:the_exactsupportware} is asserted inside this class, and the restriction is not vacuous: when $c^2k(b)<d$ the supported coordinates do not exhaust the Euclidean budget, so any vector supported off $\supp b$ with small enough norm may be added without changing the objective or violating feasibility. All maximizers agree on $\supp b$;
% the canonical selection (\ref{app:eq_LC}) is the one that zeroes the remainder. This convention is what makes $\LC$ single-valued as a map, keeps it odd and positively scale invariant, and
% turns Theorem \ref{app:the-SignedPermutationEquivariance} into an equality of vectors rather than of argmax sets.
% \end{definition}
\begin{definition}[Support-preserving maximizer]
\label{app}
A feasible vector \(u\) for \ref{app:eq_optimizationprob} is called \emph{support-preserving} if $\supp u=\supp b$. Equivalently, $u$ places no energy on coordinates where the input vanishes. \\
When $c^2k(b)<d$, the supported coordinates do not exhaust the Euclidean budget, so any vector supported off $\supp b$ with small enough norm may be added without changing the objective or violating feasibility. All maximizers agree on $\supp b$;
the canonical selection (\ref{app:eq_LC}) is the one that zeroes the remainder. This convention is what makes $\LC$ single-valued as a map, keeps it odd and positively scale invariant.
\end{definition}
\vspace{0.2em}
\begin{theorem}[Exact support-aware clipped-row LMO]
%\label{app:A9}
\label{app:the_exactsupportware}
For every finite $c\ge1$, $\LC(b)$ maximizes objective \ref{app:eq_optimizationprob}. It is the unique support-preserving maximizer and satisfies
\begin{align}
\label{app:eq_supportLC}
&\supp\LC(b)=\supp b, \qquad \norm{\LC(b)}_2^2=\min\{d,c^2k(b)\}.
%\tag{14}
\end{align}
The zero input is interpreted as $\LC(0)=0$. More explicitly,
\begin{equation}
\label{app:eq_LC_sign}
c^2k(b)\le d\quad\Longrightarrow\quad\LC(b)=c\sign(b),
%\tag{15}
\end{equation}
whereas in the radius-limited case,
\begin{equation}
\label{app:eq_radiusLimitLC}
[\LC(b)]_j=\sign(b_j)\min\{c,\tau|b_j|\},\qquad
\sum_j\min\{c^2,\tau^2b_j^2\}=d.
%\tag{16}
\end{equation}
\end{theorem}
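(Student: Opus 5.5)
First reduce to the support and to the nonnegative orthant. For any feasible $u$, replacing $u_j$ by $\sign(b_j)|u_j|$ on $S(b)$ and by $0$ off $S(b)$ keeps feasibility, since both constraints depend only on $|u_j|$ and zeroing only shrinks norms. It also does not decrease $\ip{b}{u}$. Hence the optimal value equals that of
\[
\max\Big\{\textstyle\sum_{j\in S}|b_j|v_j:\ 0\le v_j\le c,\ \textstyle\sum_{j\in S}v_j^2\le d\Big\}.
\]
Moreover every maximizer must have $\sign(u_j)=\sign(b_j)$ wherever $u_j\ne0$ on $S$; otherwise flipping that coordinate strictly improves the objective. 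This already gives that all maximizers agree up to their behaviour on $S$, and that support-preserving maximizers are determined by $v$ on $S$.

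The proof then splits into two cases.

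\textbf{Case $c^2k\le d$.} The box $[0,c]^S$ lies inside the Euclidean ball, so the Euclidean constraint is inactive on $S$. The objective $\sum|b_j|v_j$ with all $|b_j|>0$ is uniquely maximized at $v_j=c$. This yields (\ref{app:eq_LC_sign}), $\norm{\LC(b)}_2^2=c^2k$, and uniqueness among support-preserving maximizers. It also confirms that any $\tau\ge c/\min_S|b_j|$ in (\ref{app:eq_conditiontau}) reproduces this vector.

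\textbf{Case $c^2k>d$.} First establish existence and uniqueness of the root. The map $\phi(\tau)=\sum_j\min\{c^2,\tau^2b_j^2\}$ is continuous and nondecreasing, with $\phi(0)=0$ and limit $c^2k>d$. It is strictly increasing wherever $\phi(\tau)<c^2k$, because some unclipped coordinate with $b_j\ne0$ then grows. This gives a unique $\tau>0$ with $\phi(\tau)=d$.

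Optimality comes from a KKT certificate. The problem is convex with a linear objective, and Slater holds (take $u=0$), so KKT conditions are sufficient. Set $u=\LC(b)$ and choose multipliers $\lambda=1/(2\tau)$ for the ball constraint and $\nu_j=[|b_j|-c/\tau]_+$ for the box constraints $|u_j|\le c$. Stationarity reads $b_j=2\lambda u_j+\nu_j\,\sign(u_j)$ on $S$. Verify it coordinatewise: on unclipped coordinates $u_j=\tau b_j$ and $\nu_j=0$; on clipped coordinates $|u_j|=c$ and $|b_j|=c/\tau+\nu_j$. Complementary slackness holds because $\norm{u}_2^2=d$ and $\nu_j>0$ only where the cap is active. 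Off $S$, stationarity holds trivially with $b_j=0$, taking $u_j=0$ and $\nu_j=0$.

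For uniqueness in this case, note that the Euclidean ball is strictly convex and the optimal face meets its boundary. Any two maximizers $u,u'$ have midpoint also optimal. If $u\ne u'$ and both lie on the sphere, the midpoint lies strictly inside the ball; I would then argue it can be improved by scaling up the unclipped coordinates, contradicting optimality. An alternative argument: all maximizers share the KKT multipliers of the dual solution, and $\lambda>0$ forces $u_j=\tau b_j$ whenever the cap is inactive. Clipped coordinates are pinned at $\pm c$. Coordinates with $|b_j|=c/\tau$ exactly are consistent with both descriptions.

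The main obstacle is making the uniqueness step airtight, in particular handling coordinates off $S$ and the tie coordinates. Off $S$, support preservation forces zeros. On $S$, I would use the strictly concave Lagrangian argument: with fixed $\lambda>0$, the function $u\mapsto\ip{b}{u}-\lambda\norm{u}_2^2$ restricted to the box has a unique maximizer, and every primal optimum maximizes it. Finally, the norm identity (\ref{app:eq_supportLC}) follows from the root equation in each case.
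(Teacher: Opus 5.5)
Your argument is correct. Up through optimality it matches the paper's proof: the same sign/support reduction, the same split at $c^2k(b)\le d$, the same monotonicity argument for the level-$d$ root of $\phi$, and a KKT argument in the radius-limited case.

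Where you differ is uniqueness. The paper uses KKT \emph{necessity}, which is what Slater is needed for. Every optimizer of the reduced problem satisfies stationarity with some multipliers. This forces $a_j>0$, $\lambda>0$ and $a_j=\min\{c,\tau q_j\}$ with $\tau=(2\lambda)^{-1}$. Activity of the ball constraint plus uniqueness of the root then pins down $\tau$, so the formula and its uniqueness come out of one derivation.

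You use only \emph{sufficiency}, with explicit multipliers $\lambda=1/(2\tau)$ and $\nu_j=[|b_j|-c/\tau]_+$. This direction needs no constraint qualification, so your appeal to Slater there is unnecessary. You then need a separate uniqueness argument.

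Your partial-Lagrangian option is the cleanest, and it closes in one line you left implicit. The box maximizer of the strictly concave map $u\mapsto\ip{b}{u}-\lambda\norm{u}_2^2$ is computed coordinatewise as $\operatorname{clip}_{[-c,c]}(\tau b)=\LC(b)$. Any optimal $u^\circ$ satisfies $\ip{b}{u^\circ}-\lambda\norm{u^\circ}_2^2\ge\ip{b}{\LC(b)}-\lambda\norm{\LC(b)}_2^2$, because the two objective values agree and $\norm{u^\circ}_2^2\le d=\norm{\LC(b)}_2^2$. Strict concavity then gives $u^\circ=\LC(b)$. The box-maximizer inequality combined with $\norm{u}_2^2\le d$ also gives $\ip{b}{u}\le\ip{b}{\LC(b)}$ for every feasible $u$, so optimality and uniqueness are a single computation.

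Because this runs over the full box in $\R^d$, it gives uniqueness among all maximizers, not only support-preserving ones. The off-support and tie coordinates you flagged as the main obstacle then take care of themselves. Off $S$, the term $-\lambda u_j^2$ forces $u_j=0$, and at $\tau|b_j|=c$ both descriptions give $c\sign(b_j)$.

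If you prefer the midpoint route, two fixes are needed. First show that \emph{every} maximizer has $\norm{u}_2^2=d$; this is the paper's activity argument. Second, improve a non-saturated point by raising some supported coordinate with $|u_j|<c$ toward $c\sign(b_j)$, not by "scaling up", since such coordinates could be zero.

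In short, the paper's route derives the formula rather than guessing it. Yours gives a pure verification plus a slightly stronger and more transparent uniqueness statement: $\LC(b)$ is the clip of $\tau b$ to the cap box.
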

\begin{proof}
     For $b=0$, zero is the unique support-preserving feasible vector and is optimal. Let $b\ne0$. Aligning each supported coordinate with $b_j$ cannot decrease the objective and preserves feasibility. Setting coordinates outside $S(b)$ to zero also preserves feasibility and the objective. Write $q_j=|b_j|>0$ on $S(b)$ and $u_j=\sign(b_j)a_j$. The supported problem is
\begin{equation}
\label{app:eq_supportedprob}
\max_a\sum_{j\in S(b)}q_ja_j\quad\text{subject to}\quad
\sum_{j\in S(b)}a_j^2\le d,\quad0\le a_j\le c.
%\tag{17}
\end{equation}
If $c^2k(b)\le d$, the vector $a_j=c$ is feasible and maximizes every positive term individually, proving equation \ref{app:eq_LC_sign}. These supported coordinates are uniquely determined. If $c^2k(b)<d$ and zero coordinates exist, the unrestricted problem can also have off-support components using the remaining radius; the canonical selection excludes them. If $c^2k(b)=d$, no off-support energy remains.

If $c^2k(b)>d$, the Euclidean constraint must be active; otherwise at least one coordinate is below its cap and can be increased slightly to improve the objective while remaining feasible. We establish the unique optimizer for this case next.

Problem \ref{app:eq_supportedprob}, with negated objective, is convex and strictly feasible by taking all supported $a_j$ sufficiently small and positive. KKT conditions are therefore necessary and sufficient. With multipliers $\lambda\ge0$ for $\sum a_j^2\le d$, $\alpha_j\ge0$ for $-a_j\le0$, and $\beta_j\ge0$ for $a_j-c\le0$, stationarity is
\begin{equation}
\label{app:eq_stationarycondition}
-q_j+2\lambda a_j-\alpha_j+\beta_j=0.
%\tag{18}
\end{equation}
No optimal $a_j$ can be zero: then $\beta_j=0$ and \ref{app:eq_stationarycondition} would imply $-q_j-\alpha_j=0$. Thus $a_j>0$ and $\alpha_j=0$. Also $\lambda>0$: if $\lambda=0$, then $\beta_j=q_j>0$ forces all $a_j=c$, contradicting $c^2k(b)>d$.

For $a_j<c$, equation \ref{app:eq_stationarycondition} gives $a_j=q_j/(2\lambda)$. For $a_j=c$, it gives $q_j/(2\lambda)\ge c$. Hence
\begin{equation}
\label{app:eq_aj}
a_j=\min\{c,\tau q_j\},\qquad\tau=(2\lambda)^{-1}>0.
%\tag{19}
\end{equation}
The active radius constraint is \ref{app:eq_radiusLimitLC}. The function $\phi(\tau)=\sum_{j\in S(b)}\min\{c^2,\tau^2q_j^2\}$ is continuous, equals zero at zero, and reaches $c^2k(b)>d$. It is strictly increasing before all terms saturate: if $0\le\tau_1<\tau_2$ and $\phi(\tau_1)<c^2k(b)$, at least one unsaturated term increases. Thus its level-$d$ root exists and is unique. Equations \ref{app:eq_stationarycondition}--\ref{app:eq_aj} and complementary slackness certify optimality. All supported coordinates are positive, and their squared norm is $d$; off-support motion is then impossible. This proves \ref{app:eq_supportLC} and uniqueness.
\end{proof}

\begin{corollary}[Full-table LMO]
 Set $[D_{\mathrm{tie},c}(B)]_{i,:}=\LC(b_i)^\top$. Then
\begin{align*}
    D_{\mathrm{tie},c}(B)\in\argmax_{\tie(U)\le1}\ip{B}{U}. 
\end{align*}
\end{corollary}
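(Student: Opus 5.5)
The plan is to reduce the full-table problem to independent row problems and then invoke Theorem~\ref{app:the_exactsupportware} on each row. First, I would rewrite the constraint $\tie(U)\le1$ using the Hard-tied intersection geometry proposition. Its unit ball is (\ref{app:eq_tiehardball}), namely $\mathcal C_c=\{U:\norm{u_i}_2\le\sqrt d,\ |U_{ij}|\le c\ \forall i,j\}$. The key observation is that every defining inequality involves the entries of a single row only. Hence $\mathcal C_c=\prod_{i=1}^V\mathcal B_c$, where $\mathcal B_c=\{u\in\R^d:\norm{u}_2\le\sqrt d,\ \norm{u}_\infty\le c\}$ is the feasible set of (\ref{app:eq_optimizationprob}).

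Second, I would decompose the objective row by row: $\ip{B}{U}=\operatorname{tr}(B^\top U)=\sum_{i=1}^V\ip{b_i}{u_i}$. Over a Cartesian product, a separable sum is maximized by maximizing each summand independently. Concretely, for any feasible $U$ we have $\ip{B}{U}=\sum_i\ip{b_i}{u_i}\le\sum_i\max_{u\in\mathcal B_c}\ip{b_i}{u}$. Moreover, the upper bound is attained by any $U$ whose rows are individual maximizers, and such a $U$ is feasible precisely because the constraint set is a product.

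Third, I would apply Theorem~\ref{app:the_exactsupportware} to each row $b_i$, including the convention $\LC(0)=0$ for zero rows. This gives $\LC(b_i)\in\mathcal B_c$ and $\ip{b_i}{\LC(b_i)}=\max_{u\in\mathcal B_c}\ip{b_i}{u}$. Stacking these rows as $[D_{\mathrm{tie},c}(B)]_{i,:}=\LC(b_i)^\top$ produces a matrix in $\mathcal C_c$, so $\tie(D_{\mathrm{tie},c}(B))\le1$, and by the second step it attains the maximum. This proves $D_{\mathrm{tie},c}(B)\in\argmax_{\tie(U)\le1}\ip{B}{U}$. A by-product is the dual-norm formula $N^*_{\mathrm{tie},c}(B)=\sum_i\ip{b_i}{\LC(b_i)}$.

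No step is genuinely hard. The only point needing care is bookkeeping. The row vectors $u_i$ are transposes of the rows of $U$, so the trace inner product splits into row inner products with matching orientation. The argmax is in general a set rather than a singleton, for example because of off-support freedom when $c^2k(b_i)<d$. This is why the corollary claims only membership in the argmax, and the row-wise theorem supplies exactly that.
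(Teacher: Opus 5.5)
Your proposal is correct and matches the paper's proof, which is the one-line observation that both the feasible set $\mathcal C_c$ and the objective $\ip{B}{U}=\sum_i\ip{b_i}{u_i}$ separate by row, with Theorem~\ref{app:the_exactsupportware} then applied to each row. Your write-up simply spells out the product-set and attainment steps explicitly.
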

\begin{proof}
    Both feasibility and the objective $\ip{B}{U}=\sum_i\ip{b_i}{u_i}$ separate by row.
\end{proof}
\begin{corollary}[Endpoints]
    With $\sign(0)=0$, $L_1(b)=\sign(b)$. For $c\ge\sqrt d$ (including the shorthand $c=\infty$),
    \begin{align*}
        \LC(b)=\begin{cases}\sqrt d\,b/\norm{b}_2=b/\RMS(b),&b\ne0,\\0,&b=0.\end{cases}
    \end{align*}
\end{corollary}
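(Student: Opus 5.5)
Both endpoint formulas follow from Theorem~\ref{app:the_exactsupportware}, specialised to the two regimes it distinguishes: the cap-limited case $c^2k(b)\le d$ and the radius-limited case $c^2k(b)>d$. The zero input is handled by the convention $\LC(0)=0$, which matches $\sign(0)=0$ and the stated zero branch. So I would assume $b\ne0$ throughout, with support $S(b)$ of size $k=k(b)\ge1$, and treat the two endpoints separately.

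For $c=1$, I split on $k$. If $k\le d$, then $c^2k\le d$, and equation~(\ref{app:eq_LC_sign}) gives $L_1(b)=\sign(b)$ immediately. Since $k\le d$ always holds for $b\in\R^d$, this case covers everything, and no root-finding is needed. As a cross-check, note that $\norm{u}_\infty\le1$ already implies $\norm{u}_2^2\le d$, so the row constraint is redundant. The problem is then the pure max-entry LMO of Appendix~\ref{app:sec_setting}, which is the sign map with zeros off the support.

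For $c\ge\sqrt d$, I again split on $k$. If $c^2k\le d$, then $c^2\le d/k\le d$, so $c\le\sqrt d$; combined with $c\ge\sqrt d$ this forces $c=\sqrt d$ and $k=1$. In that case $\LC(b)=\sqrt d\,\sign(b)$, a single nonzero coordinate of size $\sqrt d$, which equals $\sqrt d\,b/\norm{b}_2$ directly.

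Otherwise $c^2k>d$, and we are in the radius-limited case~(\ref{app:eq_radiusLimitLC}). Take the candidate $\tau=\sqrt d/\norm{b}_2$. Every coordinate then satisfies $\tau|b_j|\le\tau\norm{b}_2=\sqrt d\le c$, so no clipping occurs. The sum becomes $\sum_j\tau^2b_j^2=d$, so $\tau$ solves the defining equation. By the uniqueness of the root in this case (proved in the theorem), $\tau$ is the root, and therefore $\LC(b)=\sqrt d\,b/\norm{b}_2=b/\RMS(b)$.

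For the shorthand $c=\infty$, the cap is absent, and the problem reduces to the RMS LMO of Appendix~\ref{app:sec_setting} via Cauchy--Schwarz.

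I expect no real obstacle here. The one point that needs care is the boundary case $c=\sqrt d$, $k=1$, which falls into the cap-limited branch rather than the radius-limited one. It has to be checked separately, as above, so that the two formulas are seen to agree there.
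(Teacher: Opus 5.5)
Your proof is correct, but it argues differently from the paper.

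The paper works at the level of the feasible set. At $c=1$ the box $\norm{u}_\infty\le 1$ already forces $\norm{u}_2\le\sqrt d$. For $c\ge\sqrt d$ the chain $\norm{u}_\infty\le\norm{u}_2\le\sqrt d\le c$ makes the cap redundant. So $\mathcal C_c$ coincides with the max-entry ball or the row-RMS ball, and the oracle reduces to the standard Sign and RMS LMOs of Appendix~\ref{app:sec_setting}.

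You instead specialise the explicit output of Theorem~\ref{app:the_exactsupportware}. You split on $c^2k(b)$ versus $d$, check that $\tau=\sqrt d/\norm{b}_2$ solves the radius equation with no coordinate clipped, and invoke uniqueness of that root. You use the redundancy observation only as a cross-check at $c=1$. You also correctly fall back on Cauchy--Schwarz for $c=\infty$, which the theorem (stated for finite $c$) does not cover.

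The paper's route is shorter and needs no case analysis on support size. However, passing from ``same feasible set'' to an identity of maps implicitly uses the theorem's uniqueness of the support-preserving maximizer. At $c=1$, for sparse $b$, the argmax over the box is not unique off $\supp b$.

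Your route verifies the defining formula (\ref{app:eq_LC}) directly, so it never has to match a selection among maximizers. The price is the boundary case $c=\sqrt d$, $k(b)=1$, which falls in the support-limited branch; you identify and handle it correctly.
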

\begin{proof}
    At $c=1$, the box already implies the row-radius bound. At $c\ge\sqrt d$, $\norm{u}_\infty\le\norm{u}_2\le\sqrt d\le c$, so the cap is redundant.\\
    The canonical map is odd and positively scale invariant: $\LC(-b)=-\LC(b)$ and $\LC(\alpha b)=\LC(b)$ for $\alpha>0$.
\end{proof}

\textbf{Numerical implementation.} Use \ref{app:eq_LC_sign} directly for support-limited rows. Otherwise bracket the unique root of \ref{app:eq_radiusLimitLC}, doubling an upper endpoint until $\phi(\tau_{\rm hi})\ge d$, and bisect. Returning the lower endpoint preserves feasibility in exact arithmetic. A finite solve is approximate; floating-point overflow, feasibility, and objective error must be controlled rather than inferred from Theorem \ref{app:the_exactsupportware}. Appendix \ref{app:sec-convergenceAFMuon} states the general framework into which these errors enter.

\paragraph{The tied update in one display.} Collecting the pieces, the stored table is updated from a single merged buffer $B_t$ by
\[
E_{t+1}=E_t-\eta_{\mathrm{tie},t}\,D_{\mathrm{tie},c}(B_t),
\qquad
[D_{\mathrm{tie},c}(B_t)]_{i,:}=\LC(b_{t,i})^\top ,
\]
with $\LC$ as in (\ref{app:eq_LC}). There is no projection, no split of $B_t$ into lookup and head parts, and no second buffer; the role structure enters only through the feasible set $\mathcal C_c$ over which $\LC$ optimizes. Rows with $b_{t,i}=0$ receive no update, which is the support-awareness distinguishing $\LC$ from a row-RMS normalization applied after densification.

%=============================
% Fifth Section
%=============================

\section{Dual geometry, objective retention, and concentration control}
\label{app:sec_geo-rete-concen}
We characterize the support function optimized by the exact oracle, its same-input objective relative to row-RMS, and its support-dependent concentration bound. The convex-duality identities used here are classical; the formulas below specialize them to the chosen tied-table geometry.

\subsection{The tied dual norm as an infimal convolution}
The row norm $N_{\mathrm{in}}$ has dual $N_{\mathrm{in}}^*(X)=\sqrt d\sum_{i=1}^V\norm{x_i}_2$, since its ball is a product of radius-$\sqrt d$ Euclidean balls. The dual of $\norm{\cdot}_\infty/c$ is $c\norm{\cdot}_1$.

\begin{theorem}[Tied dual as infimal convolution]
For $B\in\R^{V\times d}$ and finite $c\ge1$,
\begin{equation}
\label{app:eq_dual_infimal_convolution}
N_{\mathrm{tie},c}^*(B)=\min_{X+Y=B}
\left[\sqrt d\sum_{i=1}^V\norm{x_i}_2+c\norm{Y}_1\right].
%\tag{20}
\end{equation}
\end{theorem}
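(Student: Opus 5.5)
The plan is to specialize Theorem~\ref{app:the_alias_aware_dual} to the hard-tied geometry of Appendix~\ref{app:sec-hardtiedtables}. Take $R=2$, $\mathcal W=\mathcal X_1=\mathcal X_2=\R^{V\times d}$ with the trace inner product, and $A_1=A_2=\mathrm{Id}$. For the first role use $N_1=N_{\mathrm{in}}$ with $\rho_1=1$; for the second use $N_2=\norm{\cdot}_\infty$ with $\rho_2=c$. With these choices $N_{\mathrm{alias}}=\tie$, by (\ref{app:eq_tiehardnorm}) and the preceding proposition. The common kernel is $\ker\mathrm{Id}=\{0\}$, so the hypothesis of Theorem~\ref{app:the_alias_aware_dual} holds. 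Since the adjoints are identities, its conclusion reads
\[
N_{\mathrm{tie},c}^*(B)=\min_{X+Y=B}\bigl[1\cdot N_{\mathrm{in}}^*(X)+c\,\norm{\cdot}_\infty^*(Y)\bigr],
\]
with the minimum attained.

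It then remains to identify the two role duals. For the entrywise max norm, the standard LMO list in Appendix~\ref{app:sec_setting} gives $\norm{\cdot}_\infty^*=\norm{\cdot}_1$, entrywise. For $N_{\mathrm{in}}$, the unit ball is the Cartesian product over rows of Euclidean balls of radius $\sqrt d$. The support function of a product set is the sum of the row support functions, and each row contributes $\sup_{\norm{u_i}_2\le\sqrt d}\ip{x_i}{u_i}=\sqrt d\norm{x_i}_2$ by Cauchy--Schwarz, with equality at $u_i=\sqrt d\,x_i/\norm{x_i}_2$. Hence $N_{\mathrm{in}}^*(X)=\sqrt d\sum_i\norm{x_i}_2$. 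Substituting both duals yields (\ref{app:eq_dual_infimal_convolution}).

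As a sanity check, and to make the argument less reliant on the general theorem, one can verify the inequality $\le$ directly. For any split $X+Y=B$ and any $U\in\mathcal C_c$,
\[
\ip{B}{U}=\ip{X}{U}+\ip{Y}{U}\le\sqrt d\sum_i\norm{x_i}_2+c\norm{Y}_1,
\]
using the row bound for the first term and the entry cap for the second. The reverse inequality and attainment are the substantive content, and they come from the qualification condition in Theorem~\ref{app:the_alias_aware_dual}: both balls contain a neighbourhood of zero.

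The only potential obstacle is bookkeeping, namely confirming that radius $c$ on $\norm{\cdot}_\infty$ produces the factor $c$ on $\norm{Y}_1$ rather than $1/c$. This follows from $\sigma_{\{\norm{X}_\infty\le c\}}(Y)=c\norm{Y}_1$. An alternative route would be to exhibit an explicit optimal split from the KKT multipliers of Theorem~\ref{app:the_exactsupportware}, but that is unnecessary here.
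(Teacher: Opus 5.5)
Your proposal is correct and matches the paper's proof: specialize Theorem~\ref{app:the_alias_aware_dual} with identity role maps, norms $N_{\mathrm{in}}$ and $\norm{\cdot}_\infty$, and radii $1$ and $c$, then substitute the row-product dual $\sqrt d\sum_i\norm{x_i}_2$ and $c\norm{Y}_1$. Your direct check of the $\le$ direction and your note on the radius bookkeeping are harmless extras that the paper does not spell out.
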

\begin{proof}
Apply Theorem \ref{app:the_alias_aware_dual} with identity role maps, norms $N_{\mathrm{in}}$ and $\norm{\cdot}_\infty$, and radii $1$ and $c$. Its constraint is $X+Y=B$, and attainment follows from that theorem.
\end{proof}

The tied geometry prices the merged momentum by its cheapest split into a row-structured component \(X\) (group-\(\ell_{2,1}\) cost) and an entrywise-spiky component \(Y\) (\(\ell_1\) cost). This is an implicit geometric decomposition: AF-Muon never stores or solves for \(X, Y\), and does not claim they equal the actual lookup and softmax gradients; the primal clipped-row oracle attains the same optimal value with one merged buffer. The infimal-convolution identity itself is classical (\cite{Tyrrell1970convex}); the novelty is its role as the exact dual of the tied-table geometry.

\subsection{Explicit row support function and objective retention}
Define the row support function 
$$h_c(b) := \max\{\langle b, u\rangle : \|u\|_2 \le \sqrt d, \|u\|_\infty \le c\};$$
by row-Cartesian structure, 
$$N_{\mathrm{tie},c}^*(B) = \sum_{i=1}^{V} h_c(b_i).$$
With the capped active set \(A = \{j : \tau_c(b)|b_j| \ge c\}\), Theorem \ref{app:the_exactsupportware} evaluates it in closed form: in a radius-limited row,
\[
\tau_c(b) = \sqrt{\frac{r_c^2(b) - c^2 |A|}{\sum_{j \notin A} b_j^2}},
\qquad
h_c(b) = c \sum_{j \in A} |b_j| + \tau_c(b) \sum_{j \notin A} b_j^2 ;
\]
in the support-limited case \(c^2 k(b) \le d\), \(h_c(b) = c\,\|b\|_1\). At the endpoints, \(h_1(b) = \|b\|_1\) and \(h_\infty(b) = \sqrt d\, \|b\|_2\), matching the Sign and row-RMS duals.

\paragraph{Determining the active set.} The pair $(A,\tau_c(b))$ above is characterized jointly, not sequentially: $A$ is defined through $\tau_c(b)$ and $\tau_c(b)$ through $A$. In a radius-limited row, they can be identified in closed form by sorting. 
Since $a_j=\min\{c,\tau q_j\}$ in (\ref{app:eq_aj}) saturates exactly when $q_j\ge c/\tau$, the capped coordinates are the ones of largest magnitude. Order the support so that $|b_{(1)}|\ge\cdots\ge|b_{(k)}|$, so that $A=\{(1),\dots,(m)\}$ for some $0\le m<k$, and for each candidate $m$ put
\[
\tau^{(m)}=\sqrt{\frac{d-c^2m}{\sum_{j>m}b_{(j)}^2}} .
\] 
Accept the unique $m$ satisfying $\tau^{(m)}|b_{(m)}|\ge c>\tau^{(m)}|b_{(m+1)}|$, with the conventions $|b_{(0)}|=+\infty$ and $|b_{(k+1)}|=0$. 
This is an exact $O(d\log d)$ alternative to the bisection of Appendix \ref{app:sec_exactClippedRow}; bisection is preferred in the implementation because it batches across rows and avoids the sort.

% Define the same-state \textbf{row-RMS objective retention}
% \[
% R_c(B) := \frac{\langle B,\, D_{\mathrm{tie},c}(B)\rangle}{\langle B,\, D_{\mathrm{tie},\infty}(B)\rangle},
% \]
% whenever the denominator is nonzero.

\begin{proposition}[Objective retention is a dual-norm ratio]
\label{app:prop-objectiveRetentionDualNorm}
For $B\ne0$, the same-input row-RMS retention is
\[
\mathcal R_c(B):=\frac{\ip{B}{D_{\mathrm{tie},c}(B)}}{\ip{B}{D_{\mathrm{tie},\infty}(B)}}
=\frac{N_{\mathrm{tie},c}^*(B)}{\sqrt d\sum_i\norm{b_i}_2}.
\]
\end{proposition}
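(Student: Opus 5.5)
The plan is to evaluate the numerator and the denominator separately as support functions and then take their ratio. Start with the numerator. By the Full-table LMO corollary, $D_{\mathrm{tie},c}(B)$ attains $\max_{\tie(U)\le1}\ip{B}{U}$. By the definition of the dual norm in Appendix~\ref{app:sec_setting}, this maximum equals $N_{\mathrm{tie},c}^*(B)$. Hence $\ip{B}{D_{\mathrm{tie},c}(B)}=N_{\mathrm{tie},c}^*(B)$. Equivalently, by row separability it equals $\sum_i h_c(b_i)$, and for each row Theorem~\ref{app:the_exactsupportware} certifies that $\LC(b_i)$ attains $h_c(b_i)$. The canonical support-preserving selection does not affect the value, since all maximizers share the optimal objective.

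Next, the denominator. For the shorthand $c=\infty$ (that is, $c\ge\sqrt d$), the Endpoints corollary gives $\LC(b)=\sqrt d\,b/\norm{b}_2$ for $b\ne0$, and $0$ for $b=0$. So each row contributes $\ip{b_i}{\sqrt d\,b_i/\norm{b_i}_2}=\sqrt d\norm{b_i}_2$, and zero rows contribute $0=\sqrt d\norm{0}_2$. Summing over rows gives $\ip{B}{D_{\mathrm{tie},\infty}(B)}=\sqrt d\sum_i\norm{b_i}_2$. This agrees with the stated dual $N_{\mathrm{in}}^*$ and with $h_\infty(b)=\sqrt d\norm{b}_2$.

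Finally, check that the ratio is well defined. Since $B\ne0$, some row satisfies $b_i\ne0$, so the denominator is strictly positive. Dividing the two expressions then yields the claim. The only subtle point, a mild one, is making sure the $c=\infty$ convention is handled consistently: the cap is redundant, so $N_{\mathrm{tie},\infty}=N_{\mathrm{in}}$, and the oracle value is exactly the dual norm of $N_{\mathrm{in}}$. Beyond that the argument is bookkeeping with results already established.
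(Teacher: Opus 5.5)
Your proof is correct and matches the paper's argument: the numerator is the dual norm by the exact LMO identity (the Full-table LMO corollary), and at $c=\infty$ the denominator decouples row by row into $\sqrt d\sum_i\norm{b_i}_2$. Your explicit check that this denominator is strictly positive when $B\ne0$ is a small addition that the paper leaves implicit.
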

\begin{proof}
The numerator is the dual norm by the exact LMO identity: 
$$\langle B, D_{\mathrm{tie},c}(B)\rangle = N_{\mathrm{tie},c}^*(B).$$ At \(c = \infty\) the rows decouple into radius-\(\sqrt d\) Euclidean balls, so $$\langle B, D_{\mathrm{tie},\infty}(B)\rangle = \sqrt d \sum_i \|b_i\|_2.$$
\end{proof}

If $c_1\le c_2$, then $\mathcal C_{c_1}\subseteq\mathcal C_{c_2}$, so retention is nondecreasing in $c$. More precisely,
\[
\frac1{\sqrt d}\le\mathcal R_1(B)=\frac{\norm{B}_1}{\sqrt d\sum_i\norm{b_i}_2}
\le\mathcal R_c(B)\le1.
\]
These are fixed-input geometric comparisons, not predictions of future training loss.

\subsection{Support-aware concentration (PAPR) bounds}
For a nonzero row direction, define
\[
\PAPR(u):=\frac{d\norm{u}_\infty^2}{\norm{u}_2^2}.
\]
This ratio is unchanged by nonzero scalar multiplication. It always lies in $[1,d]$; if $u$ has $k$ nonzero entries, it is at least $d/k$.

\textit{Intuition Behind it. }The acronym abbreviates \emph{peak-to-average power ratio}, borrowed from signal processing:
reading $u$ as a length-$d$ signal, $\norm{u}_2^2/d$ is its mean squared coordinate (average power)
and $\norm{u}_\infty^2$ its largest (peak), so $\PAPR(u)$ measures how far one coordinate dominates
the row. A flat row has $\PAPR=1$; a row whose energy sits in a single coordinate has $\PAPR=d$.
This is the quantity the cap is designed to control, since a high-PAPR update row displaces one
embedding coordinate, and correspondingly one logit direction, far more than the row's overall
size suggests.

\begin{proposition}[Support-aware PAPR bound]
\label{app:prop_SupportAwarePAPRBound}
For finite $c\ge1$ and $u=\LC(b)\ne0$,
\begin{equation}
\label{app:ineq_papr}
\PAPR(u)\le\frac{dc^2}{\min\{d,c^2k(b)\}}.
%\tag{21}
\end{equation}
In particular,
\begin{equation}
\label{app:ineq_papr_c}
c^2k(b)\ge d\quad\Longrightarrow\quad\PAPR(u)\le c^2,
%\tag{22}
\end{equation}
where the universal upper bound $d$ can further sharpen this to $\min\{c^2,d\}$. In the support-limited case,
\begin{equation}
\label{app:ineq_papr_sharpen}
c^2k(b)\le d\quad\Longrightarrow\quad\PAPR(u)=\frac{d}{k(b)}.
%\tag{23}
\end{equation}
At $c^2k(b)=d$, the formulas coincide.
\end{proposition}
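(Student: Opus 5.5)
The proof rests on the two facts already established in Theorem~\ref{app:the_exactsupportware}. First, every coordinate of $u=\LC(b)$ has magnitude at most $c$, so $\norm{u}_\infty\le c$. Second, $\norm{u}_2^2=\min\{d,c^2k(b)\}$, and this is nonzero whenever $u\neq0$, since $b\neq0$ forces $k(b)\ge1$. Substituting both into the definition $\PAPR(u)=d\norm{u}_\infty^2/\norm{u}_2^2$ gives
\[
\PAPR(u)\le\frac{dc^2}{\min\{d,c^2k(b)\}},
\]
which is (\ref{app:ineq_papr}). No further optimization argument is needed; the bound is just the cap applied in the numerator and the exact energy in the denominator.

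Next we split on which term attains the minimum.

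\emph{Case $c^2k(b)\ge d$.} The denominator equals $d$, so the bound becomes $c^2$, which is (\ref{app:ineq_papr_c}). Combined with the universal bound $\PAPR(u)\le d$ (which holds because $\norm{u}_\infty\le\norm{u}_2$), this sharpens to $\min\{c^2,d\}$.

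\emph{Case $c^2k(b)\le d$.} This is the support-limited regime. The inequality is not enough here; we need equality, so we use the explicit form (\ref{app:eq_LC_sign}): $u=c\,\sign(b)$. Then $\norm{u}_\infty=c$ and $\norm{u}_2^2=c^2k(b)$, so $\PAPR(u)=d/k(b)$ exactly, which is (\ref{app:ineq_papr_sharpen}).

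At $c^2k(b)=d$ we have $d/k(b)=c^2$, so both formulas give the same value.

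There is no real obstacle. The only care points are checking $u\ne0$ so the ratio is defined, and noting that the equality in the support-limited case relies on the canonical support-preserving selection, which places no energy off $\supp b$.
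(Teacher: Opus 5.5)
Your proof is correct and follows the paper's own argument: bound the numerator by the cap $c^2$, insert the exact energy $\min\{d,c^2k(b)\}$ from Theorem~\ref{app:the_exactsupportware} in the denominator, and get equality in the support-limited case from $u=c\,\sign(b)$. Your extra checks are accurate and slightly more explicit than the paper's proof: that $u\ne0$ makes the ratio well defined, the universal bound $d$ via $\norm{u}_\infty\le\norm{u}_2$, and the reliance on the canonical support-preserving selection.
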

\begin{proof}
The coordinate constraint gives $\norm{u}_\infty^2\le c^2$ and Theorem \ref{app:the_exactsupportware} gives $\norm{u}_2^2=\min\{d,c^2k(b)\}$. Substitution proves \ref{app:ineq_papr}--\ref{app:ineq_papr_c}. In the support-limited case, precisely $k(b)$ entries have magnitude $c$, giving \ref{app:ineq_papr_sharpen}. The general lower bound follows from $\norm{u}_2^2\le k\norm{u}_\infty^2$.
\end{proof}
The support condition in \ref{app:ineq_papr} is essential. For example, with $d=16$, $c=2$, and $b=e_1$, the canonical update is $2e_1$ and its PAPR is $16$, not at most $c^2=4$. The cap controls absolute coordinate size but cannot create additional supported coordinates. At $c=1$, a dense Sign row has PAPR one; a sparse Sign row has PAPR $d/k(b)$. Row-RMS has
$$
\PAPR(L_\infty(b))=d\frac{\norm{b}_\infty^2}{\norm{b}_2^2},
$$
which reaches $d$ for a one-coordinate input and approaches $d$ for dense inputs dominated by one coordinate.

\textbf{Shape versus scale.} For a positive scalar $s$,
\[
s\mathcal C_c=\{U:\norm{u_i}_2\le s\sqrt d\ \forall i,\ |U_{ij}|\le sc\ \forall i,j\}.
\]
Scaling the completed direction therefore rescales the row radius and cap together, preserving their ratio and PAPR. For a support-sufficient row the scaled energy is $s^2d$; for a support-limited row it is $s^2c^2k(b)$. A uniform scale does not force all nonzero rows to have equal energy when their supports differ.

AF-Muon applies $s$ once through the tied learning rate in main-text equation \ref{eq:tied-lr},
\[
\eta_{\mathrm{tie},t}=\eta_{M,t}\frac{\rho_{\mathrm{out}}}{\rho_{\mathrm{hid}}}\frac{s}{d}.
\]
The width factor and radius ratio are design choices motivated by norm-based scaling and evaluated empirically. Neither their numerical values nor width-independent convergence constants follow from the alias-pullback theorem.

% The support condition is essential: the simplified ceiling \(c^2\) is false for sufficiently sparse rows. At the endpoints, a dense Sign row has PAPR one, whereas row-RMS can have PAPR arbitrarily close to \(d\) as one coordinate dominates.

% The cap \(c\) fixes the normalized feasible-set shape; a scalar \(s\) sets the update magnitude: \(s\,\mathcal{C}_c = \{U : \|u_i\|_2 \le s\sqrt d,\ |U_{ij}| \le sc\}\), so scaling the completed direction rescales row radius and cap together, preserving their ratio. AF-Muon applies \(s\) exactly once, through the tied learning rate \(\eta_{\mathrm{tie},t} = \eta_{M,t} (\rho_{\mathrm{out}}/\rho_{\mathrm{hid}})\, s/d\). The \(1/d\) rule and radius ratios are motivated by SCION/modular-norm scaling and calibrated empirically; they are not derived uniquely from the alias-pullback theorem.

%=============================
% Sixth Section
%=============================

\section{Symmetry and its limitation}
\label{app:sec_symmetry-nogo}

The oracle's symmetry class both certifies its invariances and delimits its relationship to fully rotation-equivariant designs such as the LPRO class of \cite{lau2026symmetry}. The statements here concern equivariance of the oracle as a map on its input. They do not establish that the full network, loss, data, or numerical implementation has the same symmetry. We prove the positive equivariance result, record the exact range of the negative result, and then show that the negative result is structural, not accidental.

\begin{theorem}[Signed-permutation equivariance]
\label{app:the-SignedPermutationEquivariance}
Let \(P \in \mathbb{R}^{V \times V}\) be a vocabulary permutation and \(Q \in \mathbb{R}^{d \times d}\) a signed permutation (hyperoctahedral group \(H_d\)). Then
\[
D_{\mathrm{tie},c}(P B Q^\top) = P\, D_{\mathrm{tie},c}(B)\, Q^\top .
\]
\end{theorem}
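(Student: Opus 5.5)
The plan is to reduce the statement to two separate facts, row-wise and column-wise, and check each one directly from the closed form (\ref{app:eq_LC}). By the Full-table LMO corollary, $D_{\mathrm{tie},c}$ acts row by row: $[D_{\mathrm{tie},c}(B)]_{i,:}=\LC(b_i)^\top$.

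The vocabulary permutation $P$ only reorders the rows of $B$. Row $i$ of $PB$ is the row $b_{\pi(i)}$ for the underlying permutation $\pi$. Applying $\LC$ to each row and then reordering gives the same result as reordering first. Hence $D_{\mathrm{tie},c}(PB)=P\,D_{\mathrm{tie},c}(B)$. It therefore suffices to prove the single-row identity $\LC(Qb)=Q\,\LC(b)$ for every $Q\in H_d$. Row $i$ of $BQ^\top$ is $(Qb_i)^\top$, so this identity yields $D_{\mathrm{tie},c}(BQ^\top)=D_{\mathrm{tie},c}(B)Q^\top$.

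For the row identity, I write $Q=\Sigma\Pi$ with $\Sigma=\operatorname{diag}(\epsilon)$, $\epsilon_j\in\{\pm1\}$, and $\Pi$ a permutation matrix, so that $(Qb)_j=\epsilon_j b_{\pi^{-1}(j)}$. The multiset $\{|(Qb)_j|\}$ equals $\{|b_j|\}$. Consequently $k(Qb)=k(b)$, $r_c^2(Qb)=r_c^2(b)$, and the left side of (\ref{app:eq_conditiontau}) is the same function of $\tau$ for $b$ and for $Qb$. I then split into the two cases of Theorem \ref{app:the_exactsupportware}.
\begin{itemize}
\item In the radius-limited case $c^2k(b)>d$, the root is unique, so $\tau_c(Qb)=\tau_c(b)$.
\item In the support-limited case $c^2k(b)\le d$, the root is not unique. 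Here I use the explicit form (\ref{app:eq_LC_sign}), $\LC(b)=c\,\sign(b)$, which does not depend on $\tau$.
\end{itemize}
With a common $\tau$ in hand, we have $\sign((Qb)_j)=\epsilon_j\sign(b_{\pi^{-1}(j)})$ and $\min\{c,\tau|(Qb)_j|\}=\min\{c,\tau|b_{\pi^{-1}(j)}|\}$. Multiplying these gives $[\LC(Qb)]_j=\epsilon_j[\LC(b)]_{\pi^{-1}(j)}=[Q\LC(b)]_j$. The cases $b=0$ and the convention $\sign(0)=0$ are compatible, since $Q$ fixes $0$ and preserves the support.

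An alternative argument avoids formulas altogether. $Q$ is an isometry for both $\|\cdot\|_2$ and $\|\cdot\|_\infty$, so it maps the feasible set of (\ref{app:eq_optimizationprob}) onto itself. It also satisfies $\ip{Qb}{Qu}=\ip{b}{u}$ and maps $\supp b$ onto $\supp Qb$. Therefore $Q\LC(b)$ is a support-preserving maximizer for the input $Qb$. The uniqueness part of Theorem \ref{app:the_exactsupportware} then forces $Q\LC(b)=\LC(Qb)$. I would present this version as the main proof because it is cleaner, and keep the formula check as a remark.

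The only delicate point is the support-limited case, where $\tau_c$ is not uniquely defined and a direct "same $\tau$" argument is ill-posed. Both routes handle it: the uniqueness route because uniqueness holds within the support-preserving class, and the formula route through (\ref{app:eq_LC_sign}). No other obstacle is expected.
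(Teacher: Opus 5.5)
Your proposal is correct and follows essentially the same route as the paper: row permutations permute the independent row problems, and a signed permutation preserves the feasible set, the pairing, the support size, and the root equation~(\ref{app:eq_conditiontau}), so the canonical support-preserving selection commutes with it. Your write-up is more explicit than the paper's one-paragraph sketch, notably in invoking the uniqueness clause of Theorem~\ref{app:the_exactsupportware} and in treating the support-limited case, where $\tau$ is not unique, separately.
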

\begin{proof}
Row permutations permute the independent row problems. Signed feature permutations preserve Euclidean norms and permute absolute coordinate values, hence preserve the max-entry norm and therefore the feasible set and the Frobenius pairing. They also transport the support set, preserve its size, and leave the radius/root equation unchanged. Thus the canonical support-preserving selection, including zero rows and the support-limited case, commutes with the transformation; \(\operatorname{sign}\) is odd. The displayed equality follows.
\end{proof}

\textit{Exact range of the negative result.} If \(1\le c<\sqrt d\), the oracle is not equivariant under all feature rotations. Because $D_{\mathrm{tie},c}$ acts row by row, it suffices to exhibit the failure for the row map $\LC$; the matrix statement follows by placing the vector in one row of $B$ and zeroing the rest. Put \(q=\mathbf 1/\sqrt d\) and choose \(R\in O(d)\) with \(Re_1=q\). The input $e_1$ has $k=1$, so $c^2k\le d$ and (\ref{app:eq_LC_sign}) applies; the input $q$ has $k=d$, so $c^2k\ge d$ and the row is radius-limited, with uniform entries forcing $\tau_c(q)|q_j|=1\le c$ for every $j$.
Hence
\begin{align*}
    \LC(e_1)=c\,e_1,&\quad
\LC(q)=\mathbf 1=\sqrt d\,q \\
&\Downarrow \\
R\,\LC(e_1) &\ne\ \LC(Re_1) \quad (c\,q\ \ne\ \sqrt d\,q),
\end{align*}
Note where equivariance fails: the two outputs are parallel
but carry different energies, since a support-limited row is allowed only $c^2$ units of squared norm while its dense rotation receives the full budget $d$. For every \(c\ge\sqrt d\), including finite caps, the constraint is redundant, and the oracle equals row-RMS, so full right-\(O(d)\) equivariance holds. When \(d=1\), there is no nontrivial cap range. Thus only the active-cap range is excluded from the left-permutation/right-orthogonal (LPRO) class of \cite{lau2026symmetry}. The next theorem shows this exclusion is structural, not an implementation artifact.

\begin{theorem}[No-go: uniqueness of row-RMS under full orthogonal equivariance]
\label{app:the-NoGO}
Let \(\varphi:\mathbb{R}^d\setminus\{0\}\to\mathbb{R}^d\) satisfy
\begin{enumerate}[topsep=0pt]
    \item right-orthogonal equivariance: \(\varphi(R b)=R\,\varphi(b)\) for every \(R\in O(d)\);
    \item fixed row energy: \(\|\varphi(b)\|_2=\sqrt d\);
    \item positive alignment: \(\langle b,\varphi(b)\rangle>0\).
\end{enumerate}
Then necessarily
\[
\varphi(b)=\sqrt d\,\frac{b}{\|b\|_2},
\]
i.e. \(\varphi\) is row-RMS. Positive scale invariance follows from this conclusion and need not be assumed separately.
\end{theorem}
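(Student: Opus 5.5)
The plan is to use the stabilizer subgroup of $b$ in $O(d)$ to force $\varphi(b)$ onto the line spanned by $b$, and then let the energy and alignment conditions fix the scale and sign.

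\emph{Collinearity via the stabilizer.} Fix $b\neq0$ and write $\hat b=b/\norm{b}_2$. Consider any reflection $R=I-2ww^\top$ with $w\perp b$ and $\norm{w}_2=1$. Then $R\in O(d)$ and $Rb=b$. Applying condition 1 gives
\[
\varphi(b)=\varphi(Rb)=R\varphi(b),
\]
so $\varphi(b)$ is a fixed point of $R$. This forces $\ip{w}{\varphi(b)}=0$. Since $w$ ranges over all unit vectors in $b^\perp$, we get $\varphi(b)\perp b^\perp$, hence $\varphi(b)=\lambda(b)\,\hat b$ for some scalar $\lambda(b)$. The case $d=1$ is included: there $b^\perp=\{0\}$ and collinearity is automatic.

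\emph{Magnitude and sign.} Condition 2 gives $|\lambda(b)|=\sqrt d$. Condition 3 gives $\lambda(b)\norm{b}_2>0$, so $\lambda(b)>0$. Together these give $\varphi(b)=\sqrt d\,b/\norm{b}_2=b/\RMS(b)$.

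\emph{Scale invariance.} Positive scale invariance now follows by inspection, because $b/\norm{b}_2$ is unchanged under $b\mapsto\alpha b$ for $\alpha>0$.

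The only step that needs any care is the stabilizer argument, namely checking that reflections fixing $b$ suffice to kill every component orthogonal to $b$. This reduces to the identity $(I-2ww^\top)v=v \iff \ip{w}{v}=0$, so I expect no real obstacle. In closing, I would remark that this is why Theorem~\ref{app:the_exactsupportware}'s oracle, whose energy depends on the support, cannot be fully equivariant. For example, the counterexample above with $e_1$ and $q$ already violates condition 2, since $\norm{\LC(e_1)}_2=c<\sqrt d$.
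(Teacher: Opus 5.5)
Your proof is correct and follows essentially the same route as the paper: reflections fixing $b$ force $\varphi(b)$ into $\operatorname{span}(b)$, and then fixed energy and positive alignment determine the scalar as $\sqrt d/\|b\|_2$. One caution concerns only your closing remark: $\LC$ already violates condition 2 on support-limited rows, so the theorem alone does not show that $\LC$ fails equivariance; the explicit $e_1$/$q$ counterexample is what shows this, as the paper also notes.
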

\begin{proof}
Fix \(b\ne0\). If \(R\in O(d)\) fixes \(b\), equivariance gives
\[
\varphi(b)=\varphi(Rb)=R\varphi(b),
\]
so every orthogonal map fixing \(b\) also fixes \(\varphi(b)\). For \(d\ge2\), a reflection fixing \(b\) and reversing a vector in \(b^\perp\) shows that \(\varphi(b)\) has no component in \(b^\perp\); for \(d=1\), collinearity is immediate. Thus \(\varphi(b)=\alpha(b)b\). Fixed norm gives \(|\alpha(b)|=\sqrt d/\|b\|_2\), and positive alignment selects the positive sign. The positive-scale-invariance statement follows immediately from the displayed formula.
\end{proof}

For a row-separable rule, these conditions uniquely select row-RMS. They do not imply that dropping row separability is the only alternative, nor that signed permutations form a maximal symmetry group for every conceivable concentration-control method. In fact, even without fixed energy, any nonzero positively aligned orthogonally equivariant row map is parallel to \(b\), so its PAPR is
\[
d\,\frac{\|b\|_\infty^2}{\|b\|_2^2},
\]
and cannot have a uniform ceiling below \(d\).

AF-Muon chooses coordinate-dependent control in the active-cap range. Orthogonally invariant constraints, such as an additional spectral constraint, describe different geometries and can couple rows; Theorem \ref{app:the-NoGO} does not characterize such coupled algorithms. Support-limited clipped rows also need not have fixed energy, so Theorem \ref{app:the-NoGO} alone is not a proof of their failure of equivariance; the explicit counterexample above is.

Relatedly, the ``Sign as spectral'' representation of \cite{lau2026symmetry} embeds the vectorized matrix in a large diagonal matrix and preserves only coordinate geometry; AF-Muon's finite cap remains coordinate-aware and is not claimed to convert Sign into a fully spectral or LPRO update.

%=============================
% Seventh Section
%=============================

%\section{Convergence guarantees and their exact scope}
\section{Exact oracle properties and conditional convergence bounds}
\label{app:sec-convergenceAFMuon}

The theory in this appendix characterizes the optimization geometry of AF-Muon and gives conditional descent bounds for idealized variants. We distinguish three regimes:
\begin{enumerate}[leftmargin=*]
\item Exact deterministic oracle: current full gradients, exact LMOs, and exact arithmetic.
\item Inexact oracle: momentum estimation, approximate polar maps, numerical bisection, and stabilization are represented by explicit estimator and oracle-error terms.
\item Practical stochastic training: no unconditional convergence theorem is claimed. A theorem for the stochastic implementation would require additional assumptions controlling stochastic estimator error, momentum lag, oracle error, feasibility violation, and their interaction with the learning-rate schedule.
\end{enumerate}
Thus the results below are exact oracle properties and conditional convergence bounds for idealized variants, not a convergence guarantee for the full implementation in Algorithm~\ref{alg:afmuon}. This scope mirrors the analysis frontier for LMO-based LLM optimizers, where conditional convergence theory is developed by \cite{riabinin2025gluon} and the stochastic/momentum analysis of nonlinear optimizer maps is left open by \cite{lau2026symmetry}. All bounds below concern the stated fixed geometry and smooth objective, and they do not assert dimension-independent constants. Practical stochastic pretraining requires quantitative estimator, oracle, and schedule assumptions; unbiased stochastic gradients alone are insufficient.

\subsection{Product geometry}
Partition parameters into blocks \(x=(x_1,\ldots,x_B)\), block \(b\) with norm \(N_b\) and radius \(r_b>0\). The radii absorb branch learning-rate ratios, tied scale/width factors, and shape factors. Define
\[
N_{\mathcal X}(s)=\max_b \frac{N_b(s_b)}{r_b}.
\]
Then, by independent block-wise maximization,
\[
N_{\mathcal X,*}(g)=\sum_b r_b N_{b,*}(g_b),\qquad
D_{\mathcal X}(g)_b=r_b D_{N_b}(g_b).
\]
Idealized role-specific AF-Muon is thus one LMO of a weighted product norm whenever all branches share the scalar schedule \(a_t\), the same product-norm viewpoint as modular-norm methods \cite{large2024scalable,pethick2025scion}. Time-varying relative branch rates generally require a time-varying-geometry analysis.

\subsection{Exact deterministic descent}
Assume \(f\ge f_{\inf}>-\infty\), \(L>0\), and, on the region used by the iterates,
\[
f(x+s)\le f(x)+\ip{\nabla f(x)}{s}+\frac L2 N_{\mathcal X}(s)^2.
\]
For \(x_{t+1}=x_t-\eta_t D_{\mathcal X}(\nabla f(x_t))\), \(\eta_t\ge0\), and \(\sum_{t<T}\eta_t>0\), the exact LMO identity and feasibility imply
\[
\min_{t<T}N_{\mathcal X,*}(\nabla f(x_t))
\le \frac{\Delta+\frac L2\sum_{t<T}\eta_t^2}{\sum_{t<T}\eta_t},\qquad
\Delta=f(x_0)-f_{\inf}.
\]
Indeed, the one-step decrease is at least \(\eta_t N_{\mathcal X,*}(\nabla f(x_t))-L\eta_t^2/2\); summing proves the bound. If \(\Delta>0\), the horizon-dependent constant step \(\eta=\sqrt{2\Delta/(LT)}\) gives \(\sqrt{2L\Delta/T}\). For diminishing steps,
\[
\sum_t\eta_t=\infty,\quad \sum_t\eta_t^2<\infty
\quad\Longrightarrow\quad
\liminf_t N_{\mathcal X,*}(\nabla f(x_t))=0.
\]
Under the additional norm-PL condition
\[
N_{\mathcal X,*}(\nabla f(x))^2\ge2\mu(f(x)-f_*),\qquad 0<\mu\le L,
\]
the gradient-dependent step \(\eta_t=N_{\mathcal X,*}(\nabla f(x_t))/L\) gives
\[
f(x_{t+1})-f_*\le(1-\mu/L)(f(x_t)-f_*).
\]
This is a different schedule from practical AF-Muon. A generic finite-dimensional smoothness constant need not be width- or vocabulary-independent, so these bounds do not by themselves prove hyperparameter transfer. This is the standard normalized-steepest-descent argument \cite{pethick2025scion,riabinin2025gluon}.

\subsection{Master inexact-oracle theorem}
Let \(g_t=\nabla f(x_t)\), let \(z_t\) be the implemented oracle input on the same gradient scale (e.g. a momentum estimate), and let \(e_t=z_t-g_t\). Assume the implemented direction \(\widehat D_t=\widehat D(z_t)\) satisfies
\[
N_{\mathcal X}(\widehat D_t)\le1,\qquad
\ip{z_t}{\widehat D_t}\ge N_{\mathcal X,*}(z_t)-\delta_t,\quad \delta_t\ge0,
\]
and update \(x_{t+1}=x_t-\eta_t\widehat D_t\). This is the bound through which all AF-Muon approximations enter: momentum lag, stochastic estimation, Newton--Schulz polar error, tied bisection error, and stabilization.

\begin{theorem}[Master inexact-oracle bound]
\label{app:the-MasterInexactOracle}
Under \(L\)-smoothness in \(N_{\mathcal X}\),
\[
f(x_{t+1})\le f(x_t)-\eta_t N_{\mathcal X,*}(g_t)
+2\eta_t N_{\mathcal X,*}(e_t)+\eta_t\delta_t+\frac L2\eta_t^2,
\]
and consequently
\[
\frac{\sum_{t<T}\eta_t N_{\mathcal X,*}(g_t)}{\sum_{t<T}\eta_t}
\le
\frac{\Delta+2\sum_{t<T}\eta_t N_{\mathcal X,*}(e_t)
+\sum_{t<T}\eta_t\delta_t+\frac L2\sum_{t<T}\eta_t^2}{\sum_{t<T}\eta_t}.
\]
\end{theorem}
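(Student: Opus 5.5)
The plan is to combine the one-step smoothness inequality with an estimate that converts the oracle's guarantee on the implemented input $z_t$ into a guarantee on the true gradient $g_t$, and then to telescope. First, apply the $L$-smoothness assumption in $N_{\mathcal X}$ with $s=-\eta_t\widehat D_t$. Using feasibility $N_{\mathcal X}(\widehat D_t)\le1$, this gives
\[
f(x_{t+1})\le f(x_t)-\eta_t\ip{g_t}{\widehat D_t}+\frac L2\eta_t^2 .
\]
The rest of the per-step argument is a lower bound on $\ip{g_t}{\widehat D_t}$ in terms of $N_{\mathcal X,*}(g_t)$.

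The key step is the error-transfer estimate. Write $\ip{g_t}{\widehat D_t}=\ip{z_t}{\widehat D_t}-\ip{e_t}{\widehat D_t}$. The inexact-oracle hypothesis gives $\ip{z_t}{\widehat D_t}\ge N_{\mathcal X,*}(z_t)-\delta_t$. By the definition of the dual norm and $N_{\mathcal X}(\widehat D_t)\le1$, we have $\ip{e_t}{\widehat D_t}\le N_{\mathcal X,*}(e_t)$. Finally, the triangle inequality for the dual norm gives $N_{\mathcal X,*}(z_t)\ge N_{\mathcal X,*}(g_t)-N_{\mathcal X,*}(e_t)$. Combining the three,
\[
\ip{g_t}{\widehat D_t}\ge N_{\mathcal X,*}(g_t)-2N_{\mathcal X,*}(e_t)-\delta_t .
\]
One factor of $N_{\mathcal X,*}(e_t)$ comes from evaluating the oracle at the wrong point, and the other from pairing the error with the direction, which accounts for the factor $2$. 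Substituting into the smoothness inequality, with $\eta_t\ge0$ preserving the inequality directions, yields the stated one-step bound.

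For the averaged bound, sum the one-step inequality over $t<T$. The function values telescope to $f(x_T)-f(x_0)\ge f_{\inf}-f(x_0)=-\Delta$. Rearranging puts $\sum_t\eta_tN_{\mathcal X,*}(g_t)$ on the left, bounded by $\Delta$ plus the error, oracle and curvature sums. Dividing by $\sum_{t<T}\eta_t>0$ gives the second display.

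I expect no real obstacle, since everything is a direct application of dual-norm inequalities. The one point to check is that $z_t$ is on the same scale as $g_t$, which the statement assumes, so that $e_t=z_t-g_t$ is meaningful. The other is that the smoothness inequality holds on the segment actually traversed by the iterates, as assumed "on the region used by the iterates."
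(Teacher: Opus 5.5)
Your proposal is correct and matches the paper's proof step for step. You use the same three inequalities: oracle accuracy, the feasibility-based dual bound $\ip{e_t}{\widehat D_t}\le N_{\mathcal X,*}(e_t)$, and the reverse triangle inequality for $N_{\mathcal X,*}$. Together these give $\ip{g_t}{\widehat D_t}\ge N_{\mathcal X,*}(g_t)-2N_{\mathcal X,*}(e_t)-\delta_t$, which is then substituted into the smoothness inequality with $s=-\eta_t\widehat D_t$ and telescoped using $f\ge f_{\inf}$ and $\sum_{t<T}\eta_t>0$.
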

\begin{proof}
Duality and feasibility give
\[
\ip{e_t}{\widehat D_t}\le N_{\mathcal X,*}(e_t),
\]
so
\[
\ip{g_t}{\widehat D_t}
=\ip{z_t}{\widehat D_t}-\ip{e_t}{\widehat D_t}
\ge N_{\mathcal X,*}(z_t)-\delta_t-N_{\mathcal X,*}(e_t).
\]
The reverse triangle inequality gives \(N_{\mathcal X,*}(z_t)\ge N_{\mathcal X,*}(g_t)-N_{\mathcal X,*}(e_t)\). Combining,
\[
\ip{g_t}{\widehat D_t}\ge N_{\mathcal X,*}(g_t)-2N_{\mathcal X,*}(e_t)-\delta_t.
\]
Insert this into the smoothness inequality with \(s=-\eta_t\widehat D_t\) and telescope.
\end{proof}

For this bound to tend to zero, it suffices that \(\sum_{t<T}\eta_t\to\infty\), \(\sum_{t<T}\eta_t^2/\sum_{t<T}\eta_t\to0\), and both weighted-average error terms vanish. These are sufficient conditions for the bound, not necessary conditions for actual convergence. A persistent error bound yields a residual upper bound; it does not by itself prove that the algorithm must have a nonzero error floor.

\paragraph{Feasibility inflation.}
If instead \(N_{\mathcal X}(\widehat D_t)\le1+\kappa_t\), \(\kappa_t\ge0\), with the same objective-accuracy condition, replace \(2N_{\mathcal X,*}(e_t)\) by \((2+\kappa_t)N_{\mathcal X,*}(e_t)\) and the curvature term by \(L\eta_t^2(1+\kappa_t)^2/2\). A five-step bfloat16 Newton--Schulz map is not automatically covered by a uniform small \(\delta_t\) or \(\kappa_t\); those bounds require evidence for the actual implementation.

\paragraph{Momentum and stabilization.}
For \(B_t=\mu B_{t-1}+g_t\), \(B_{-1}=0\), and \(0\le\mu<1\), define \(S_t=\sum_{j=0}^t\mu^{t-j}\) and \(z_t=B_t/S_t\). The exact scale-invariant oracle satisfies \(D(B_t)=D(z_t)\), and
\[
N_{\mathcal X,*}(z_t-g_t)\le\sum_{j=0}^t\frac{\mu^{t-j}}{S_t}N_{\mathcal X,*}(g_j-g_t).
\]
If \(N_{\mathcal X,*}(g_j-g_t)\le L_g N_{\mathcal X}(x_j-x_t)\) and \(N_{\mathcal X}(x_{k+1}-x_k)\le h\), this is at most \(L_gh\mu/(1-\mu)\). The Lipschitz-gradient assumption is explicit; the one-sided descent inequality alone is not used to assert it. Zero-initialized Nesterov-style combinations can likewise be normalized by their total nonnegative weight. A stabilized implementation may lose exact scale invariance: normalize only in the analysis of its actual direction and measure \(\delta_t\) against that \(z_t\). For example,
\[
\widehat D=b/\max\{\RMS(b),\varepsilon\}
\]
is RMS-feasible with exact deficit
\[
\sqrt p\|b\|_2-\frac{\|b\|_2^2}{\max\{\RMS(b),\varepsilon\}}
\]
relative to input \(b\).

For stochastic gradients the bound holds pathwise whenever its assumptions hold; expectations require integrable error bounds. In general
\[
\mathbb E[D(G_t)\mid\mathcal F_t]\ne D(g_t)
\]
even if \(\mathbb E[G_t\mid\mathcal F_t]=g_t\). Thus unbiased gradients do not give unbiased normalized directions, and with fixed batch and momentum the estimation error generally does not vanish.

\subsection{Decoupled decay as conditional gradient}
The decayed hidden-matrix update
\[
x^+=(1-\eta\lambda)x-\eta rD_N(g)
\]
is not gradient descent on \(f(x)+\frac{\lambda}{2}\|x\|_2^2\); the LMO is applied to \(g\), not to \(g+\lambda x\). It has an exact conditional-gradient reading instead.

\begin{theorem}[Decoupled decay as a Frank--Wolfe step]
\label{app:the-DecoupledDecayFW}
Let \(r>0\), \(\lambda>0\), \(R=r/\lambda\), \(C=\{x:N(x)\le R\}\), \(g=\nabla f(x)\), and \(\gamma=\eta\lambda\in[0,1]\). With \(v(g)=-R D_N(g)\in\arg\min_{v\in C}\ip{g}{v}\),
\[
x^+=(1-\eta\lambda)x-\eta rD_N(g)=(1-\gamma)x+\gamma v(g).
\]
If \(x\in C\), this is a feasible Frank--Wolfe step. If \(x\notin C\), it is the same algebraic update but is not a feasible Frank--Wolfe iterate. Convexity preserves feasibility from \(x_0\in C\). Also
\[
[N(x_{t+1})-R]_+\le(1-\gamma_t)[N(x_t)-R]_+.
\]
Thus distance in norm to the radius bound tends to zero if \(\prod_t(1-\gamma_t)=0\), for example if \(\sum_t\gamma_t=\infty\). Positive but summable steps alone do not ensure attraction.
\end{theorem}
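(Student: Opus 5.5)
The plan is to verify the algebraic identity directly and then handle feasibility and attraction with the triangle inequality and homogeneity of $N$.

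\textbf{Step 1: the identity.} First I would check that $v(g)=-RD_N(g)$ minimizes $\ip{g}{v}$ over $C$. Since $C=R\{N\le1\}$ and $D_N(g)\in\argmax_{N(U)\le1}\ip{g}{U}$, scaling by $R>0$ and negating gives $-RD_N(g)\in\argmin_{v\in C}\ip{g}{v}$. The identity then follows by substitution:
\[
\gamma v(g)=-\eta\lambda\,(r/\lambda)\,D_N(g)=-\eta r D_N(g),
\]
so $(1-\gamma)x+\gamma v(g)$ matches the decayed update. The restriction $\gamma\in[0,1]$ makes this a convex combination.

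\textbf{Step 2: feasibility.} When $x\in C$, the update is a convex combination of two points of the convex set $C$. It therefore lies in $C$, which is precisely a feasible Frank--Wolfe step. Induction from $x_0\in C$ gives invariance of $C$. When $x\notin C$, the algebraic formula still holds, but the starting point is infeasible, so the iterate is not a Frank--Wolfe iterate in the constrained sense.

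\textbf{Step 3: the contraction.} By the triangle inequality and homogeneity,
\[
N(x_{t+1})\le(1-\gamma_t)N(x_t)+\gamma_t N(v)\le(1-\gamma_t)N(x_t)+\gamma_t R,
\]
using $N(v)=R\,N(D_N(g))\le R$. Subtracting $R$ gives
\[
N(x_{t+1})-R\le(1-\gamma_t)\bigl(N(x_t)-R\bigr).
\]
Taking positive parts, with $1-\gamma_t\ge0$, yields
\[
[N(x_{t+1})-R]_+\le(1-\gamma_t)[N(x_t)-R]_+,
\]
since $[\alpha y]_+=\alpha[y]_+$ for $\alpha\ge0$ and the positive part is monotone.

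\textbf{Step 4: attraction.} Iterating gives
\[
[N(x_T)-R]_+\le\prod_{t<T}(1-\gamma_t)\,[N(x_0)-R]_+,
\]
which tends to zero when the product vanishes. The product vanishes when $\sum_t\gamma_t=\infty$, because $1-\gamma\le e^{-\gamma}$. For the remark about summable steps, I would take $\gamma_t$ small and summable with no $\gamma_t=1$. Then $\prod(1-\gamma_t)>0$, and a one-dimensional example with $D_N(g)=0$ (i.e.\ $g=0$) shows the excess stays bounded away from zero.

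I expect no real obstacle here. The only care needed is in Step 1, namely the sign and scaling of the argmin and the edge convention $D_N(0)=0$, and in taking positive parts in Step 3 when $\gamma_t=1$.
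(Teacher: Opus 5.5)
Your proposal is correct and follows the paper's proof essentially step for step. As in the paper, the scaled negated oracle is the argmin over $C$, substitution gives the identity, convexity gives invariance of $C$, and the triangle inequality with homogeneity gives $N(x_{t+1})-R\le(1-\gamma_t)(N(x_t)-R)$, after which positive parts are taken using $1-\gamma_t\ge0$.

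You go slightly beyond the paper, which asserts the product and summable-step remarks without argument. In your counterexample (constant $f$, so $g\equiv0$ and $x_{t+1}=(1-\gamma_t)x_t$), be sure to pick $N(x_0)>R/\prod_t(1-\gamma_t)$; otherwise the excess can still reach zero.
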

\begin{proof}
The scaled negative oracle minimizes \(\ip{g}{v}\) over \(C\); substitution proves the identity. Convexity preserves feasibility from \(x_0\in C\). For the norm recursion, $N(v(g))\le R$ and $\gamma_t\in[0,1]$ give $N(x_{t+1})\le(1-\gamma_t)N(x_t)+\gamma_tR$, hence $N(x_{t+1})-R\le(1-\gamma_t)\bigl(N(x_t)-R\bigr)$; taking positive parts and using monotonicity of $[\cdot]_+$ together with $1-\gamma_t\ge0$ yields the stated inequality.
\end{proof}

For feasible iterates, the Frank--Wolfe gap
\[
G_C(x)=\ip{\nabla f(x)}{x-v(\nabla f(x))}\ge0.
\]
If \(f\) is \(L\)-smooth in \(N\) on \(C\), let \(D_C=\sup_{x,y\in C}N(x-y)\le2R\). Smoothness gives
\[
f(x_{t+1})\le f(x_t)-\gamma_tG_C(x_t)+\frac L2D_C^2\gamma_t^2,
\]
and
\[
\min_{t<T}G_C(x_t)\le
\frac{f(x_0)-f_{\inf}+\frac L2D_C^2\sum_{t<T}\gamma_t^2}{\sum_{t<T}\gamma_t}.
\]
The denominator must be positive. This stationarity bound follows directly by telescoping and does not require convexity of \(f\). For convex \(f\), it also controls suboptimality through \(f(x)-\min_Cf\le G_C(x)\); classical convex Frank--Wolfe analysis is discussed by \cite{jaggi13frankwolfe}. For a decayed block of a coupled network, this isolated-block bound cannot be applied while ignoring other simultaneous block updates; the following theorem handles them jointly.

\subsection{Hybrid stationarity}
\begin{theorem}[Hybrid stationarity]
\label{app:the-HybridStationarity}
Let \(f\ge f_{\inf}\) satisfy the joint bound
\[
f(x+s)\le f(x)+\sum_b\ip{g_b}{s_b}+\frac12\sum_b L_bN_b(s_b)^2.
\]
Partition stored blocks into decayed blocks \(\mathcal H\) and nondecayed blocks \(\mathcal A\) (including the tied table). Take fixed \(r_b,\bar\eta_b>0\), \(\lambda_h>0\), schedules \(\eta_{b,t}=a_t\bar\eta_b\), \(a_t\ge0\), exact current gradients and LMOs, and \(a_t\bar\eta_h\lambda_h\le1\). Assume \(x_{h,0}\in C_h=\{x_h:N_h(x_h)\le r_h/\lambda_h\}\). Put
\begin{align*}
G_h(x)&=\ip{g_h}{x_h+(r_h/\lambda_h)D_{N_h}(g_h)},\\
S(x)&=\sum_{h\in\mathcal H}\bar\eta_h\lambda_hG_h(x)
+\sum_{a\in\mathcal A}\bar\eta_a r_aN_{a}^*(g_a),
\end{align*}
and 
\begin{align*}
C_0&=\sum_{h\in\mathcal H}L_h(2\bar\eta_h r_h)^2
+\sum_{a\in\mathcal A}L_a(\bar\eta_a r_a)^2.
\end{align*}
Then \(S(x_t)\ge0\) and, whenever \(\sum_{t<T}a_t>0\),
\[
f(x_{t+1})\le f(x_t)-a_tS(x_t)+\frac12C_0a_t^2,
\]
and
\[
\min_{t<T}S(x_t)\le
\frac{f(x_0)-f_{\inf}+\frac12C_0\sum_{t<T}a_t^2}{\sum_{t<T}a_t}.
\]
\end{theorem}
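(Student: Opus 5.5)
Each block is handled separately using its one-step update, and the per-block bounds are then summed inside the joint smoothness inequality. The decayed blocks are treated as Frank--Wolfe steps via Theorem~\ref{app:the-DecoupledDecayFW}. The nondecayed blocks are treated as exact normalized steepest-descent steps.

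\textbf{Step 1 (feasibility of decayed blocks).} Set \(\gamma_{h,t}=a_t\bar\eta_h\lambda_h\in[0,1]\) and \(v_h=-(r_h/\lambda_h)D_{N_h}(g_h)\in C_h\). The decayed update then reads \(x_{h,t+1}=(1-\gamma_{h,t})x_{h,t}+\gamma_{h,t}v_h\). Since \(x_{h,0}\in C_h\) and \(C_h\) is convex, induction gives \(x_{h,t}\in C_h\) for all \(t\).

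\textbf{Step 2 (nonnegativity of \(S\)).} For decayed blocks, the LMO identity gives
\(G_h=\ip{g_h}{x_h}+(r_h/\lambda_h)N_h^*(g_h)\).
Because \(N_h(x_h)\le r_h/\lambda_h\), the duality inequality gives \(\ip{g_h}{x_h}\ge -N_h(x_h)N_h^*(g_h)\ge-(r_h/\lambda_h)N_h^*(g_h)\), so \(G_h\ge0\). This is the Frank--Wolfe gap at a feasible point. For nondecayed blocks, \(N_a^*(g_a)\ge0\) trivially. Hence \(S\ge0\).

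\textbf{Step 3 (one-step bound).} The step for a decayed block is \(s_h=-\gamma_{h,t}(x_h-v_h)\). Its linear term is \(\ip{g_h}{s_h}=-a_t\bar\eta_h\lambda_hG_h\). For its norm, \(N_h(x_h-v_h)\le 2r_h/\lambda_h\) since both points lie in \(C_h\), so \(N_h(s_h)\le 2a_t\bar\eta_h r_h\).

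The step for a nondecayed block is \(s_a=-a_t\bar\eta_a r_a D_{N_a}(g_a)\). Its linear term is \(-a_t\bar\eta_a r_aN_a^*(g_a)\), and its norm is at most \(a_t\bar\eta_a r_a\).

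Substituting these into the joint smoothness inequality gives \(f(x_{t+1})\le f(x_t)-a_tS(x_t)+\tfrac12 C_0a_t^2\). Telescoping over \(t<T\), using \(f\ge f_{\inf}\), and bounding \(\sum_t a_tS(x_t)\ge(\min_t S(x_t))\sum_t a_t\) yields the final rate.

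The only delicate point is Step 1/2. Nonnegativity of \(G_h\), and the diameter bound \(2r_h/\lambda_h\), both require the iterates to stay in \(C_h\). This is exactly where the assumptions \(a_t\bar\eta_h\lambda_h\le1\) and \(x_{h,0}\in C_h\) are used. Everything else is routine substitution.
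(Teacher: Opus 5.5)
Your proposal is correct and follows essentially the same route as the paper. Both arguments establish feasibility of the decayed blocks by convex combination and get $S\ge0$ from the blockwise Frank--Wolfe gap. They then use the same per-block linear terms and norm bounds ($2a_t\bar\eta_h r_h$ and $a_t\bar\eta_a r_a$), substitute into the joint smoothness inequality, and telescope. The only cosmetic difference is that you verify $G_h\ge 0$ through the explicit duality bound $\langle g_h,x_h\rangle\ge -N_h(x_h)N_h^*(g_h)$, while the paper cites minimality of $v_h$ over $C_h$; these are the same fact.
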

\begin{proof}
Nonnegativity of $S$ is the Frank--Wolfe gap inequality applied blockwise. For $h\in\mathcal H$,
$v_h=-(r_h/\lambda_h)D_{N_h}(g_h)$ minimizes $\ip{g_h}{\cdot}$ over $C_h$, so
$G_h(x)=\ip{g_h}{x_h-v_h}\ge0$ whenever $x_h\in C_h$; feasibility holds for every $t$ by
$x_{h,0}\in C_h$, $a_t\bar\eta_h\lambda_h\le1$, and the convexity argument of
Theorem \ref{app:the-DecoupledDecayFW}. For $a\in\mathcal A$, $N_{a}^*(g_a)\ge0$ by definition of
a dual norm. Hence $S(x_t)\ge0$.\\
For \(h\in\mathcal H\), \(s_h=a_t\bar\eta_h\lambda_h(v_h-x_h)\) and feasibility gives \(N_h(s_h)\le2a_t\bar\eta_h r_h\). Its linear term is \(-a_t\bar\eta_h\lambda_hG_h\). For \(a\in\mathcal A\), \(s_a=-a_t\bar\eta_ar_aD_{N_a}(g_a)\), with norm at most \(a_t\bar\eta_ar_a\) and linear term \(-a_t\bar\eta_ar_aN_{a}^*(g_a)\). Insert both into the joint smoothness bound and telescope.
\end{proof}
This proves a hybrid constrained/unconstrained stationarity statement for exact deterministic updates. It is not gradient descent on \(f+\lambda\|x\|_2^2/2\) and does not by itself cover practical momentum, numerical approximations, or stochastic pretraining with decay. In summary, exact deterministic progress and inexact-oracle bounds hold under explicit smoothness and error assumptions, without implying unconditional convergence of practical stochastic pretraining.

%=============================
% Eighth Section
%=============================

\section{Extension to Encoder-Decoder Architectures (T5)}
\label{app:three_way_tying}

Before detailing the formal equivalence, it is helpful to establish the architectural context of models such as T5. Unlike the decoder-only architectures discussed previously, which employ a two-way weight tying scheme (sharing weights between the input token embedding and the pre-softmax output projection), T5 utilizes an encoder-decoder framework. 

In this three-way tying setting, a single vocabulary table $E \in \mathbb{R}^{V \times d}$ is shared across three distinct operational roles:
\begin{itemize}
    \item Encoder Lookup: Embedding the input sequence tokens into the encoder's continuous space.
    \item Decoder Lookup: Embedding the shifted target sequence tokens into the decoder.
    \item Output Classification: Projecting the final decoder hidden states back into the vocabulary space to generate softmax logits.
\end{itemize}

Intuitively, while this architecture introduces a third path for gradient flow, the encoder and decoder lookup operations are highly symmetric. From a geometric perspective, both act simply as identity row-selectors from $E$, constrained by the exact same bounds and radii. Because they serve identical mathematical roles in shaping the parameter space, the addition of the encoder lookup does not alter the fundamental topology of the optimization landscape. We can formalize this intuition by showing that the three-way alias norm collapses directly into the two-way case.

\begin{proposition}[Equivalence of Three-Way and Two-Way Tying]
\label{prop:three_way_equivalence}
Consider a fully shared T5-style vocabulary table $E \in \mathbb{R}^{V \times d}$ operating across the three roles described above. Because the encoder and decoder lookup roles share identical identity maps, norms, and radii, they constitute a geometrically inert duplicate role. Consequently, the three-way alias norm collapses exactly to the two-way tied intersection norm:
$$N_{\mathrm{alias}}^{\mathrm{T5}}(U) = \max\{N_{\mathrm{in}}(U), N_{\mathrm{out},c}(U)\} = N_{\mathrm{tie},c}(U)$$
Therefore, all geometric, oracle, dual, and symmetry results from the two-way decoder-only setting transfer verbatim. Furthermore, because shared redundancy is monotonic, adding a duplicate lookup role cannot reopen the head-shift redundancy quotient, ensuring $K_{\mathrm{alias}}^{\mathrm{T5}} = \{0\}$ remains true.
\end{proposition}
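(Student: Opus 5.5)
I would prove this by writing the three-role alias norm out explicitly through the general construction (\ref{eq:alias-norm}) and then collapsing the duplicate term. Take $R=3$ with role maps $A_{\mathrm{enc}}=A_{\mathrm{dec}}=A_{\mathrm{out}}=\mathrm{Id}$, exactly as argued for the two-way case in Appendix~\ref{app:sec-hardtiedtables}: each role perturbs a view of the same stored $V\times d$ perturbation $U$. Assign $(N_{\mathrm{in}},1)$ to both lookup roles and $(\norm{\cdot}_\infty,c)$ to the output role. Then
\[
N_{\mathrm{alias}}^{\mathrm{T5}}(U)=\max\{N_{\mathrm{in}}(U),\,N_{\mathrm{in}}(U),\,\norm{U}_\infty/c\}=\max\{N_{\mathrm{in}}(U),N_{\mathrm{out},c}(U)\},
\]
since the maximum is idempotent. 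This is $\tie(U)$ by (\ref{app:eq_tiehardnorm}). The common kernel is $\{0\}$, so Theorem~\ref{app:the_aliaspullbacknorm} gives that the functional is a norm with unit ball $\mathcal C_c$. Equivalently, at the level of balls, the intersection of two identical pulled-back balls is that ball.

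Transfer of the downstream results then needs no new argument. The oracle (Theorem~\ref{app:the_exactsupportware} and the full-table corollary), the dual, the PAPR bound and the symmetry results all depend only on the norm $\tie$ or its unit ball. Since these objects are literally equal, those results apply verbatim.

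There is one subtlety I would address explicitly: the dual. Theorem~\ref{app:the_alias_aware_dual} applied with three roles gives an infimal convolution over splits $X_1+X_2+Y=B$ with cost $\sqrt d\sum_i\norm{x_{1,i}}_2+\sqrt d\sum_i\norm{x_{2,i}}_2+c\norm{Y}_1$. This is not trivially the same expression as (\ref{app:eq_dual_infimal_convolution}). It must agree, however, because both equal the support function of the same ball. As a direct check, the triangle inequality shows that merging $X_1+X_2$ into a single $X$ never increases the cost, and the split $X_2=0$ recovers every two-way split. This is also the point at which I would flag a caveat. The row radii are assumed equal ($\rho_{\mathrm{enc}}=\rho_{\mathrm{dec}}$). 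If they differed, the maximum would instead select the smaller radius, still giving a two-way norm but with a rescaled $N_{\mathrm{in}}$.

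For redundancy, I would use Theorem~\ref{app:the_redundancy_intersection}:
\[
K_{\mathrm{alias}}^{\mathrm{T5}}=K_{\mathrm{enc}}\cap K_{\mathrm{dec}}\cap K_{\mathrm{out}}\subseteq K_{\mathrm{in}}\cap K_{\mathrm{out}}=\{0\},
\]
using Proposition~\ref{app:pro_remove-shift-redundancy} and injectivity of the lookup map. The monotonicity claim is exactly this inclusion: adding roles can only shrink the intersection. The main obstacle is not technical but interpretive. One must make precise that "geometrically inert duplicate" means identical map, norm and radius; otherwise the collapse fails.
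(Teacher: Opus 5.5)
Your proposal is correct and follows the paper's proof essentially step by step. It uses the idempotent maximum over the duplicated lookup term, then the triangle-inequality merge $(X_1,X_2)\mapsto(0,X_1+X_2)$ to reconcile the three-way infimal convolution with the two-way dual, and finally the intersection inclusion $K_{\mathrm{enc}}\cap K_{\mathrm{dec}}\cap K_{\mathrm{out}}\subseteq K_{\mathrm{dec}}\cap K_{\mathrm{out}}=\{0\}$ via Proposition~\ref{app:pro_remove-shift-redundancy}.
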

\begin{proof}
Index the three roles by $r\in\{\mathrm{enc},\mathrm{dec},\mathrm{out}\}$ with role maps
$A_{\mathrm{enc}}=A_{\mathrm{dec}}=A_{\mathrm{out}}$,
%=\mathrm{Id}
norms
$N_{\mathrm{enc}}=N_{\mathrm{dec}}=N_{\mathrm{in}}$ and $N_{\mathrm{out}}=\norm{\cdot}_\infty$,
and radii $\rho_{\mathrm{enc}}=\rho_{\mathrm{dec}}=1$, $\rho_{\mathrm{out}}=c$.
By the definition of the alias norm,
\[
N_{\mathrm{alias}}^{\mathrm{T5}}(U)
=\max\Bigl\{N_{\mathrm{in}}(U),\,N_{\mathrm{in}}(U),\,\tfrac{\norm{U}_\infty}{c}\Bigr\}
=\max\Bigl\{N_{\mathrm{in}}(U),\,\tfrac{\norm{U}_\infty}{c}\Bigr\}
=\tie(U),
\]
since a maximum is unchanged by repeating one of its arguments. The unit ball, the dual of
Theorem \ref{app:the_alias_aware_dual}, and the oracle of Theorem \ref{app:the_exactsupportware}
depend on the roles only through this functional, so each coincides with its two-way counterpart.
For the dual, the duplicated role contributes a third summand
$\rho_{\mathrm{enc}}N_{\mathrm{in}}^*(Z_{\mathrm{enc}})$ whose optimal value is attained at
$Z_{\mathrm{enc}}=0$: replacing $(Z_{\mathrm{enc}},Z_{\mathrm{dec}})$ by
$(0,Z_{\mathrm{enc}}+Z_{\mathrm{dec}})$ preserves the constraint $\sum_rZ_r=B$ and cannot increase
the objective, by the triangle inequality for $N_{\mathrm{in}}^*$ and the equality of the two radii.
For the redundancy claim, Theorem \ref{app:the_redundancy_intersection} gives
\[
K_{\mathrm{alias}}^{\mathrm{T5}}
=K_{\mathrm{enc}}\cap K_{\mathrm{dec}}\cap K_{\mathrm{out}}
\subseteq K_{\mathrm{dec}}\cap K_{\mathrm{out}}
=\{0\},
\]
the last equality by Proposition \ref{app:pro_remove-shift-redundancy}. Adjoining a role can only
shrink an intersection, which is the monotonicity asserted in the statement.
\end{proof}

\subsection{Gradient Structure and Convergence}

While the geometry remains unchanged, the gradient bookkeeping requires a minor structural extension to account for the additional lookup path:

\begin{itemize}[leftmargin=*]
    \item Combined Lookup Gradients: The total input-role gradient is the sum of the encoder and decoder lookup gradients, $G_{\mathrm{in}}^{\mathrm{T5}} = G_{\mathrm{enc}} + G_{\mathrm{dec}}$.
    \item Sparsity: This combined input gradient remains row-sparse and is supported strictly on the union of the encoder and decoder occupied token positions within the batch.
    \item Density: The output classification gradient $G_{\mathrm{out}}$ remains generically dense across all vocabulary rows.
    \item Convergence: Convergence guarantees transfer with identical proofs under standard block reindexing by simply treating the encoder and decoder hidden weights as an enlarged set of decoupled parameter blocks.
\end{itemize}

%----------------------------
% \newpage
\section{Details of Experiments}
\label{sec:details_of_experiments}

\subsection{NanoGPT}
\label{app:nanogpt_tied_vocab}

We include a NanoGPT-style decoder-only experiment to test AF-Muon in a standard
tied input/output embedding language-model setting. The model has 12 layers, 6
attention heads, head dimension 128, hidden size 768, MLP hidden size 3072, and a
50,304-token vocabulary. The input and output vocabulary tables are tied:
% \[
% E_{\mathrm{in}} = W_{\mathrm{out}}^\top .
% \]
\( E_{\mathrm{in}} = W_{\mathrm{out}}^\top \)
Thus the shared table receives sparse input-lookup gradients and dense
output-classifier gradients.

All runs use FP32 trainable parameters and optimizer state with BF16 autocast,
constant learning rates, matrix learning rate $0.02$, auxiliary/vector learning
rate $3\times10^{-4}$, matrix weight decay $0.1$, gradient clipping at norm
$1.0$, batch size 512, context length 1024, and 2.62B training tokens. The
Hybrid Muon baseline applies auxiliary AdamW weight decay $0.01$ to the tied and
one-dimensional auxiliary parameters. AF-Muon and SCION-style Sign use zero
auxiliary/tied weight decay.

% \begin{table}[ht]
% \centering
% \caption{
% Per-seed NanoGPT-style decoder-only results. All runs use tied input/output
% embeddings and 2.67B training tokens.
% }
% \label{tab:app_nanogpt_per_seed}
% \begin{tabular}{lcccc}
% \toprule
% Method & Seed & Train loss $\downarrow$ & Validation loss $\downarrow$ & Validation PPL $\downarrow$ \\
% \midrule
% Hybrid Muon & 43 & 3.5182 & 3.5212 & 33.83 \\
% Hybrid Muon & 44 & 3.5133 & 3.5152 & 33.62 \\
% Hybrid Muon & 45 & 3.5134 & 3.5165 & 33.67 \\
% SCION-style Sign & 43 & 3.5185 & 3.5272 & 34.03 \\
% SCION-style Sign & 44 & 3.5001 & 3.5046 & 33.27 \\
% SCION-style Sign & 45 & 3.5187 & 3.5247 & 33.94 \\
% AF-Muon & 43 & 3.4910 & 3.4936 & 32.91 \\
% AF-Muon & 44 & 3.4896 & 3.4898 & 32.78 \\
% AF-Muon & 45 & 3.4889 & 3.4921 & 32.86 \\
% \bottomrule
% \end{tabular}
% \end{table}

\begin{table}[ht]
\centering
\caption{
Per-seed NanoGPT-style decoder-only results. All runs use tied input/output
embeddings and 2.67B training tokens. The final row reports a paired two-sided $t$-test over validation loss between
AF-Muon and the strongest non-AF baseline by mean validation loss. The
validation-loss entry is the mean paired gain, computed as baseline minus
AF-Muon, so positive values indicate that AF-Muon is better.
}
\label{tab:app_nanogpt_per_seed}
\begin{tabular}{lcccc}
\toprule
Method & Seed & Train loss $\downarrow$ & Validation loss $\downarrow$ & Validation PPL $\downarrow$ \\
\midrule
Hybrid Muon & 43 & 3.5182 & 3.5212 & 33.83 \\
Hybrid Muon & 44 & 3.5133 & 3.5152 & 33.62 \\
Hybrid Muon & 45 & 3.5134 & 3.5165 & 33.67 \\
SCION-style Sign & 43 & 3.5185 & 3.5272 & 34.03 \\
SCION-style Sign & 44 & 3.5001 & 3.5046 & 33.27 \\
SCION-style Sign & 45 & 3.5187 & 3.5247 & 33.94 \\
AF-Muon & 43 & 3.4910 & 3.4936 & 32.91 \\
AF-Muon & 44 & 3.4896 & 3.4898 & 32.78 \\
AF-Muon & 45 & 3.4889 & 3.4921 & 32.86 \\
% \midrule
% AF-Muon vs. Hybrid Muon & paired $t$-test & -- & gain $0.0258$ & $p=0.0013$ \\
\midrule
\multicolumn{2}{l}{AF-Muon vs.\ Hybrid Muon (paired $t$-test)}
 & \multicolumn{3}{c}{validation-loss gain $=0.0258$, $\ p=0.0013$} \\
% \midrule
% AF-Muon vs.\ Hybrid Muon & paired $t$-test & -- & gain $0.0258$,\ $p{=}0.0013$ & -- \\
\bottomrule
\end{tabular}
\end{table}

\begin{table}[ht]
\centering
\caption{
Final NanoGPT tied-vocabulary diagnostics. Hybrid Muon uses auxiliary AdamW
weight decay $0.01$; SCION-style Sign and AF-Muon use zero auxiliary/tied weight
decay.
}
\label{tab:app_nanogpt_diagnostics}
\begin{tabular}{lccccc}
\toprule
Method & Seed & Tied RMS & Tied max & Matrix RMS & Aux WD \\
\midrule
Hybrid Muon & 43 & 0.0427 & 0.2500 & 0.0463 & 0.01 \\
Hybrid Muon & 44 & 0.0429 & 0.2543 & 0.0465 & 0.01 \\
Hybrid Muon & 45 & 0.0431 & 0.2568 & 0.0464 & 0.01 \\
SCION-style Sign & 43 & 0.5528 & 7.8773 & 0.0473 & 0.00 \\
SCION-style Sign & 44 & 0.6238 & 7.8289 & 0.0466 & 0.00 \\
SCION-style Sign & 45 & 0.5548 & 7.8897 & 0.0471 & 0.00 \\
AF-Muon & 43 & 0.3821 & 5.8463 & 0.0469 & 0.00 \\
AF-Muon & 44 & 0.3895 & 9.2696 & 0.0469 & 0.00 \\
AF-Muon & 45 & 0.3849 & 9.4472 & 0.0468 & 0.00 \\
\bottomrule
\end{tabular}
\end{table}

The diagnostics show that AF-Muon improves validation loss while keeping the
shared tied table substantially smaller than the SCION-style Sign endpoint in
RMS norm. Hybrid Muon keeps the tied table much smaller because its auxiliary
AdamW branch includes weight decay, but this additional radial shrinkage does
not close the performance gap. This supports the interpretation that the
finite-cap tied-table geometry is not merely acting as uncontrolled norm growth:
it gives a better tied-table update direction under the matched matrix-update
protocol.

\begin{figure}[t]
    \centering
    \includegraphics[width=\linewidth]{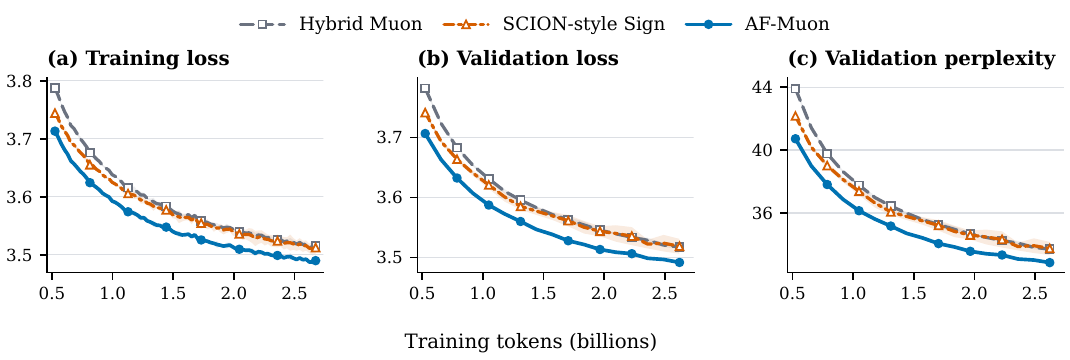}
    \caption{
    NanoGPT-style decoder-only training dynamics on FineWeb. Curves show the mean
    over three paired seeds, and shaded regions denote $\pm 1$ standard error of
    the mean.
    }
    \label{fig:nanogpt_tied_vocab_curves}
\end{figure}

% \newpage
\subsection{SmolLM2-135M}
\label{app:smollm2_tied_vocab}

We include a SmolLM2-135M decoder-only experiment to test AF-Muon in a
pretrained-tokenizer tied input/output embedding language-model setting. The
model has approximately 134.5M trainable parameters and uses the SmolLM2
tokenizer and tied input/output vocabulary table:
% \[
% E_{\mathrm{in}} = W_{\mathrm{out}}^\top .
% \]
\( E_{\mathrm{in}} = W_{\mathrm{out}}^\top \)
Thus the shared table receives sparse input-lookup gradients and dense
output-classifier gradients.

\begin{table}[ht]
\centering
\caption{
Per-seed SmolLM2-135M decoder-only results. All runs use tied input/output
embeddings and 2.49B training tokens. The final row reports a paired two-sided
$t$-test over validation loss between AF-Muon and the strongest non-AF baseline
by mean validation loss; the gain is baseline minus AF-Muon, so positive values
indicate that AF-Muon is better.
}
\label{tab:app_smollm2_per_seed}
\begin{tabular}{lcccc}
\toprule
Method & Seed & Train loss $\downarrow$ & Validation loss $\downarrow$ & Validation PPL $\downarrow$ \\
\midrule
Hybrid Muon & 43 & 3.5118 & 3.5047 & 33.27 \\
Hybrid Muon & 44 & 3.5082 & 3.5058 & 33.31 \\
Hybrid Muon & 45 & 3.5063 & 3.5088 & 33.41 \\
SCION-style Sign & 43 & 3.4777 & 3.4744 & 32.28 \\
SCION-style Sign & 44 & 3.4757 & 3.4663 & 32.02 \\
SCION-style Sign & 45 & 3.4649 & 3.4712 & 32.17 \\
AF-Muon & 43 & 3.4451 & 3.4405 & 31.20 \\
AF-Muon & 44 & 3.4460 & 3.4461 & 31.38 \\
AF-Muon & 45 & 3.4449 & 3.4473 & 31.41 \\
\midrule
\multicolumn{2}{l}{AF-Muon vs.\ SCION-style (paired $t$-test)}
& \multicolumn{3}{c}{validation-loss gain $=0.0260$, $p=0.0239$} \\
\bottomrule
\end{tabular}
\end{table}

\begin{table}[ht]
\centering
\caption{
Final SmolLM2-135M tied-vocabulary diagnostics. Hybrid Muon uses auxiliary/tied
AdamW weight decay $0.01$; SCION-style Sign and AF-Muon use zero auxiliary/tied
weight decay.
}
\label{tab:app_smollm2_diagnostics}
\begin{tabular}{lccccc}
\toprule
Method & Seed & Tied RMS & Tied max & Matrix RMS & Aux WD \\
\midrule
Hybrid Muon & 43 & 0.0549 & 0.3561 & 0.0561 & 0.01 \\
Hybrid Muon & 44 & 0.0552 & 0.3029 & 0.0559 & 0.01 \\
Hybrid Muon & 45 & 0.0551 & 0.3498 & 0.0560 & 0.01 \\
SCION-style Sign & 43 & 0.6611 & 10.0026 & 0.0562 & 0.00 \\
SCION-style Sign & 44 & 0.6524 & 10.0348 & 0.0560 & 0.00 \\
SCION-style Sign & 45 & 0.6668 & 10.0104 & 0.0560 & 0.00 \\
AF-Muon & 43 & 0.4404 & 9.3857 & 0.0566 & 0.00 \\
AF-Muon & 44 & 0.4457 & 12.9820 & 0.0567 & 0.00 \\
AF-Muon & 45 & 0.4415 & 13.6058 & 0.0566 & 0.00 \\
\bottomrule
\end{tabular}
\end{table}

All runs use FP32 trainable parameters and optimizer state with BF16 autocast,
constant learning rates, matrix learning rate $0.02$, auxiliary/vector learning
rate $3\times10^{-4}$, matrix weight decay $0.1$, gradient clipping at norm
$1.0$, batch size 512, context length 1024, and 2.49B training tokens. The
matched large-batch protocol uses micro-batch size 32 and gradient accumulation
16, giving 524k tokens per optimizer update. The Hybrid Muon baseline applies
auxiliary/tied AdamW weight decay $0.01$ to the tied and one-dimensional
auxiliary parameters. AF-Muon and SCION-style Sign use zero auxiliary/tied
weight decay.

The diagnostics show that AF-Muon improves validation loss while keeping the
shared tied table smaller in RMS norm than the SCION-style Sign endpoint. Hybrid
Muon keeps the tied table much smaller because its auxiliary AdamW branch
includes weight decay, but this additional radial shrinkage does not close the
performance gap. This supports the interpretation that the finite-cap tied-table
geometry is not merely acting as uncontrolled norm growth: it gives a better
tied-table update direction under the matched matrix-update protocol.

\subsection{Qwen2.5-0.5B}
\label{app:qwen25_0p5b_tied_vocab}

We include Qwen2.5-0.5B to test AF-Muon on a larger decoder-only architecture
with tied input/output embeddings. The model uses the Qwen2.5-0.5B tokenizer and
a shared input/output vocabulary table,
% \[
% E_{\mathrm{in}} = W_{\mathrm{out}}^\top .
% \]
\( E_{\mathrm{in}} = W_{\mathrm{out}}^\top \)
As in the NanoGPT and SmolLM2 experiments, this shared table receives both sparse
input-lookup gradients and dense output-classifier gradients.

All runs use FP32 trainable parameters and optimizer state with BF16 autocast,
constant learning rates, matrix learning rate $0.02$, auxiliary/vector learning
rate $3\times10^{-4}$, matrix weight decay $0.1$, gradient clipping at norm
$1.0$, context length 1024, and 524,288 tokens per optimizer update. The short
three-seed runs use 1B training tokens. Hybrid Muon applies auxiliary AdamW
weight decay $0.01$ to the tied and one-dimensional auxiliary parameters, while
SCION-style Sign and AF-Muon use zero auxiliary/tied weight decay.

\begin{table}[ht]
\centering
\caption{
Per-seed Qwen2.5-0.5B short-horizon results. Values are reported at the last
evaluation before the 1B-token budget, corresponding to 0.918B training tokens.
The final row reports a paired two-sided $t$-test over validation loss between
AF-Muon and the strongest non-AF baseline by mean validation loss; the gain is
baseline minus AF-Muon, so positive values indicate that AF-Muon is better.
}
\label{tab:app_qwen25_0p5b_per_seed}
\begin{tabular}{lcccc}
\toprule
Method & Seed & Train loss $\downarrow$ & Validation loss $\downarrow$ & Validation PPL $\downarrow$ \\
\midrule
Hybrid Muon & 43 & 3.6186 & 3.6349 & 37.90 \\
Hybrid Muon & 44 & 3.6293 & 3.6406 & 38.12 \\
Hybrid Muon & 45 & 3.6329 & 3.6424 & 38.18 \\
SCION-style Sign & 43 & 3.5935 & 3.6075 & 36.87 \\
SCION-style Sign & 44 & 3.5990 & 3.6133 & 37.09 \\
SCION-style Sign & 45 & 3.6026 & 3.6119 & 37.04 \\
AF-Muon & 43 & 3.5660 & 3.5805 & 35.89 \\
AF-Muon & 44 & 3.5715 & 3.5831 & 35.98 \\
AF-Muon & 45 & 3.5763 & 3.5858 & 36.08 \\
\midrule
\multicolumn{2}{l}{AF-Muon vs.\ SCION-style (paired $t$-test)}
& \multicolumn{3}{c}{validation-loss gain $=0.0278$, $p=0.0020$} \\
\bottomrule
\end{tabular}
\end{table}

At the near-1B evaluation point, AF-Muon achieves the lowest validation loss on
all three paired Qwen2.5-0.5B seeds. The paired improvements are
$0.0562\pm0.0016$ validation-loss points over Hybrid Muon and
$0.0278\pm0.0022$ points over SCION-style Sign. This ordering is also preserved
in the seed-43 long-horizon continuation in
Appendix~\ref{app:qwen25_0p5b_long_horizon}.

\begin{table}[ht]
\centering
\caption{
Final Qwen2.5-0.5B tied-table diagnostics for the 1B short-horizon seed-44 and
seed-45 runs. Hybrid Muon uses auxiliary AdamW weight decay $0.01$;
SCION-style Sign and AF-Muon use zero auxiliary/tied weight decay. Seed-43
checkpoint diagnostics are reported separately for the long-horizon continuation
in Table~\ref{tab:app_qwen25_0p5b_long_horizon_diagnostics}.
}
\label{tab:app_qwen25_0p5b_diagnostics}
\begin{tabular}{lccccc}
\toprule
Method & Seed & Tied RMS & Tied max & Matrix RMS & Aux WD \\
\midrule
Hybrid Muon & 44 & 0.0481 & 0.3676 & 0.0435 & 0.01 \\
Hybrid Muon & 45 & 0.0488 & 0.3256 & 0.0433 & 0.01 \\
SCION-style Sign & 44 & 0.2732 & 2.6204 & 0.0440 & 0.00 \\
SCION-style Sign & 45 & 0.2666 & 2.6204 & 0.0439 & 0.00 \\
AF-Muon & 44 & 0.1997 & 2.6786 & 0.0445 & 0.00 \\
AF-Muon & 45 & 0.1934 & 3.0957 & 0.0444 & 0.00 \\
\bottomrule
\end{tabular}
\end{table}
The short-horizon checkpoint diagnostics show the same qualitative pattern as
the long-horizon Qwen continuation: Hybrid Muon keeps the tied table smallest
because of auxiliary/tied AdamW weight decay, SCION-style Sign produces the
largest tied-table RMS norm, and AF-Muon remains intermediate while achieving
the best validation loss.

\begin{figure}[ht]
   \centering
   \includegraphics[width=\linewidth]{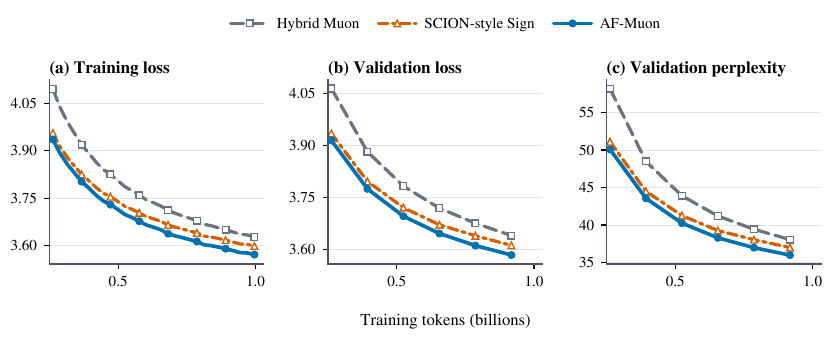}
   \caption{
   Qwen2.5-0.5B training dynamics on FineWeb. Curves show the mean over three
   paired seeds, and shaded regions denote $\pm 1$ standard error of the mean.
   The plot shows the short-horizon region used for the main three-seed
   comparison, omitting the earliest training region for readability. All
   methods use tied input/output embeddings, FP32 trainable parameters and
   optimizer state with BF16 autocast, matrix weight decay $0.1$, and the same
   matched large-batch constant-learning-rate protocol.
   }
   \label{fig:qwen25_0p5b_tied_vocab_curves}
\end{figure}

\subsubsection{Qwen2.5-0.5B Long-horizon Continuation}
\label{app:qwen25_0p5b_long_horizon}

% In the 6.29B-token seed-43 continuation, AF-Muon obtains the lowest final
% validation loss, followed by SCION-style Sign and then Hybrid Muon. AF-Muon
% improves over Hybrid Muon by $0.0316$ validation-loss points and over
% SCION-style Sign by $0.0138$ points at the endpoint. 
In the seed-43 continuation trained to 6.40B tokens, AF-Muon obtains the lowest validation loss at the last scheduled evaluation, 6.29B tokens, followed by SCION-style Sign and then Hybrid Muon. AF-Muon improves over Hybrid Muon by $0.0316$ validation-loss points and over SCION-style Sign by $0.0138$ points at that evaluation point.
The final-checkpoint
diagnostics show the same qualitative tied-table norm ordering observed in the
other tied-vocabulary experiments: Hybrid Muon keeps the tied table smallest
because its auxiliary AdamW branch applies weight decay, SCION-style Sign
produces the largest tied-table norm, and AF-Muon remains intermediate while
obtaining the best endpoint loss. As with the Llama-3.2-1B continuation, this is
a single-seed stability check for the short-horizon three-seed comparison rather
than an independent multi-seed estimate.

% \begin{figure}[ht]
%    \centering
%    \includegraphics[width=\linewidth]{figs/qwen25/qwen25_0p5b_seed43_long_horizon_curves.pdf}
%    \caption{
%    Qwen2.5-0.5B long-horizon seed-43 continuation. The three optimizer arms are
%    trained under the same matched large-batch protocol as the short-horizon
%    runs. This figure is a single-seed stability check, not a replacement for the
%    three-seed short-horizon comparison in Table~\ref{tab:qwen25_0p5b_tied_vocab}.
%    }
%    \label{fig:app_qwen25_0p5b_seed43_long_horizon}
% \end{figure}

\begin{table}[ht]
\centering
\caption{
% Qwen2.5-0.5B long-horizon seed-43 endpoint results. The run continues the same
% seed used in the paired short-horizon comparison to 6.29B training tokens. This
% single-seed continuation is used as a stability check, not as a three-seed
% estimate.
Qwen2.5-0.5B long-horizon seed-43 endpoint results. The run continues the same seed used in the paired short-horizon comparison to 6.40B training tokens; the validation endpoint below is the last scheduled evaluation, at 6.29B tokens. This single-seed continuation is used as a stability check, not as a three-seed
estimate.
}
\label{tab:app_qwen25_0p5b_long_horizon_results}
\begin{tabular}{lccc}
\toprule
Optimizer & Train loss $\downarrow$ & Validation loss $\downarrow$ & Validation PPL $\downarrow$ \\
\midrule
Hybrid Muon & 3.4254 & 3.4274 & 30.80 \\
SCION-style Sign & 3.4068 & 3.4096 & 30.25 \\
AF-Muon & \textbf{3.3922} & \textbf{3.3957} & \textbf{29.84} \\
\bottomrule
\end{tabular}
\end{table}

\begin{table}[ht]
\centering
\caption{
Final Qwen2.5-0.5B long-horizon seed-43 checkpoint diagnostics. The tied-table
and matrix norms are measured from the final model-only checkpoint at 6.40B
training tokens. Hybrid Muon uses auxiliary/tied AdamW weight decay $0.01$;
SCION-style Sign and AF-Muon use zero auxiliary/tied weight decay. All runs use
FP32 trainable parameters and optimizer state with BF16 autocast.
}
\label{tab:app_qwen25_0p5b_long_horizon_diagnostics}
\resizebox{\linewidth}{!}{%
\begin{tabular}{lcccccc}
\toprule
Optimizer & Tied RMS & Tied max & Matrix RMS & Matrix WD & Aux./tied WD & Precision \\
\midrule
Hybrid Muon & 0.0553 & 0.4511 & 0.0439 & 0.1 & 0.01 & FP32 + BF16 \\
SCION-style Sign & 0.5815 & 7.4900 & 0.0450 & 0.1 & 0.00 & FP32 + BF16 \\
AF-Muon & 0.3868 & 6.5167 & 0.0452 & 0.1 & 0.00 & FP32 + BF16 \\
\bottomrule
\end{tabular}%
}
\end{table}

\begin{figure}[ht]
   \centering
   \includegraphics[width=0.9\linewidth]{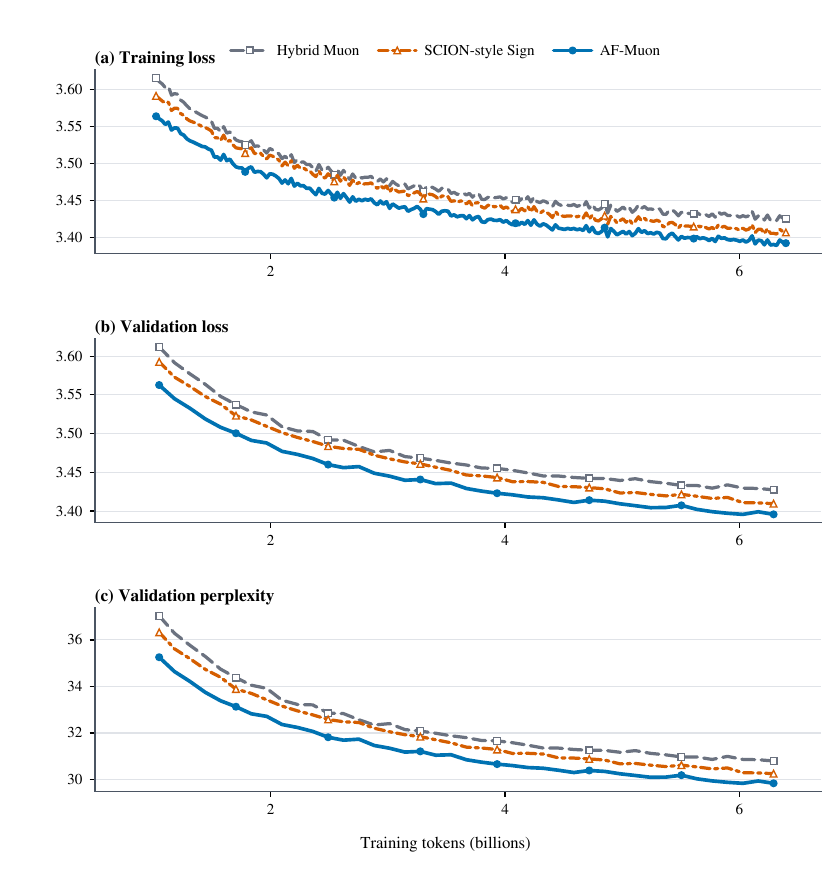}
   \caption{
   Qwen2.5-0.5B long-horizon seed-43 continuation. The three optimizer arms are
   trained under the same matched large-batch protocol as the short-horizon
   runs. The plotted curves omit the earliest training region and begin after
   1B tokens to focus on the long-horizon behavior. This figure is a single-seed stability check, not a replacement for the three-seed short-horizon
   comparison. AF-Muon preserves the best validation loss at the last scheduled evaluation point, 6.29B tokens.
   }
   \label{fig:app_qwen25_0p5b_seed43_long_horizon}
\end{figure}
%----------------------------
\newpage
\subsection{Llama-3.2-1B Decoder-only Results}
\label{app:llama32_1b_tied_vocab}

We include Llama-3.2-1B to test AF-Muon on a larger decoder-only architecture
with a high-capacity tied vocabulary table. The model uses the Llama-3.2-1B
tokenizer and tied input/output embeddings,
% \[
% E_{\mathrm{in}} = W_{\mathrm{out}}^\top .
% \]
\( E_{\mathrm{in}} = W_{\mathrm{out}}^\top \)
As in the other decoder-only language-model experiments, the shared table
receives both sparse input-lookup gradients and dense output-classifier
gradients.

All runs use FP32 trainable parameters and optimizer state with BF16 autocast,
constant learning rates, matrix learning rate $0.02$, auxiliary/vector learning
rate $3\times10^{-4}$, matrix weight decay $0.1$, gradient clipping at norm
$1.0$, context length 1024, and 524,288 tokens per optimizer update. The short
three-seed runs use 2B training tokens. Hybrid Muon applies auxiliary/tied AdamW
weight decay $0.01$ to the tied and one-dimensional auxiliary parameters, while
SCION-style Sign and AF-Muon use zero auxiliary/tied weight decay. The
long-horizon seed-43 continuation uses the same settings and extends training to
9.98B tokens.

\begin{table}[ht]
\centering
\caption{
Per-seed Llama-3.2-1B short-horizon results. Values are reported at the final
available evaluation near the 2B-token budget. Seed 43 is read from the
long-horizon continuation at the last pre-2B evaluation point, 1.966B tokens;
seeds 44 and 45 are read from the 2B short-horizon runs. The final row reports
a paired two-sided $t$-test over validation loss between AF-Muon and the
strongest non-AF baseline by mean validation loss; the gain is baseline minus
AF-Muon, so positive values indicate that AF-Muon is better.
}
\label{tab:app_llama32_1b_per_seed}
\begin{tabular}{lcccc}
\toprule
Method & Seed & Train loss $\downarrow$ & Validation loss $\downarrow$ & Validation PPL $\downarrow$ \\
\midrule
Hybrid Muon & 43 & 3.4927 & 3.5348 & 34.29 \\
Hybrid Muon & 44 & 3.4908 & 3.5400 & 34.47 \\
Hybrid Muon & 45 & 3.4962 & 3.5415 & 34.52 \\
SCION-style Sign & 43 & 3.4755 & 3.5139 & 33.58 \\
SCION-style Sign & 44 & 3.4750 & 3.5280 & 34.06 \\
SCION-style Sign & 45 & 3.4809 & 3.5256 & 33.97 \\
AF-Muon & 43 & 3.4600 & 3.5012 & 33.15 \\
AF-Muon & 44 & 3.4565 & 3.5107 & 33.47 \\
AF-Muon & 45 & 3.4617 & 3.5051 & 33.28 \\
\midrule
\multicolumn{2}{l}{AF-Muon vs.\ SCION-style (paired $t$-test)}
& \multicolumn{3}{c}{validation-loss gain $=0.0168$, $p=0.0176$} \\
\bottomrule
\end{tabular}
\end{table}

\begin{table}[ht]
\centering
\caption{
Final Llama-3.2-1B tied-table diagnostics for the 2B short-horizon seed-44 and
seed-45 runs. Hybrid Muon uses auxiliary/tied AdamW weight decay $0.01$;
SCION-style Sign and AF-Muon use zero auxiliary/tied weight decay. Seed-43
checkpoint diagnostics are reported separately for the long-horizon continuation
in Table~\ref{tab:app_llama32_1b_long_horizon_diagnostics}.
}
\label{tab:app_llama32_1b_diagnostics}
\begin{tabular}{lccccc}
\toprule
Method & Seed & Tied RMS & Tied max & Matrix RMS & Aux WD \\
\midrule
Hybrid Muon & 44 & 0.0311 & 0.3537 & 0.0287 & 0.01 \\
Hybrid Muon & 45 & 0.0310 & 0.2848 & 0.0287 & 0.01 \\
SCION-style Sign & 44 & 0.1690 & 2.2859 & 0.0287 & 0.00 \\
SCION-style Sign & 45 & 0.1718 & 2.3012 & 0.0287 & 0.00 \\
AF-Muon & 44 & 0.1129 & 2.2891 & 0.0290 & 0.00 \\
AF-Muon & 45 & 0.1125 & 1.3577 & 0.0290 & 0.00 \\
\bottomrule
\end{tabular}
\end{table}

The checkpoint diagnostics show the same tied-table norm ordering as in the
Qwen2.5-0.5B runs: Hybrid Muon keeps the tied table smallest because its
auxiliary AdamW branch applies weight decay, SCION-style Sign produces the
largest tied-table RMS norm, and AF-Muon remains intermediate while achieving
the best validation loss.

\subsubsection{Llama-3.2-1B long-horizon continuation}
\label{app:llama32_1b_long_horizon}

To test whether the short-horizon optimizer ordering persists beyond the
three-seed 2B-token budget, we continue seed 43 to 9.98B training tokens. This
run is not used as an independent three-seed estimate; instead, it checks
whether the behavior observed in the paired short-horizon runs remains stable
when one representative seed is trained for a substantially larger token budget.

\begin{table}[ht]
\centering
\caption{
Llama-3.2-1B long-horizon seed-43 endpoint results.
}
\label{tab:app_llama32_1b_long_horizon_results}
\begin{tabular}{lccc}
\toprule
Optimizer & Train loss $\downarrow$ & Validation loss $\downarrow$ & Validation PPL $\downarrow$ \\
\midrule
Hybrid Muon & 3.3848 & 3.4251 & 30.73 \\
SCION-style Sign & 3.3534 & 3.3951 & 29.82 \\
AF-Muon & \textbf{3.3504} & \textbf{3.3912} & \textbf{29.70} \\
\bottomrule
\end{tabular}
\end{table}

\begin{table}[ht]
\centering
\caption{
Final Llama-3.2-1B long-horizon seed-43 diagnostics. All runs use FP32
trainable parameters and optimizer state with BF16 autocast.
}
\label{tab:app_llama32_1b_long_horizon_diagnostics}
\resizebox{\linewidth}{!}{%
\begin{tabular}{lccccccc}
\toprule
Optimizer & Grad norm & Tied RMS & Tied max & Matrix WD & Aux. WD & Tied WD & Precision \\
\midrule
Hybrid Muon & 0.3709 & 0.0368 & 0.3828 & 0.1 & 0.01 & 0.01 & FP32 + BF16 \\
SCION-style Sign & 0.3029 & 0.3656 & 6.1341 & 0.1 & 0.00 & 0.00 & FP32 + BF16 \\
AF-Muon & 0.3084 & 0.2463 & 2.8759 & 0.1 & 0.00 & 0.00 & FP32 + BF16 \\
\bottomrule
\end{tabular}%
}
\end{table}
% In the 9.98B-token seed-43 continuation, AF-Muon preserves the short-horizon
% ordering and achieves the lowest final validation loss. The AF-Muon advantage
% over SCION-style Sign is small at this horizon, while both AdamW-free tied-table
% arms remain clearly ahead of Hybrid Muon. The diagnostics show the same norm
% ordering observed in the smaller tied-vocabulary experiments: Hybrid Muon keeps
% the tied table smallest because its auxiliary AdamW branch applies weight decay,
% SCION-style Sign produces the largest tied-table norm, and AF-Muon remains
% intermediate while obtaining the best endpoint loss.
% \begin{figure}[ht]
%    \centering
%    \includegraphics[width=0.95\linewidth]{figs/llama32p/llama32_1b_seed43_long_horizon_curves_vertical.pdf}
%    \caption{
%    Llama-3.2-1B long-horizon seed-43 continuation. The three optimizer arms are
%    trained under the same matched large-batch protocol as the short-horizon
%    runs. The plotted curves omit the earliest training region and begin after
%    1B tokens to focus on the long-horizon behavior. This figure is a single-seed
%    stability check, not a replacement for the three-seed short-horizon
%    comparison. AF-Muon preserves the best endpoint validation loss at 9.98B
%    tokens, while both AdamW-free tied-table arms remain ahead of Hybrid Muon.
%    }
%    \label{fig:app_llama32_1b_seed43_long_horizon}
% \end{figure}
\begin{figure}[!t]
  \centering
  \includegraphics[
    width=0.98\linewidth,
    height=0.72\textheight,
    keepaspectratio,
    trim=0 7 0 15,
    clip
  ]{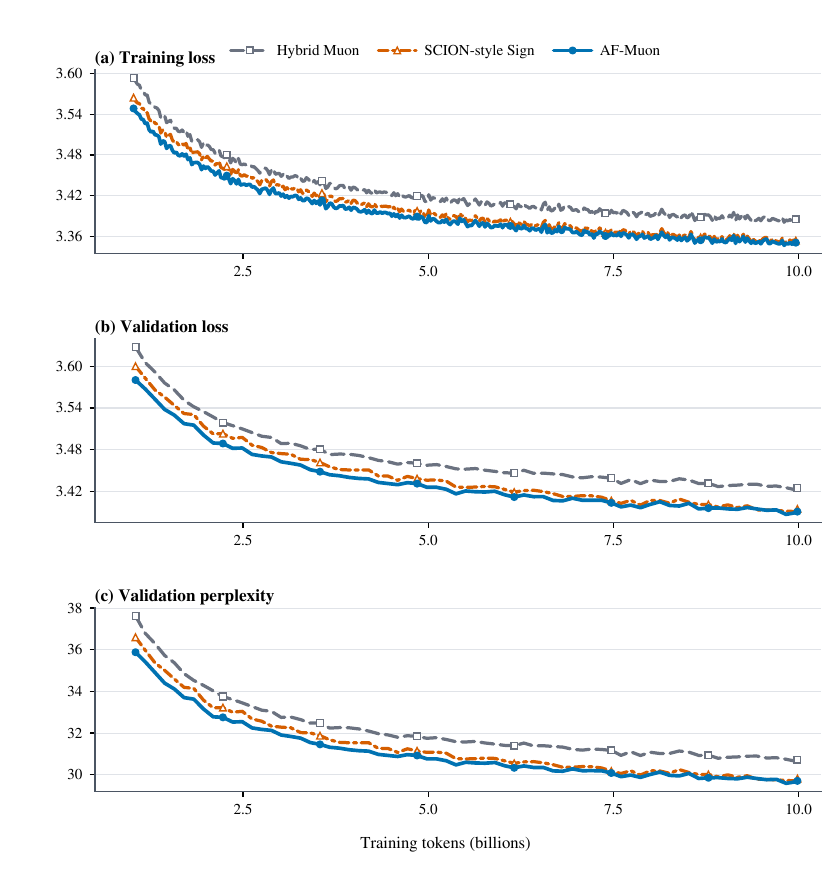}
  \vspace{-0.6em}
  \caption{
  Llama-3.2-1B long-horizon seed-43 continuation. The curves omit the earliest
  training region and begin after 1B tokens to focus on long-horizon behavior.
  This is a single-seed stability check, not a replacement for the three-seed
  short-horizon comparison. AF-Muon preserves the best endpoint validation loss
  at 9.98B tokens, while both AdamW-free tied-table arms remain ahead of Hybrid
  Muon.
  }
  \label{fig:app_llama32_1b_seed43_long_horizon}
\end{figure}

In the 9.98B-token seed-43 continuation shown in
Figure~\ref{fig:app_llama32_1b_seed43_long_horizon}, AF-Muon obtains the lowest
final validation loss, with SCION-style Sign close behind and both AdamW-free
tied-table arms clearly ahead of Hybrid Muon. The final AF-Muon advantage over
SCION-style Sign is small, but the ordering is preserved at the end of the
long-horizon run. We therefore use this continuation as a stability check for
the three-seed short-horizon comparison, rather than as a separate multi-seed
estimate.

\clearpage
\subsection{Shared-vocabulary encoder--decoder experiments}
\label{app:t5test_shared_vocab}

\begin{figure}[!t]
   \centering
   \includegraphics[
     width=\linewidth,
     trim=0 5 0 5,
     clip
   ]{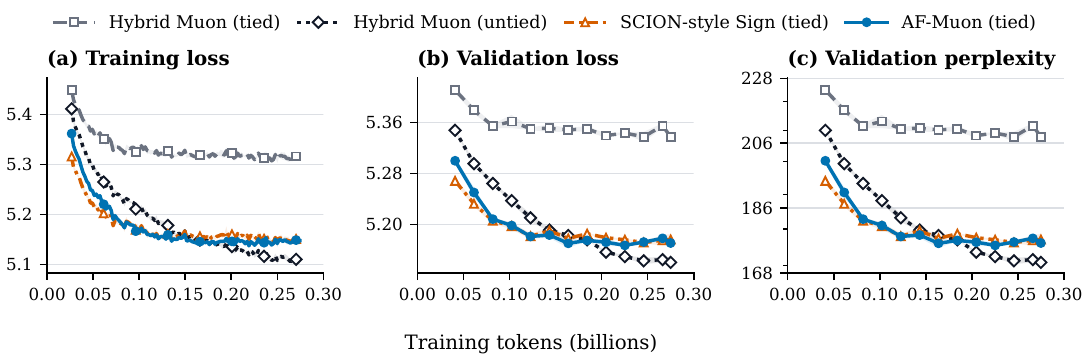}
   \vspace{-10pt}
   \caption{
   Small T5-style training dynamics under shared-vocabulary and untied-vocabulary topologies. Curves show the mean over three seeds. The untied Hybrid Muon control has nearly twice as many parameters, because the encoder embedding, decoder embedding, and output head are separate.
   }
   \vspace{-7pt}
   \label{fig:t5test_shared_vocab_curves}
\end{figure}

% We include an additional encoder--decoder diagnostic to test AF-Muon in a
% shared-parameter topology different from decoder-only language modeling.
% The model is a small T5-style encoder--decoder trained with a
% prefix-to-continuation objective on FineWeb. The encoder input length and decoder
% target length are both 256 tokens. The model has 4 encoder layers, 4 decoder
% layers, hidden size 512, feed-forward size 2048, 8 attention heads, and a
% 49,152-token vocabulary. Unless otherwise stated, all runs use FP32 trainable
% parameters and optimizer state with BF16 autocast, constant learning rates,
% matrix weight decay $0.1$, and no learning-rate schedule.

% In the fully shared setting, the encoder embedding, decoder embedding, and
% output projection use one shared vocabulary table:
% % \[
% % E_{\mathrm{enc}} = E_{\mathrm{dec}} = W_{\mathrm{out}}^\top .
% % \]
% \( E_{\mathrm{enc}} = E_{\mathrm{dec}} = W_{\mathrm{out}}^\top \)
% Thus the shared table receives three structurally different gradient
% contributions: sparse encoder lookup gradients, sparse decoder lookup gradients,
% and dense output-head gradients. This makes the setting a useful diagnostic for
% mixed-role shared parameters.
We include two encoder--decoder experiments to test AF-Muon in a shared-parameter
topology different from decoder-only language modeling. In both cases, one
vocabulary table is used for encoder input, decoder input, and output
classification, so the table receives sparse encoder-lookup gradients, sparse
decoder-lookup gradients, and dense output-head gradients.
They answer complementary questions:
\begin{itemize}[leftmargin=*]
    \item \textbf{Small T5-style:} Does AF-Muon work in a controlled
    encoder--decoder setup, including a tied-versus-untied comparison?
    \item \textbf{T5Gemma2-style:} Does the optimizer recipe transfer to a
    different, larger encoder--decoder configuration?
\end{itemize}
\paragraph{Small T5-style diagnostic.}
The small controlled model is a T5-style encoder--decoder trained with a
prefix-to-continuation objective on FineWeb Table~\ref{tab:app_t5test_per_seed}. The encoder input length and decoder
target length are both 256 tokens. The model has 4 encoder layers, 4 decoder
layers, hidden size 512, feed-forward size 2048, 8 attention heads, and a
49,152-token vocabulary. Unless otherwise stated, runs use FP32 trainable
parameters and optimizer state with BF16 autocast, constant learning rates,
matrix weight decay $0.1$, and no learning-rate schedule. In the fully shared
setting,
\(E_{\mathrm{enc}} = E_{\mathrm{dec}} = W_{\mathrm{out}}^\top\).

\begin{table}[ht]
\centering
\caption{
Single-seed $\rho$ sweep for AF-Muon on small T5-style shared-vocabulary encoder–decoder with a fully shared
vocabulary table. All runs use 110M target tokens, matrix weight decay $0.1$,
and zero auxiliary/tied weight decay. We select $\rho_{\mathrm{out}} /
\rho_{\mathrm{hid}} = 1$ for the main three-seed experiment.
}
\label{tab:app_t5test_afmoun_rho_sweep}
\small
\setlength{\tabcolsep}{3.2pt}
\renewcommand{\arraystretch}{0.95}
\begin{tabular}{lcccccc}
\toprule
Method & $\rho_{\mathrm{out}}/\rho_{\mathrm{hid}}$ & Tied LR & Eval loss $\downarrow$ & Eval PPL $\downarrow$ & Shared RMS & Shared max \\
\midrule
Hybrid Muon & -- & -- & 5.3468 & 209.93 & 0.142 & 3.19 \\
AF-Muon & 1 & $1.9531\times10^{-5}$ & \textbf{5.2652} & \textbf{193.48} & 0.318 & 7.51 \\
AF-Muon & 5 & $9.7656\times10^{-5}$ & 5.3536 & 211.36 & 0.721 & 8.99 \\
AF-Muon & 10 &$1.9531\times10^{-4}$ & 5.4220 & 226.34 & 1.136 & 20.43 \\
AF-Muon & 60 & $1.1719\times10^{-3}$ & 5.4864 & 241.38 & 2.157 & 37.02 \\
\bottomrule
\end{tabular}
\end{table}

\begin{table}[ht]
\centering
\caption{
Per-seed small T5-style shared-vocabulary encoder--decoder results for the
selected shared-vocabulary setting. All runs use 275M target tokens. Hybrid Muon uses auxiliary AdamW weight decay $0.01$; SCION-style Sign and AF-Muon use zero auxiliary/tied weight decay. The table reports paired two-sided $t$-tests over validation loss comparing AF-Muon with Hybrid Muon and with the strongest non-AF baseline -- in the fully shared setting -- by mean validation loss.
}
\label{tab:app_t5test_per_seed}
\begin{tabular}{llccc}
\toprule
Method & Topology & Seed & Validation loss $\downarrow$ & Validation PPL $\downarrow$ \\
\midrule
Hybrid Muon &  & 43 & 5.3372 & 207.94 \\
Hybrid Muon &  & 44 & 5.3272 & 205.86 \\
Hybrid Muon &  & 45 & 5.3476 & 210.11 \\
SCION-style Sign  &  & 43 & 5.1643 & 174.91 \\
SCION-style Sign  & Fully shared & 44 & 5.1714 & 176.17 \\
SCION-style Sign  &  & 45 & 5.1930 & 180.01 \\
AF-Muon  &  & 43 & 5.1700 & 175.91 \\
AF-Muon  &  & 44 & 5.1745 & 176.72 \\
AF-Muon  &  & 45 & 5.1678 & 175.53 \\
\midrule
\multicolumn{2}{l}{AF-Muon vs.\ Hybrid Muon (paired $t$-test)}
& \multicolumn{3}{c}{validation-loss gain $=0.1666$, $p=0.0022$} \\
\multicolumn{2}{l}{AF-Muon vs.\ SCION-style Sign (paired $t$-test)}
& \multicolumn{3}{c}{validation-loss gain $=0.0055$, $p=0.6361$} \\
\midrule
Hybrid Muon &  & 43 & 5.1355 & 169.96 \\
Hybrid Muon & Fully untied & 44 & 5.1419 & 171.05 \\
Hybrid Muon &  & 45 & 5.1437 & 171.35 \\
\bottomrule
\end{tabular}
\end{table}

% The sweep shows that the decoder-only radius ratio used in the causal language
% model experiments is too large for this encoder--decoder shared-vocabulary
% topology. As $\rho_{\mathrm{out}}/\rho_{\mathrm{hid}}$ increases, the shared
% vocabulary table grows rapidly and validation loss degrades. We therefore use
% $\rho_{\mathrm{out}}/\rho_{\mathrm{hid}}=1$ for the main T5test comparison.
The sweep Table~\ref{tab:app_t5test_afmoun_rho_sweep} shows that the decoder-only radius ratio used in the causal language
model experiments is too large for this encoder--decoder topology: as
$\rho_{\mathrm{out}}/\rho_{\mathrm{hid}}$ increases, the shared table grows
rapidly and validation loss degrades. We therefore use
$\rho_{\mathrm{out}}/\rho_{\mathrm{hid}}=1$ for the main small T5-style
comparison.

The fully untied Hybrid Muon control is not parameter matched: it uses separate
encoder, decoder, and output vocabulary tables, increasing the model from 54.6M
to 104.9M parameters. This control isolates the cost of enforcing a single
shared table. Untying improves Hybrid Muon, but within the stricter fully shared
setting AF-Muon is the best tied method by mean validation loss, with only a
small and statistically insignificant margin over SCION-style Sign.

\paragraph{T5Gemma2-style transfer experiment.}
We also run a larger T5Gemma2-style random-init configuration based on
\citet{t5gemma2025}. This experiment tests whether the same optimizer recipe
transfers beyond the small controlled T5-style diagnostic. The run uses the
Hugging Face \texttt{google/t5gemma-2-270m-270m} configuration from random
initialization, a shared 262,208-row vocabulary table, source length 256, target
length 256, and 750M target tokens. The effective target-token batch is 524,288
tokens per update, using microbatch size 16 and 128 gradient-accumulation steps.
All trainable parameters are FP32 with BF16 autocast. We use the main
$\rho_{\mathrm{out}}/\rho_{\mathrm{hid}}=60$ setting, matrix weight decay $0.1$,
Hybrid Muon auxiliary weight decay $0.01$, and zero auxiliary/tied weight decay
for SCION-style Sign and AF-Muon.

\begin{table}[ht]
\centering
\caption{
T5Gemma2-style shared-vocabulary encoder--decoder results at 750M target tokens.
Runs use the random-init \texttt{google/t5gemma-2-270m-270m} configuration with
a shared 262k-row vocabulary table. Per-seed rows report validation metrics;
summary rows report mean $\pm$ SD over paired seeds. Paired tests compare
validation loss against AF-Muon; positive gain means AF-Muon is better.
}
\label{tab:app_t5gemma2_style}
\small
\setlength{\tabcolsep}{4pt}
\begin{tabular}{llccc}
\toprule
Method & Seed/Summary & Train loss $\downarrow$ & Validation loss $\downarrow$ & Validation PPL $\downarrow$ \\
\midrule
Hybrid Muon & 43 & 3.6366 & 3.6501 & 38.48 \\
Hybrid Muon & 44 & 3.6455 & 3.6578 & 38.77 \\
Hybrid Muon & 45 & 3.6407 & 3.6566 & 38.73 \\
Hybrid Muon & Mean $\pm$ SD & 3.6409 $\pm$ 0.0044 & 3.6548 $\pm$ 0.0041 & 38.66 $\pm$ 0.16 \\
\midrule
SCION-style Sign & 43 & 3.5705 & 3.5871 & 36.13 \\
SCION-style Sign & 44 & 3.5703 & 3.5867 & 36.11 \\
SCION-style Sign & 45 & 3.5685 & 3.5840 & 36.02 \\
SCION-style Sign & Mean $\pm$ SD & 3.5698 $\pm$ 0.0011 & 3.5859 $\pm$ 0.0017 & 36.09 $\pm$ 0.06 \\
\midrule
AF-Muon & 43 & 3.5561 & 3.5723 & 35.60 \\
AF-Muon & 44 & 3.5579 & 3.5742 & 35.67 \\
AF-Muon & 45 & 3.5578 & 3.5738 & 35.65 \\
AF-Muon & Mean $\pm$ SD & \textbf{3.5573 $\pm$ 0.0010} & \textbf{3.5735 $\pm$ 0.0010} & \textbf{35.64 $\pm$ 0.04} \\
\midrule
\multicolumn{2}{l}{AF-Muon vs.\ Hybrid Muon}
& \multicolumn{3}{c}{validation-loss gain $=0.0813$, paired $p=4.87\times10^{-4}$} \\
\multicolumn{2}{l}{AF-Muon vs.\ SCION-style Sign}
& \multicolumn{3}{c}{validation-loss gain $=0.0125$, paired $p=0.0112$} \\
\bottomrule
\end{tabular}
\end{table}

In the completed evaluated seeds, AF-Muon obtains the lowest validation loss and
perplexity. The AF-Muon--Sign gap is larger than in the small T5-style
diagnostic, but the two experiments differ in architecture, vocabulary
configuration, and training budget, so this comparison does not isolate
vocabulary size or model size as the cause. The result is therefore best
interpreted as evidence that the AF-Muon tied-table recipe transfers to a larger
encoder--decoder configuration.
\begin{figure}[t]
    \centering
    \includegraphics[width=\linewidth]{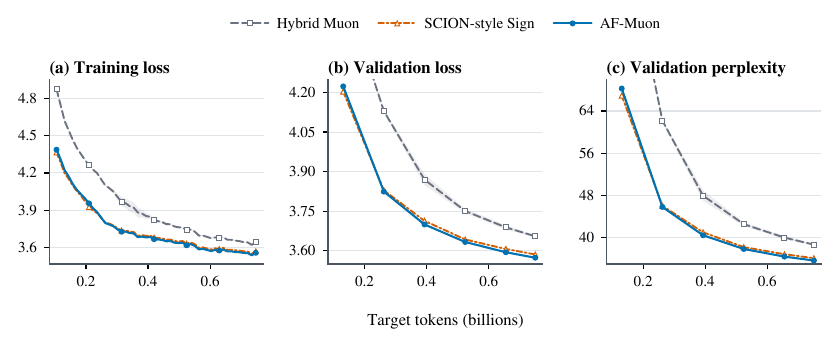}
    \caption{
    T5Gemma2-style shared-vocabulary encoder--decoder training dynamics after 100M target tokens. Curves show the mean over three paired seeds for Hybrid Muon, SCION-style Sign, and AF-Muon.
    }
    \label{fig:app_t5gemma2_style_curves}
\end{figure}
% \clearpage
\subsection{NanoGPT-MoE}
\label{app:moe_nanogpt_tied_vocab}

We include a tied-embedding mixture-of-experts ablation to test whether the
AF-Muon tied-table update remains effective when the decoder-only model uses
sparse conditional computation inside the transformer blocks. Starting from the
NanoGPT-style tied-vocabulary setting, we replace each dense MLP with a top-1
router over four expert MLPs. Each expert uses the same hidden expansion as the
dense NanoGPT MLP. The input and output vocabulary tables remain tied, so the
shared table receives sparse input-lookup gradients and dense output-classifier
gradients:
\[
E_{\mathrm{in}} = W_{\mathrm{out}}^\top .
\]

All runs use the same FineWeb token cache and 50,304-token tokenizer as the
NanoGPT tied-vocabulary experiment. The model has 12 layers, 6 attention heads,
head dimension 128, hidden size 768, four top-1 routed experts per layer, and
MLP hidden size 3072 per expert. We train for 2.62B tokens across three paired
seeds with FP32 trainable parameters and optimizer state, BF16 autocast, batch
size 512, context length 1024, matrix learning rate $0.02$, auxiliary/vector
learning rate $3\times10^{-4}$, matrix weight decay $0.1$, and gradient clipping
at norm $1.0$. The router auxiliary load-balancing coefficient is $0.01$ for all
optimizer arms, and the router $z$-loss coefficient is zero.
\begin{figure}[ht]
    \centering
    \includegraphics[width=\linewidth]{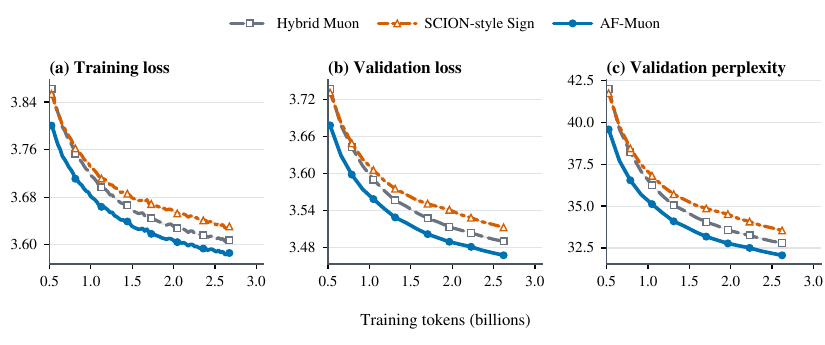}
    \caption{
    Tied-embedding NanoGPT-MoE training dynamics on FineWeb. Curves show the
    mean over three paired seeds, and shaded regions denote $\pm 1$ standard
    error of the mean. All methods use four top-1 routed experts per layer and
    the same router auxiliary load-balancing coefficient.
    }
    \label{fig:moe_nanogpt_tied_vocab_curves}
\end{figure}
The optimizer partition is matched across arms. Matrix parameters, including
attention, expert, and router matrices, use Muon in all three arms. The only
difference is the update applied to the tied table and remaining auxiliary
parameters: Hybrid Muon uses its auxiliary AdamW branch with auxiliary/tied
weight decay $0.01$, SCION-style Sign uses the Sign endpoint with zero
auxiliary/tied weight decay, and AF-Muon uses the finite-cap tied-table update
with zero auxiliary/tied weight decay.

\begin{table}[ht]
\centering
\caption{
Tied-embedding NanoGPT-MoE results on FineWeb for 2.62B tokens across three
paired seeds. We report mean $\pm$ SD over seeds. The model uses four top-1
routed experts per layer and a router auxiliary load-balancing coefficient of
$0.01$.
}
\label{tab:app_moe_nanogpt_summary}
\begin{tabular}{lccc}
\toprule
Optimizer & Train loss $\downarrow$ & Validation loss $\downarrow$ & Validation PPL $\downarrow$ \\
\midrule
Hybrid Muon & 3.4873 $\pm$ 0.0031 & 3.4902 $\pm$ 0.0019 & 32.79 $\pm$ 0.06 \\
SCION-style Sign & 3.5106 $\pm$ 0.0033 & 3.5129 $\pm$ 0.0025 & 33.55 $\pm$ 0.08 \\
AF-Muon & \textbf{3.4662 $\pm$ 0.0011} & \textbf{3.4677 $\pm$ 0.0007} & \textbf{32.06 $\pm$ 0.02} \\
\bottomrule
\end{tabular}
\end{table}

In the tied-embedding MoE setting, AF-Muon consistently achieves the lowest
validation loss. SCION-style Sign is competitive early in training, but Hybrid
Muon overtakes it later; at the final checkpoint AF-Muon improves over both
Hybrid Muon and SCION-style Sign. Quantitatively, AF-Muon improves validation
loss over Hybrid Muon by $0.0224\pm0.0023$ paired loss points and over
SCION-style Sign by $0.0452\pm0.0019$. The router diagnostics show that this
gain is not caused by expert collapse: all methods keep all four experts active,
with router entropy near one and low expert-load coefficient of variation.

\begin{table}[ht]
\centering
\caption{
Per-seed tied-embedding NanoGPT-MoE results. All runs use 2.62B training tokens,
four top-1 routed experts per layer, and the same router auxiliary loss. The
final row reports a paired two-sided $t$-test over validation loss between
AF-Muon and the strongest non-AF baseline by mean validation loss; the gain is
baseline minus AF-Muon, so positive values indicate that AF-Muon is better.
}
\label{tab:app_moe_nanogpt_per_seed}
\begin{tabular}{lccc}
\toprule
Method & Seed & Validation loss $\downarrow$ & Validation PPL $\downarrow$ \\
\midrule
Hybrid Muon & 43 & 3.4883 & 32.73 \\
Hybrid Muon & 44 & 3.4901 & 32.79 \\
Hybrid Muon & 45 & 3.4921 & 32.85 \\
SCION-style Sign & 43 & 3.5112 & 33.49 \\
SCION-style Sign & 44 & 3.5158 & 33.64 \\
SCION-style Sign & 45 & 3.5117 & 33.50 \\
AF-Muon & 43 & 3.4676 & 32.06 \\
AF-Muon & 44 & 3.4685 & 32.09 \\
AF-Muon & 45 & 3.4671 & 32.04 \\
\midrule
\multicolumn{2}{l}{AF-Muon vs.\ Hybrid Muon (paired $t$-test)}
& \multicolumn{2}{c}{validation-loss gain $=0.0224$, $p=0.0034$} \\
\bottomrule
\end{tabular}
\end{table}

\begin{table}[ht]
\centering
\caption{
Final tied-table and router diagnostics for the tied-embedding NanoGPT-MoE
ablation. Values are mean $\pm$ SD over three paired seeds.
}
\label{tab:app_moe_nanogpt_diagnostics}
\resizebox{\linewidth}{!}{%
\begin{tabular}{lcccccc}
\toprule
Method & Tied RMS & Tied max & Router entropy $\uparrow$ & Max expert frac. $\downarrow$ & Router CV $\downarrow$ & Unused experts \\
\midrule
Hybrid Muon & 0.0424 $\pm$ 0.0001 & 0.2656 $\pm$ 0.0120 & 0.9996 $\pm$ 0.0001 & 0.2600 $\pm$ 0.0020 & 0.0290 $\pm$ 0.0056 & 0.00 $\pm$ 0.00 \\
SCION-style Sign & 0.6357 $\pm$ 0.0161 & 7.8549 $\pm$ 0.0381 & 0.9978 $\pm$ 0.0011 & 0.2687 $\pm$ 0.0025 & 0.0604 $\pm$ 0.0096 & 0.00 $\pm$ 0.00 \\
AF-Muon & 0.3736 $\pm$ 0.0044 & 6.1092 $\pm$ 2.0978 & 0.9994 $\pm$ 0.0001 & 0.2626 $\pm$ 0.0009 & 0.0396 $\pm$ 0.0043 & 0.00 $\pm$ 0.00 \\
\bottomrule
\end{tabular}%
}
\end{table}

% \begin{figure}[ht]
%     \centering
%     \includegraphics[width=\linewidth]{figs/nanogpt_moe/moe_nanogpt_50k_top1_e4_curves_tail500m.pdf}
%     \caption{
%     Tied-embedding NanoGPT-MoE training dynamics on FineWeb. Curves show the
%     mean over three paired seeds, and shaded regions denote $\pm 1$ standard
%     error of the mean. All methods use four top-1 routed experts per layer and
%     the same router auxiliary load-balancing coefficient.
%     }
%     \label{fig:moe_nanogpt_tied_vocab_curves}
% \end{figure}

\subsection{Tied ImageGPT-style ImageNet-32}
\label{app:tied_imagegpt_imagenet32}

We include a cross-domain tied-table ablation on autoregressive image modeling
to test whether the AF-Muon tied-embedding behavior also appears outside
language. The model follows an ImageGPT-style causal-transformer setup on
ImageNet-32, but with an intentionally tied input/output token table. This is
therefore a controlled tied-table optimizer probe, not a faithful reproduction
of the original untied ImageGPT setup.

Images are represented as sequences of 1024 discrete color tokens. We use a
deterministic RGB554 tokenizer, which maps each 32-bit RGB pixel to one of
$2^{14}=16{,}384$ color tokens. We add one start-of-sequence token, giving a
total vocabulary size of 16,385. The input sequence is formed by prepending the
start token and shifting the image tokens right, while the output predicts only
the 16,384 color tokens. Thus the color-token rows of the shared table receive
both sparse input-lookup gradients and dense output-classifier gradients, while
the start-token row receives only sparse input gradients.

The model has 8 transformer layers, 8 attention heads, head dimension 64, hidden
size 512, and MLP hidden size 2048. All runs use FP32 trainable parameters and
optimizer state with BF16 autocast, batch size 512, micro-batch size 256,
sequence length 1024, matrix learning rate $0.02$, auxiliary/vector learning
rate $3\times10^{-4}$, matrix weight decay $0.1$, momentum $0.95$, and 1.31B
image tokens. Hybrid Muon uses auxiliary/tied AdamW weight decay $0.01$.
SCION-style Sign and AF-Muon use zero auxiliary/tied weight decay. Because the
RGB554 tokenizer is lossy and task-specific, the absolute bits/dim values are
used only for within-experiment optimizer comparisons.

\begin{table}[ht]
\centering
\caption{
Tied ImageGPT-style ImageNet-32 results with a deterministic RGB554 16k color
vocabulary. Results are mean $\pm$ SD over three paired seeds. All methods use
the same tied input/output color-token table, FP32 trainable parameters and
optimizer state with BF16 autocast, batch size 512, and 1.31B image tokens.
}
\label{tab:tied_imagegpt_imagenet32_summary}
\begin{tabular}{lcccc}
\toprule
Optimizer & Train loss $\downarrow$ & Val. loss $\downarrow$ & Val. bits/dim $\downarrow$ & Val. PPL $\downarrow$ \\
\midrule
Hybrid Muon & 2.3236 $\pm$ 0.0067 & 2.3346 $\pm$ 0.0049 & 1.1227 $\pm$ 0.0024 & 10.33 $\pm$ 0.05 \\
SCION-style Sign & 2.3568 $\pm$ 0.0049 & 2.3712 $\pm$ 0.0055 & 1.1403 $\pm$ 0.0027 & 10.71 $\pm$ 0.06 \\
AF-Muon & \textbf{2.3208 $\pm$ 0.0101} & \textbf{2.3243 $\pm$ 0.0055} & \textbf{1.1177 $\pm$ 0.0026} & \textbf{10.22 $\pm$ 0.06} \\
\bottomrule
\end{tabular}
\end{table}

AF-Muon achieves the best validation loss and bits/dim across the three paired
seeds. Relative to Hybrid Muon, AF-Muon improves validation loss by
$0.0103\pm0.0014$ paired loss points and validation bits/dim by
$0.0049\pm0.0007$. Relative to SCION-style Sign, AF-Muon improves validation
loss by $0.0469\pm0.0064$ and validation bits/dim by $0.0226\pm0.0031$. The
Sign endpoint is substantially worse in this image-token setting, consistent
with the tied-table diagnostics: Sign clips all tied-table rows and has much
larger tail energy, while AF-Muon keeps the finite-cap geometry active.

\begin{table}[ht]
\centering
\caption{
Per-seed tied ImageGPT-style ImageNet-32 results. All runs use 1.31B image
tokens and the same deterministic RGB554 16k tokenizer. The final row reports a
paired two-sided $t$-test over validation loss between AF-Muon and the strongest
non-AF baseline by mean validation loss; the gain is baseline minus AF-Muon, so
positive values indicate that AF-Muon is better.
}
\label{tab:app_tied_imagegpt_imagenet32_per_seed}
\begin{tabular}{lccccc}
\toprule
Method & Seed & Train loss $\downarrow$ & Val. loss $\downarrow$ & Val. bits/dim $\downarrow$ & Val. PPL $\downarrow$ \\
\midrule
Hybrid Muon & 43 & 2.3301 & 2.3363 & 1.1235 & 10.34 \\
Hybrid Muon & 44 & 2.3240 & 2.3290 & 1.1200 & 10.27 \\
Hybrid Muon & 45 & 2.3166 & 2.3384 & 1.1245 & 10.36 \\
SCION-style Sign & 43 & 2.3537 & 2.3671 & 1.1383 & 10.67 \\
SCION-style Sign & 44 & 2.3625 & 2.3690 & 1.1392 & 10.69 \\
SCION-style Sign & 45 & 2.3543 & 2.3775 & 1.1433 & 10.78 \\
AF-Muon & 43 & 2.3300 & 2.3275 & 1.1193 & 10.25 \\
AF-Muon & 44 & 2.3100 & 2.3180 & 1.1147 & 10.16 \\
AF-Muon & 45 & 2.3225 & 2.3273 & 1.1192 & 10.25 \\
\midrule
\multicolumn{2}{l}{AF-Muon vs.\ Hybrid Muon (paired $t$-test)}
& \multicolumn{4}{c}{validation-loss gain $=0.0103$, $p=0.0053$} \\
\bottomrule
\end{tabular}
\end{table}

\begin{table}[ht]
\centering
\caption{
Final tied-table diagnostics for the tied ImageGPT-style ImageNet-32 ablation.
Values are mean $\pm$ SD over three paired seeds. The cosine and objective-ratio
columns compare the AF-Muon tied-table update geometry to the Sign endpoint.
}
\label{tab:app_tied_imagegpt_imagenet32_diagnostics}
\resizebox{\linewidth}{!}{%
\begin{tabular}{lcccccc}
\toprule
Method & Tied RMS & Tied max & Rows clipped & Tail energy & Cos(AF, Sign) & Obj. ratio(AF/Sign) \\
\midrule
Hybrid Muon & 0.0327 $\pm$ 0.0000 & 0.1430 $\pm$ 0.0021 & -- & -- & -- & -- \\
SCION-style Sign & 0.5332 $\pm$ 0.0037 & 4.9483 $\pm$ 0.6251 & 1.000 $\pm$ 0.000 & 0.8012 $\pm$ 0.0005 & -- & -- \\
AF-Muon & 0.3096 $\pm$ 0.0006 & 2.2829 $\pm$ 0.3463 & 0.773 $\pm$ 0.015 & 0.0350 $\pm$ 0.0013 & 0.799 $\pm$ 0.001 & 1.251 $\pm$ 0.000 \\
\bottomrule
\end{tabular}%
}
\end{table}

The diagnostics confirm that the finite-cap tied-table geometry is active in the image-token setting and behaves as it does in language. Hybrid Muon holds the tied table to a very small norm—its auxiliary AdamW branch applies weight decay—but it uses no clipping geometry, so the row-clipping and tail-energy diagnostics do not apply to it. The two LMO arms differ sharply: SCION-style Sign attains the largest tied-table norm and clips every row, concentrating most of the update energy in the clipped tail, whereas AF-Muon maintains an intermediate norm, clips only a selective fraction of rows, and leaves little energy in the tail. Crucially, AF-Muon's update geometry relative to the Sign endpoint cosine 0.80 and objective ratio 1.25 matches its large-batch language-model values almost exactly, indicating that the same finite-cap tied-table structure governs image tokens and text tokens alike. This shared geometry, together with the norm and tail-energy gap between AF-Muon and the Sign endpoint, explains why AF-Muon is both more stable and better-performing than pure Sign in this cross-domain ablation.

\begin{figure}[ht]
    \centering
    \includegraphics[width=\linewidth]{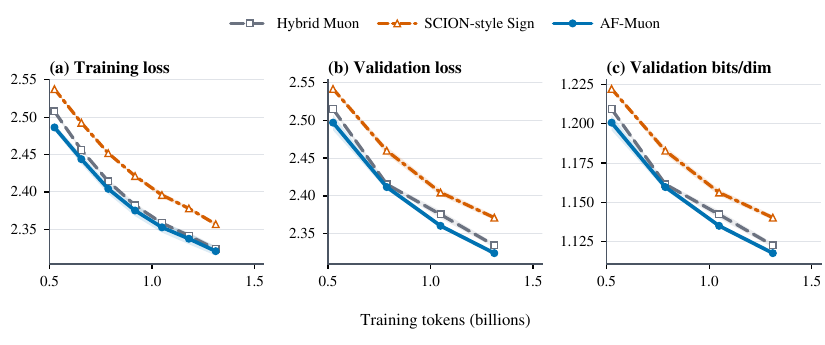}
    \caption{
    Tied ImageGPT-style ImageNet-32 training dynamics with a deterministic
    RGB554 16k color vocabulary. Curves show the mean over three paired seeds,
    and shaded regions denote $\pm 1$ standard error of the mean. To focus on
    the stable late-training regime, the plot starts at step 1000; training
    loss is shown every 250 steps, and validation metrics are shown every
    500 steps. AF-Muon achieves the best final validation loss and validation
    bits/dim, while SCION-style Sign remains worse throughout this late window.
    }
    \label{fig:tied_imagegpt_imagenet32_curves}
\end{figure}

\begin{table}[ht]
\centering
\caption{
Top-half image-completion metrics for the tied ImageGPT-style ImageNet-32
experiment. Metrics are computed by conditioning on the first 512 tokens of
fixed validation images and evaluating the likelihood of the held-out bottom
half. Results are mean $\pm$ SD over three trained seeds; all methods use the
same validation images.
}
\label{tab:app_tied_imagegpt_completion_summary}
\begin{tabular}{lcccc}
\toprule
Optimizer & Completion loss $\downarrow$ & Bits/token $\downarrow$ & Bits/dim $\downarrow$ & PPL $\downarrow$ \\
\midrule
Hybrid Muon & 2.0273 $\pm$ 0.0038 & 2.9248 $\pm$ 0.0054 & 0.9749 $\pm$ 0.0018 & 7.59 $\pm$ 0.03 \\
SCION-style Sign & 2.0511 $\pm$ 0.0105 & 2.9591 $\pm$ 0.0151 & 0.9864 $\pm$ 0.0050 & 7.78 $\pm$ 0.08 \\
AF-Muon & \textbf{2.0117 $\pm$ 0.0076} & \textbf{2.9023 $\pm$ 0.0109} & \textbf{0.9674 $\pm$ 0.0036} & \textbf{7.48 $\pm$ 0.06} \\
\bottomrule
\end{tabular}
\end{table}

\begin{table}[ht]
\centering
\caption{
Per-seed top-half completion results for tied ImageGPT-style ImageNet-32. The
same fixed validation images and completion protocol are used for all methods.
}
\label{tab:app_tied_imagegpt_completion_per_seed}
\begin{tabular}{lccccc}
\toprule
Method & Seed & Completion loss $\downarrow$ & Bits/token $\downarrow$ & Bits/dim $\downarrow$ & PPL $\downarrow$ \\
\midrule
Hybrid Muon & 43 & 2.025113 & 2.921620 & 0.973973 & 7.58 \\
Hybrid Muon & 44 & 2.0316 & 2.9310 & 0.9770 & 7.63 \\
Hybrid Muon & 45 & 2.025116 & 2.921625 & 0.973975 & 7.58 \\
SCION-style Sign & 43 & 2.0456 & 2.9512 & 0.9837 & 7.73 \\
SCION-style Sign & 44 & 2.0631 & 2.9765 & 0.9922 & 7.87 \\
SCION-style Sign & 45 & 2.0444 & 2.9495 & 0.9832 & 7.72 \\
AF-Muon & 43 & 2.0205 & 2.9149 & 0.9716 & 7.54 \\
AF-Muon & 44 & 2.007315 & 2.895943 & 0.965314 & 7.44 \\
AF-Muon & 45 & 2.007310 & 2.895937 & 0.965312 & 7.44 \\
\bottomrule
\end{tabular}
\end{table}

We also evaluate the final checkpoints on a fixed top-half image-completion protocol. For each validation image, the model conditions on the first 512 image tokens and we compute the likelihood of the remaining 512 tokens. This metric does not rely on sampled image quality and provides a paired held-out completion test, where AF-Muon achieves the best score among the trained checkpoints.
% AF-Muon again gives the best completion
% likelihood, improving over Hybrid Muon by $0.0156$ completion-loss points and
% over SCION-style Sign by $0.0394$ points on average.

\begin{figure*}[p]
    \centering
    \begin{subfigure}[t]{0.31\textwidth}
        \centering
        \includegraphics[height=0.87\textheight,keepaspectratio]{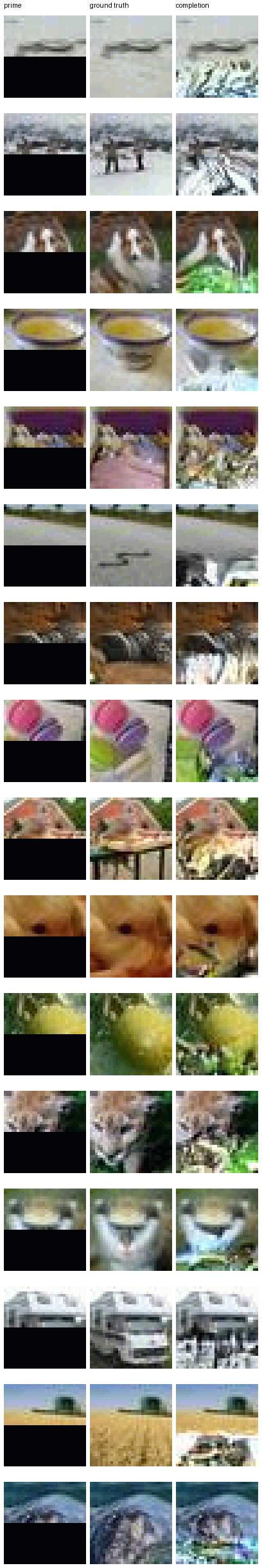}
        \caption{Hybrid Muon}
    \end{subfigure}
    \hfill
    \begin{subfigure}[t]{0.31\textwidth}
        \centering
        \includegraphics[height=0.87\textheight,keepaspectratio]{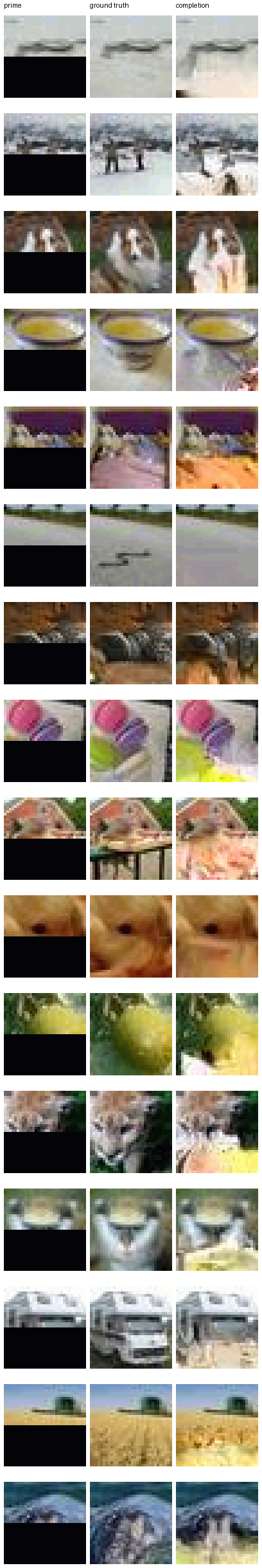}
        \caption{SCION-style Sign}
    \end{subfigure}
    \hfill
    \begin{subfigure}[t]{0.31\textwidth}
        \centering
        \includegraphics[height=0.87\textheight,keepaspectratio]{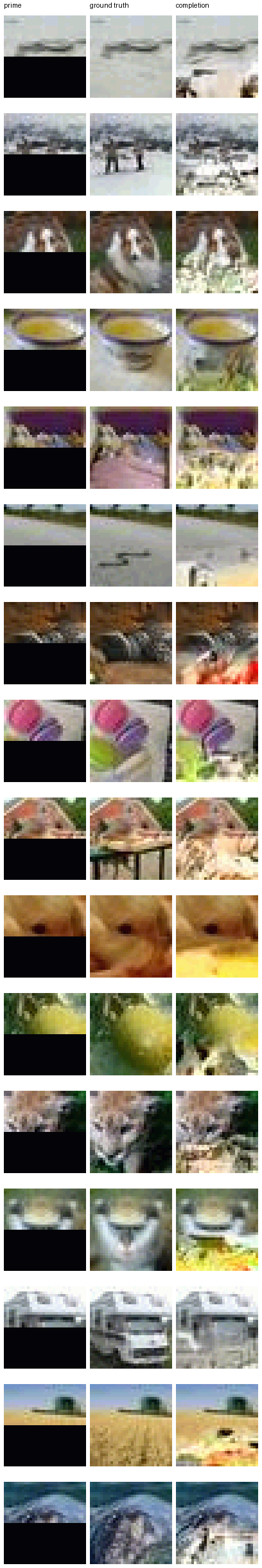}
        \caption{AF-Muon}
    \end{subfigure}
    \caption{
    Qualitative top-half completions from the tied ImageGPT-style ImageNet-32
    seed-43 checkpoints. Each panel uses the same fixed validation images and
    the same sampling seed. Within each panel, columns show the top-half prime,
    the ground-truth full image, and the sampled bottom-half completion. The
    samples are illustrative only; quantitative completion likelihoods are
    reported in Table~\ref{tab:app_tied_imagegpt_completion_summary}.
    }
    \label{fig:app_tied_imagegpt_completions_seed43}
\end{figure*}

\clearpage
\subsection{Protein Language Modeling}
\label{app:protein_lm_per_seed}

\begin{figure}[ht]
\centering
\includegraphics[width=\linewidth]{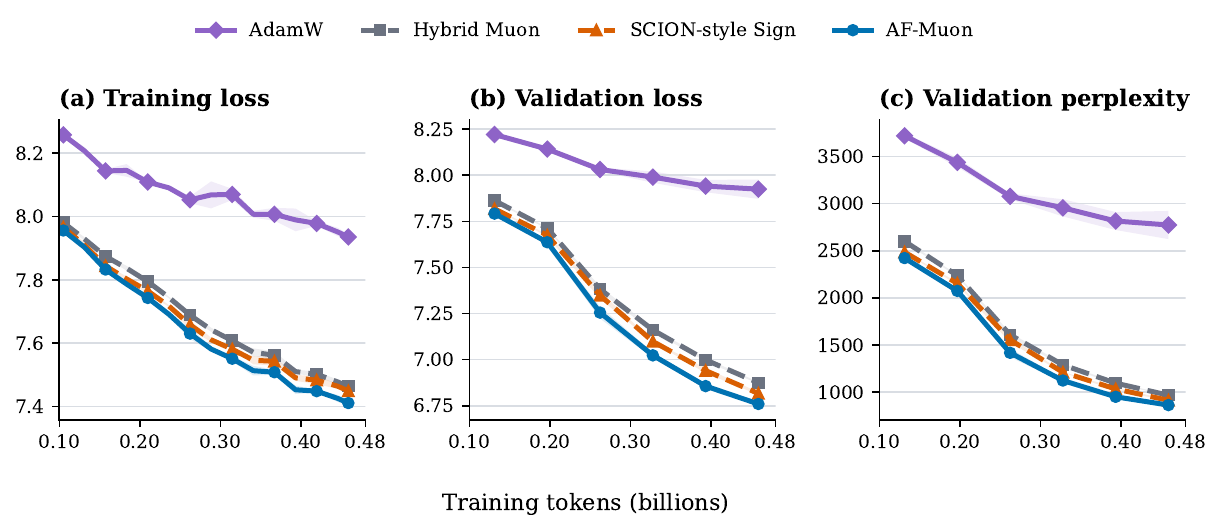}
\caption{
Protein language-modeling training dynamics on UniRef50 / ProtGPT2-BPE with a
363M-parameter tied-vocabulary RoPE decoder. Curves show the mean over three
paired seeds under the matched large-batch protocol, after omitting the first
100M training tokens to focus on the stable training regime. AF-Muon maintains
the lowest training loss, validation loss, and validation perplexity among the
tested optimizers.
}
\label{fig:app_protein_lm_curves}
\end{figure}

\begin{wraptable}{r}{0.40\linewidth}
\vspace{-1.0em}
\centering
\caption{
Seed-43 AdamW LR pilot. All runs use the same 363M RoPE decoder and
500 steps / 32.77M tokens. We use $\eta=6\times10^{-4}$ for the longer AdamW
comparison in table
}
\label{tab:app_protein_adamw_lr_pilot}
\scriptsize
\setlength{\tabcolsep}{3pt}
\begin{tabular}{lcc}
\toprule
AdamW LR & Eval loss $\downarrow$ & Eval PPL $\downarrow$ \\
\midrule
$1\times10^{-4}$ & 8.2669 & 3892.69 \\
$3\times10^{-4}$ & 8.2007 & 3643.67 \\
$6\times10^{-4}$ & \textbf{8.1369} & \textbf{3418.32} \\
$1\times10^{-3}$ & 8.1672 & 3523.36 \\
\bottomrule
\end{tabular}
\vspace{-0.8em}
\end{wraptable}

We evaluate protein-domain transfer on UniRef50 using a ProtGPT2-BPE tokenizer
and a tied-vocabulary 363M-parameter RoPE decoder. All optimizer arms use the
same model, data, precision, batch size, and token budget. The matched rerun
uses FP32 trainable parameters and optimizer state with BF16 autocast, sequence
length 512, micro-batch size 32, gradient accumulation 32, and 524,288 tokens
per optimizer update. We compare all methods at 458.8M training tokens over
three paired seeds. 
% Train loss is taken from the nearest training log entry
% before the final evaluation.
We tune the AdamW control with a short seed-43 learning-rate pilot
(Table~\ref{tab:app_protein_adamw_lr_pilot}), select
$\eta=6\times10^{-4}$, and then use this setting for the matched three-seed
comparison in Table~\ref{tab:app_protein_lm_per_seed}.

% \begin{table}[ht]
% \centering
% \caption{
% Seed-43 AdamW learning-rate pilot for the protein language-modeling control.
% All runs use the same 363M-parameter RoPE decoder, UniRef50 / ProtGPT2-BPE token
% cache, 500 training steps / 32.77M tokens, constant learning rate,
% $\beta_1=0.9$, $\beta_2=0.95$, $\epsilon=10^{-8}$, weight decay $0.1$,
% gradient clipping at norm $1.0$, FP32 trainable parameters and optimizer state
% with BF16 autocast. We use $\eta=6\times10^{-4}$ for the longer AdamW
% comparison.
% }
% \label{tab:app_protein_adamw_lr_pilot}
% \begin{tabular}{lcc}
% \toprule
% AdamW LR & Eval loss $\downarrow$ & Eval PPL $\downarrow$ \\
% \midrule
% $1\times10^{-4}$ & 8.2669 & 3892.69 \\
% $3\times10^{-4}$ & 8.2007 & 3643.67 \\
% $6\times10^{-4}$ & \textbf{8.1369} & \textbf{3418.32} \\
% $1\times10^{-3}$ & 8.1672 & 3523.36 \\
% \bottomrule
% \end{tabular}
% \end{table}
\par\vspace{0.8em}
\noindent
\begin{table}[ht]
\centering
\caption{
Per-seed protein language-modeling results on UniRef50 / ProtGPT2-BPE. All runs
use the matched large-batch protocol and are compared at 458.8M training tokens. The final row reports a paired two-sided $t$-test over eval loss between AF-Muon and the strongest non-AF baseline by mean eval loss; the gain is baseline minus AF-Muon, so positive values indicate that AF-Muon is better.
}
\label{tab:app_protein_lm_per_seed}
\resizebox{\linewidth}{!}{%
\begin{tabular}{lccccccc}
\toprule
Optimizer & Seed & Train step & Eval step & Train loss & Eval loss & Eval ppl & Best loss \\
\midrule
AdamW & 43 & 850 & 875 & 7.9344 & 8.0297 & 3070.73 & 7.8998 \\
AdamW & 44 & 850 & 875 & 7.9773 & 7.8709 & 2619.91 & 7.8709 \\
AdamW & 45 & 850 & 875 & 7.9421 & 7.8737 & 2627.15 & 7.8737 \\
\midrule
Hybrid Muon & 43 & 850 & 875 & 7.4835 & 6.8770 & 969.68 & 6.8770 \\
Hybrid Muon & 44 & 850 & 875 & 7.4868 & 6.8806 & 973.21 & 6.8806 \\
Hybrid Muon & 45 & 850 & 875 & 7.4633 & 6.8650 & 958.14 & 6.8650 \\
\midrule
SCION-style Sign & 43 & 850 & 875 & 7.4729 & 6.8208 & 916.68 & 6.8208 \\
SCION-style Sign & 44 & 850 & 875 & 7.4727 & 6.8260 & 921.48 & 6.8260 \\
SCION-style Sign & 45 & 850 & 875 & 7.4473 & 6.8023 & 899.93 & 6.8023 \\
\midrule
AF-Muon & 43 & 850 & 875 & \textbf{7.4244} & \textbf{6.7469} & \textbf{851.43} & \textbf{6.7469} \\
AF-Muon & 44 & 850 & 875 & \textbf{7.4455} & \textbf{6.7814} & \textbf{881.26} & \textbf{6.7814} \\
AF-Muon & 45 & 850 & 875 & \textbf{7.4077} & \textbf{6.7502} & \textbf{854.27} & \textbf{6.7502} \\
\midrule
\multicolumn{2}{l}{AF-Muon vs.\ SCION-style Sign (paired $t$-test)}
& \multicolumn{6}{c}{eval-loss gain $=0.0569$, $p=0.0231$} \\
\bottomrule
\end{tabular}%
}
\end{table}

AF-Muon gives the lowest validation loss and perplexity for each seed. Averaged
over the three seeds, AF-Muon improves validation loss by 0.1147 over Hybrid
Muon and by 0.0569 over the SCION-style Sign endpoint. This matched rerun
therefore preserves the protein-domain trend under the same large-batch
optimizer protocol used for the main tied-token language-model experiments.

% \clearpage
\subsection{Training Overhead}
\label{app:training_overhead}
% \paniz{I'm not sure but maybe it's better to move this subsection to other section}\\
We measure practical training overhead under the same matched large-batch
protocol used in the main experiments. For each model, the three optimizer arms
are run sequentially on the same GPU, with evaluation, checkpointing, and
diagnostic logging disabled during the timed region. All runs use FP32 trainable
parameters and optimizer state with BF16 autocast, 524k tokens per optimizer
update, matrix weight decay $0.1$, Hybrid Muon auxiliary/tied weight decay
$0.01$, and zero auxiliary/tied weight decay for SCION-style Sign and AF-Muon.
Median step time is computed after discarding the first five steps as warmup,
and peak allocated memory is measured with
\texttt{torch.cuda.max\_memory\_allocated()} after resetting CUDA peak statistics
immediately before the training loop.

The primary memory saving is in optimizer state: SCION-style Sign and AF-Muon
remove the AdamW second-moment buffer from the auxiliary/tied branch, reducing
optimizer-state memory by roughly $18$--$22\%$ relative to Hybrid Muon in these
models. Median step-time overhead of AF-Muon relative to Hybrid Muon is at most
$1.3\%$ across models, and SCION-style Sign is within $0.4\%$; removing the
second-moment buffer from the auxiliary/tied branch therefore costs negligible
throughput. The corresponding reduction in single-GPU peak allocated memory is
smaller because these large-batch runs are activation- and temporary-buffer
dominated. Across models, absolute throughput and peak memory are not directly
comparable because the model size, micro-batch size, and GPU differ; the relevant
comparison is the within-model difference between optimizer arms. NanoGPT,
SmolLM2-135M, and Qwen2.5-0.5B are benchmarked on NVIDIA A100 40GB GPUs, while
Llama-3.2-1B is benchmarked on an NVIDIA H200 GPU.

\begin{table}[ht]
\centering
\small
\caption{
Training-overhead benchmark under the matched large-batch protocol. Optimizer
state memory is computed analytically from the parameter partition and FP32
optimizer states. ``Saved'' reports the optimizer-state reduction relative to
Hybrid Muon within the same model. Peak allocated memory is measured exactly with
\texttt{torch.cuda.max\_memory\_allocated()} after resetting CUDA peak statistics
before the training loop. Step time is the median over 30 training steps after
discarding the first five steps as warmup. The short-run training loss is
included as a sanity check; long-horizon performance is reported in the main
experiment tables.
}
\label{tab:training_overhead}
\resizebox{\linewidth}{!}{%
\begin{tabular}{llccccccc}
\toprule
Model & Optimizer & Opt. state & Saved & Peak alloc. & Step (s) & Tok/s & Rel. thr. & Train loss \\
\midrule
NanoGPT & Hybrid Muon & 0.63 GiB & -- & 18.603 GiB & 6.306 & 83.1k & 1.000$\times$ & 6.5425 \\
NanoGPT & SCION-style Sign & 0.49 GiB & 22.2\% & 18.461 GiB & 6.331 & 82.8k & 0.996$\times$ & 6.2402 \\
NanoGPT & AF-Muon & 0.49 GiB & 22.2\% & 18.461 GiB & 6.386 & 82.1k & 0.987$\times$ & 6.1697 \\
\midrule
SmolLM2-135M & Hybrid Muon & 0.55 GiB & -- & 25.000 GiB & 11.858 & 44.2k & 1.000$\times$ & 6.8592 \\
SmolLM2-135M & SCION-style Sign & 0.45 GiB & 18.2\% & 24.895 GiB & 11.864 & 44.2k & 1.000$\times$ & 6.6022 \\
SmolLM2-135M & AF-Muon & 0.45 GiB & 18.2\% & 24.895 GiB & 11.928 & 44.0k & 0.994$\times$ & 6.5485 \\
\midrule
Qwen2.5-0.5B & Hybrid Muon & 2.35 GiB & -- & 23.260 GiB & 24.193 & 21.7k & 1.000$\times$ & 6.8713 \\
Qwen2.5-0.5B & SCION-style Sign & 1.84 GiB & 21.7\% & 22.752 GiB & 24.245 & 21.6k & 0.998$\times$ & 6.7913 \\
Qwen2.5-0.5B & AF-Muon & 1.84 GiB & 21.7\% & 22.752 GiB & 24.459 & 21.4k & 0.989$\times$ & 6.6253 \\
\midrule
Llama-3.2-1B & Hybrid Muon & 5.58 GiB & -- & 74.766 GiB & 16.330 & 32.1k & 1.000$\times$ & 6.7290 \\
Llama-3.2-1B & SCION-style Sign & 4.60 GiB & 17.6\% & 73.787 GiB & 16.395 & 32.0k & 0.996$\times$ & 6.7475 \\
Llama-3.2-1B & AF-Muon & 4.60 GiB & 17.6\% & 73.787 GiB & 16.486 & 31.8k & 0.990$\times$ & 6.7018 \\
\bottomrule
\end{tabular}%
}
\end{table}
%----------------------------
\newpage
\section{Ablations}
\label{sec:ablation}

\subsection{NanoGPT Factorial Ablation: Tied Table Versus 1D Auxiliary Parameters}
\label{app:nanogpt_factorial}

A potential concern is that AF-Muon changes both the tied vocabulary table and
the remaining one-dimensional auxiliary parameters relative to Hybrid Muon.
Therefore, the improvement over Hybrid Muon might be caused by replacing the
one-dimensional AdamW auxiliary branch rather than by the proposed finite-cap
tied-table geometry. We test this with a $2\times2$ factorial ablation on
NanoGPT.

All runs use the same NanoGPT architecture and training protocol as in the main
NanoGPT experiment: FP32 trainable parameters and optimizer state with BF16
autocast, constant learning rates, matrix weight decay $0.1$, maximum gradient
norm $1.0$, and the same FineWeb token stream. The matrix parameters are always
optimized with Muon. We independently choose the optimizer used for the tied
vocabulary table and the optimizer used for the remaining one-dimensional
parameters:
\[
\{\text{AdamW tied},\ \text{AF-Muon tied }(c=3,s=0.5)\}
\times
\{\text{AdamW 1D},\ \text{RMS-LMO 1D}\}.
\]
All four factorial arms use zero auxiliary weight decay, so the comparison
isolates update geometry rather than adding an extra shrinkage term. (Figure~\ref{fig:app_nanogpt_factorial_curves})

\begin{table}[ht]
\centering
\caption{
NanoGPT $2\times2$ factorial ablation separating the tied vocabulary-table
update from the remaining one-dimensional auxiliary update. All runs use 1.05B
training tokens and three paired seeds.
}
\label{tab:app_nanogpt_factorial}
\begin{tabular}{llcc}
\toprule
Tied table & 1D aux. & Val. loss $\downarrow$ & Val. PPL $\downarrow$ \\
\midrule
AdamW & AdamW & 3.6296 $\pm$ 0.0030 & 37.70 $\pm$ 0.11 \\
AF-Muon & AdamW & 3.6163 $\pm$ 0.0108 & 37.20 $\pm$ 0.40 \\
AdamW & RMS-LMO & 3.6221 $\pm$ 0.0028 & 37.41 $\pm$ 0.11 \\
AF-Muon & RMS-LMO & \textbf{3.5883 $\pm$ 0.0016} & \textbf{36.17 $\pm$ 0.06} \\
\bottomrule
\end{tabular}
\end{table}

% \begin{figure}[ht]
%     \centering
%     \includegraphics[width=\linewidth]{figs/nanogpt_factorial_tied_ablate/fig_nanogpt_factorial_tied_vs_1d_curves_zoom_x1p1.pdf}
%     \caption{
%     NanoGPT factorial ablation curves. Curves show the mean over three paired
%     seeds and shaded regions denote $\pm 1$ standard error of the mean. The
%     figure focuses on the post-warmup/tail region where the four factorial
%     recipes separate most clearly.
%     }
%     \label{fig:app_nanogpt_factorial_curves}
% \end{figure}

The isolated tied-table replacement improves validation loss by $0.0133$ over
the AdamW/AdamW baseline, while the isolated one-dimensional RMS-LMO replacement
improves validation loss by $0.0075$. Replacing both gives a substantially larger
$0.0413$ improvement. This shows that the full AF-Muon result is not merely a
side effect of changing the one-dimensional auxiliary optimizer. The tied-table
finite-cap geometry contributes directly, and the best result appears when it is
combined with RMS-LMO updates for the remaining auxiliary parameters.

%----------------------------
% \newpage
\subsection{Real Vocabulary-Size Re-tokenization Ablation}
\label{app:nanogpt_retokenized_vocab}

This experiment tests whether vocabulary size alone explains why AF-Muon and
the Sign endpoint are closer in some large-vocabulary FP32 runs. We train
matched ByteLevel BPE tokenizers with 50k, 100k, and 150k vocabulary sizes on
the same FineWeb shard and re-tokenize the data for each vocabulary. This is a
real re-tokenization test rather than a padded-vocabulary test: token IDs reach
the top of each vocabulary in every setting.

Because different tokenizers induce different per-token entropy scales, absolute
losses should not be compared across vocabulary sizes. The relevant comparison
is within each vocabulary. Across all three vocabularies, AF-Muon improves over
the SCION-style Sign endpoint: by 0.036 validation-loss points at 50k, 0.045 at
100k, and 0.052 at 150k. Thus the AF-Muon--Sign gap does not shrink as the real
vocabulary grows in this controlled NanoGPT setting.

\begin{table}[ht]
\centering
\caption{
NanoGPT real-vocabulary re-tokenization ablation. We train matched ByteLevel BPE
tokenizers with fix d=768, and 50k, 100k, and 150k vocabulary sizes on the same FineWeb shard,
then train each optimizer for 500M tokens. All runs use FP32 trainable parameters
and optimizer state with BF16 autocast, matrix weight decay $0.1$, zero
auxiliary/tied weight decay, and effective batch size 512. The 50k runs use
microbatch 16; 100k and 150k use microbatch 8 for memory. We report mean $\pm$
sample standard deviation over three paired seeds. related figure \ref{fig:nanogpt_retokenized_vocab}
}
\label{tab:app_nanogpt_retokenized_vocab}
\begin{tabular}{llcccc}
\toprule
Vocab & Optimizer & Params & Train loss $\downarrow$ & Val. loss $\downarrow$ & Val. PPL $\downarrow$ \\
\midrule
\multirow{2}{*}{50k}
& SCION-style Sign & 123.4M & 3.9881 $\pm$ 0.0079 & 3.9260 $\pm$ 0.0014 & 50.70 $\pm$ 0.07 \\
& AF-Muon & 123.4M & \textbf{3.9561 $\pm$ 0.0017} & \textbf{3.8897 $\pm$ 0.0051} & \textbf{48.90 $\pm$ 0.25} \\
\midrule
\multirow{2}{*}{100k}
& SCION-style Sign & 161.8M & 4.0691 $\pm$ 0.0069 & 3.9947 $\pm$ 0.0041 & 54.31 $\pm$ 0.22 \\
& AF-Muon & 161.8M & \textbf{4.0275 $\pm$ 0.0059} & \textbf{3.9500 $\pm$ 0.0061} & \textbf{51.94 $\pm$ 0.32} \\
\midrule
\multirow{2}{*}{150k}
& SCION-style Sign & 200.2M & 4.0881 $\pm$ 0.0108 & 4.0228 $\pm$ 0.0070 & 55.86 $\pm$ 0.39 \\
& AF-Muon & 200.2M & \textbf{4.0405 $\pm$ 0.0055} & \textbf{3.9706 $\pm$ 0.0041} & \textbf{53.02 $\pm$ 0.22} \\
\bottomrule
\end{tabular}
\end{table}

\begin{figure}[ht]
    \centering
    \begin{subfigure}[t]{0.95\linewidth}
        \centering
        \includegraphics[width=0.9\linewidth,
    trim={0.0in 0.05in 0.0in 0.10in},
    clip
]
        {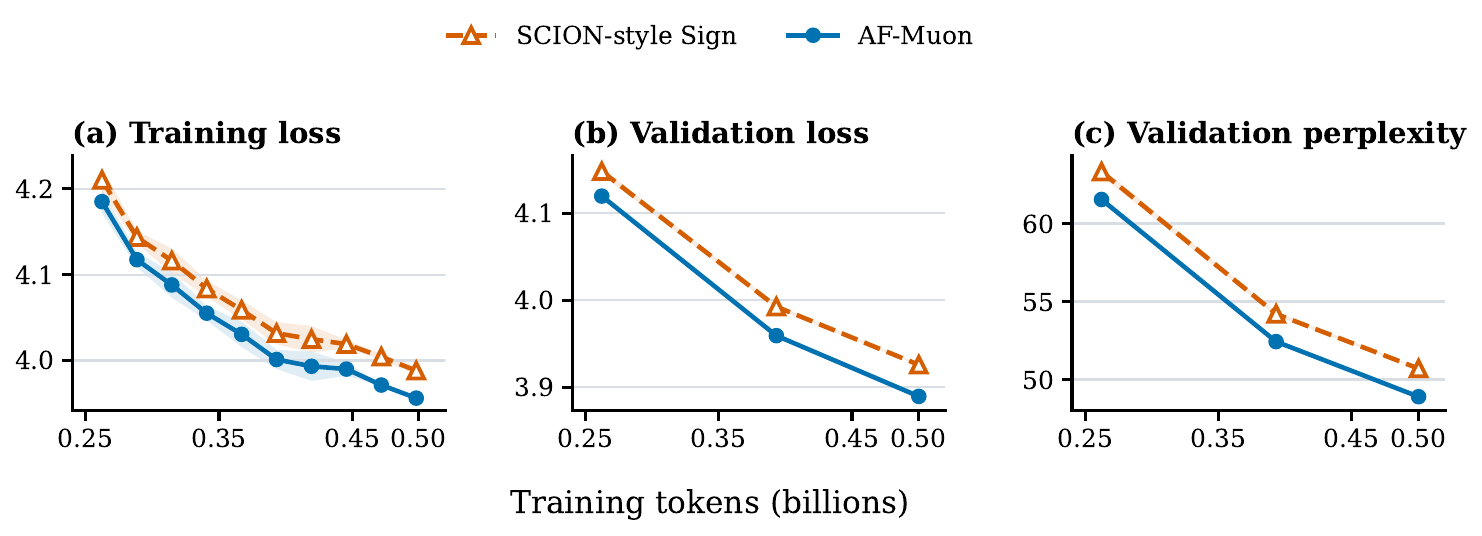}
        \caption{50k vocabulary}
        \label{fig:nanogpt_retok_vocab_50k}
    \end{subfigure}

    \vspace{0.4em}

    \begin{subfigure}[t]{0.95\linewidth}
        \centering
        \includegraphics[width=0.9\linewidth,
    trim={0.0in 0.05in 0.0in 0.6in},
    clip
]{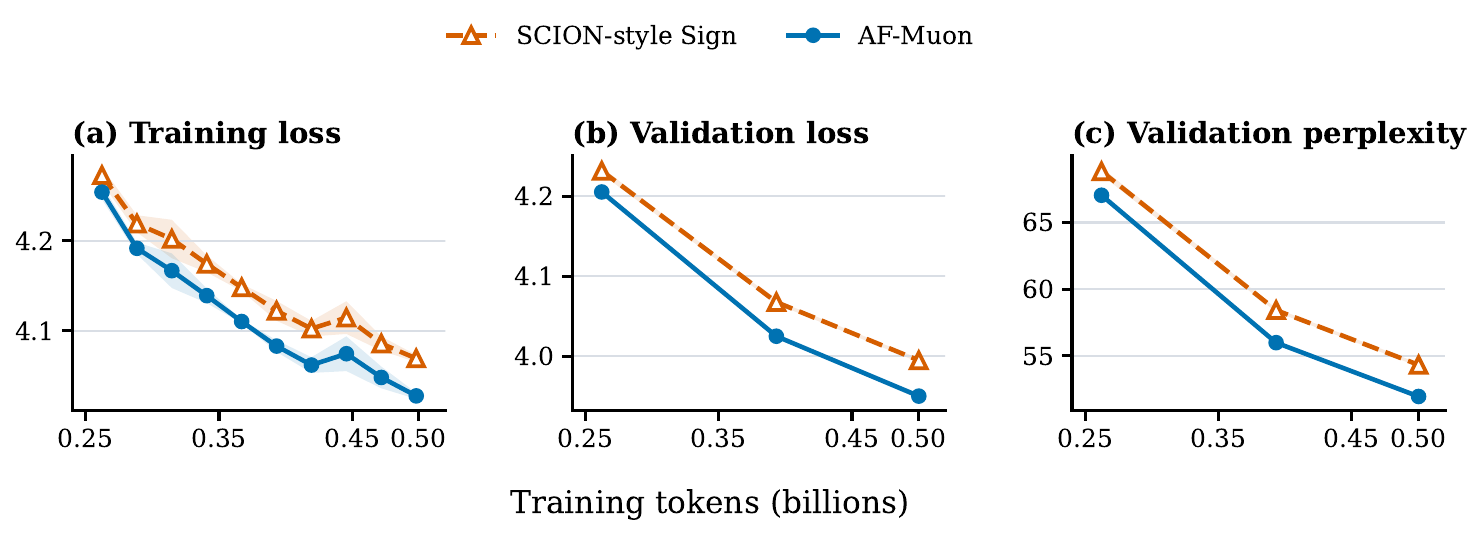}
        \caption{100k vocabulary}
        \label{fig:nanogpt_retok_vocab_100k}
    \end{subfigure}

    \vspace{0.4em}

    \begin{subfigure}[t]{0.95\linewidth}
        \centering
        \includegraphics[width=0.9\linewidth,
    trim={0.0in 0.05in 0.0in 0.6in},
    clip
]{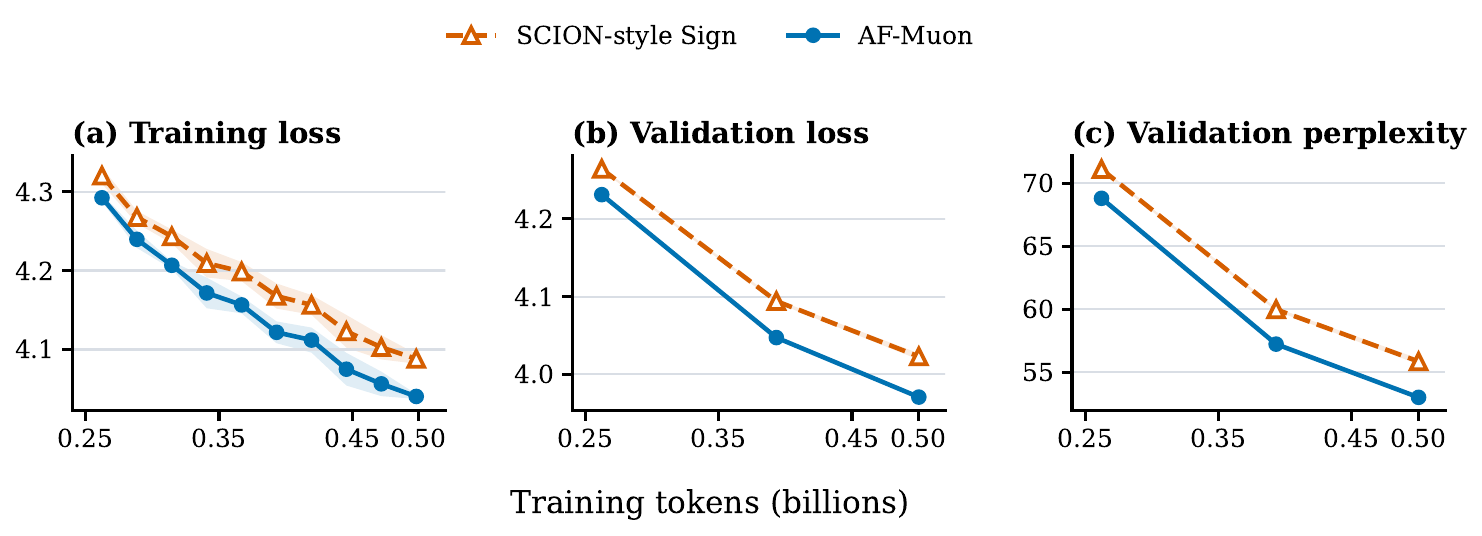}
        \caption{150k vocabulary}
        \label{fig:nanogpt_retok_vocab_150k}
    \end{subfigure}

    \caption{
    Real-vocabulary re-tokenization ablation on NanoGPT. Each row uses a separately
    trained ByteLevel BPE tokenizer with vocabulary size 50k, 100k, or 150k and compares
    AF-Muon with the SCION-style Sign endpoint for 500M training tokens. Curves show the
    mean over three paired seeds, with shaded regions denoting $\pm 1$ standard error.
    Since tokenizers differ across rows, absolute validation losses and perplexities
    should not be compared across vocabulary sizes; the relevant comparison is the
    within-vocabulary AF-Muon versus Sign gap.
    }
    \label{fig:nanogpt_retokenized_vocab}
\end{figure}

%----------------------------
\clearpage
\subsection{Tied-Step Magnitude Ablation}
\label{app:nanogpt_tied_step_scale}

Table~\ref{tab:app_nanogpt_s_scale} reports the NanoGPT tied-scale sweep.
The model architecture, vocabulary, data, precision recipe, matrix weight decay, and token budget are fixed across all rows. The only AF-Muon hyperparameter varied is the tied-table scale $s$, with $c$ fixed to $3$.

Across the tested values, AF-Muon obtains lower validation loss and perplexity than the SCION-style Sign endpoint. The best validation loss in this sweep is obtained at $s=0.25$, while nearby values $s=0.375$ and $s=0.1875$ give similar results. The tied-table RMS decreases as $s$ is reduced, reflecting the smaller
tied-table update scale.
\begin{table}[ht]
\centering
\caption{
NanoGPT AF-Muon tied-scale sweep at fixed model width and vocabulary size.
All runs use a 12-layer NanoGPT model with hidden size $d=768$, a 50k vocabulary,
500M training tokens, FP32 trainable parameters and optimizer state with BF16
autocast, matrix weight decay $0.1$, and zero auxiliary/tied weight decay.
AF-Muon uses $c=3$ and varies the tied scale $s$. The SCION-style Sign endpoint uses $c=1,s=1$ under the same training protocol.
}
\label{tab:app_nanogpt_s_scale}
\begin{tabular}{lcccccc}
\toprule
Method & $s$ & $d_{\mathrm{eff}}$ & Train loss $\downarrow$ & Val. loss $\downarrow$ & Val. PPL $\downarrow$ & Tied RMS \\
\midrule
SCION-style Sign & 1.0000 & -- & 3.9971 & 3.9250 & 50.65 & 0.2147 \\
AF-Muon & 0.5000 & 768 & 3.9558 & 3.8867 & 48.75 & 0.1241 \\
AF-Muon & 0.4300 & 893 & 3.9510 & 3.8799 & 48.42 & 0.1066 \\
AF-Muon & 0.3750 & 1024 & 3.9409 & 3.8697 & 47.93 & 0.0956 \\
AF-Muon & 0.2500 & 1536 & 3.9377 & 3.8687 & 47.88 & 0.0692 \\
AF-Muon & 0.1875 & 2048 & 3.9406 & 3.8720 & 48.04 & 0.0554 \\
AF-Muon & 0.1250 & 3072 & 3.9522 & 3.8856 & 48.70 & 0.0444 \\
\bottomrule
\end{tabular}
\end{table}

\begin{figure}[ht]
    \centering
    \includegraphics[
  width=\linewidth,
  trim=0.0in 0.30in 0.0in 0.15in,
  clip
]{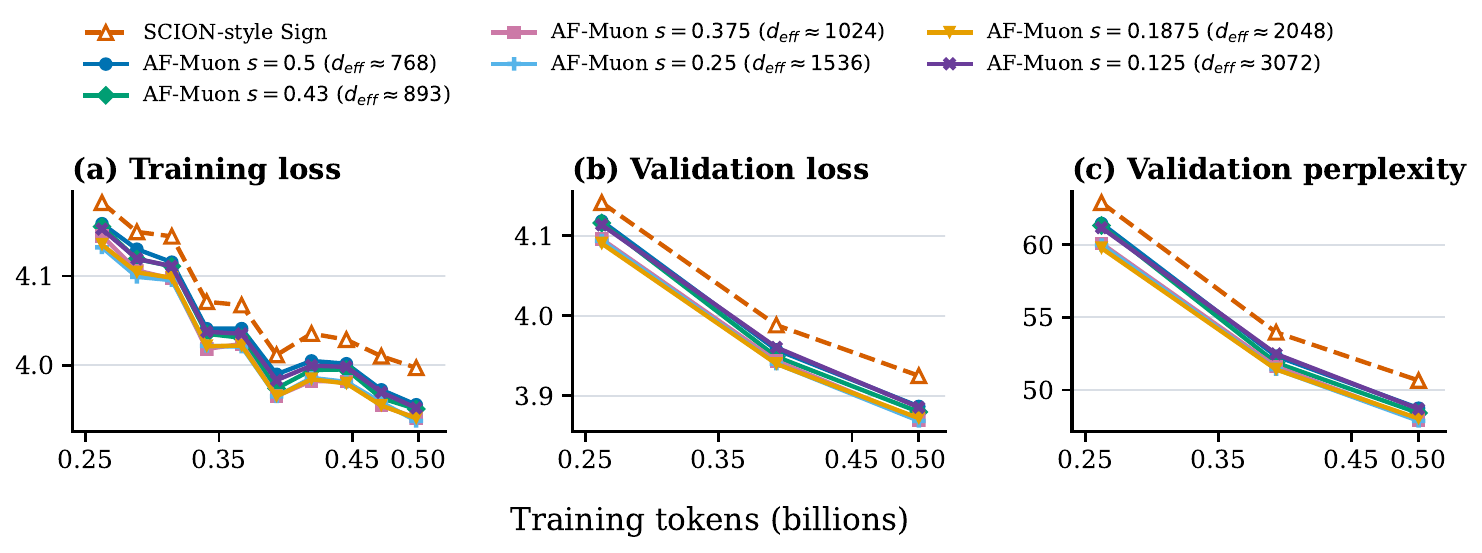}

    \vspace{0.8em}

    \includegraphics[
  width=\linewidth,
  trim=0.0in 0.10in 0.0in 0.15in,
  clip
]{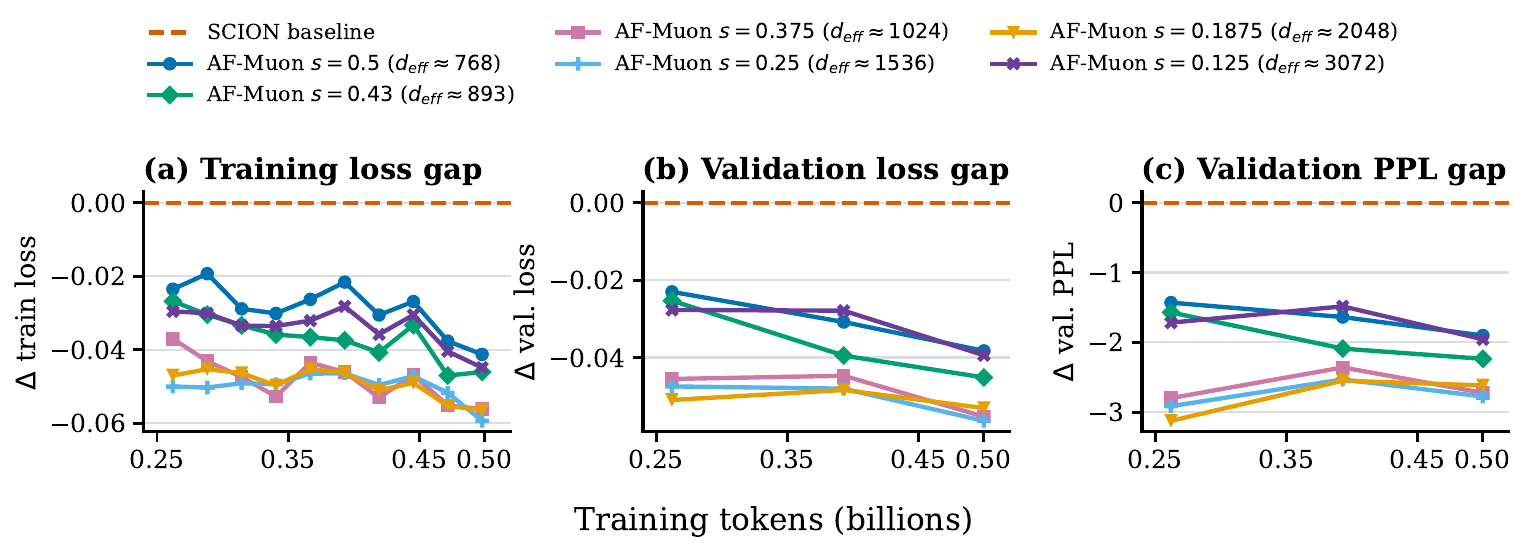}

    \caption{
    NanoGPT tied-scale sweep training dynamics. Top: training loss, validation loss,
    and validation perplexity for SCION-style Sign and AF-Muon with different tied
    scales $s$. Bottom: token-wise AF-Muon minus SCION-style Sign gaps for the same
    metrics. All runs use the same fixed 50k vocabulary, $d=768$ model width,
    500M training-token budget, FP32 trainable parameters and optimizer state with
    BF16 autocast, matrix weight decay $0.1$, and zero auxiliary/tied weight decay.
    }
    \label{fig:app_nanogpt_s_scale_curves}
\end{figure}
%------------------real d
\subsection{NanoGPT Real-Width Ablation}
\label{app:nanogpt_real_width}

We next test whether the AF-Muon advantage over the Sign endpoint persists when
the tied vocabulary rows have larger real hidden dimension. Unlike the tied-scale
diagnostic, this ablation changes the actual model width and tied-row dimension.
All runs use a 50k ByteLevel BPE tokenizer, 500M training tokens, FP32 trainable
parameters and optimizer state with BF16 autocast, matrix weight decay $0.1$,
zero tied/auxiliary weight decay for the LMO methods, and three paired seeds.
AF-Muon uses $c=3,s=0.5$; the SCION-style Sign endpoint uses $c=1,s=1$.

\begin{table}[ht]
\centering
\caption{
Real-width NanoGPT ablation at fixed 50k vocabulary and 500M training tokens.
We report mean $\pm$ sample standard deviation over three paired seeds.
Rows clipped denotes the fraction of tied-table rows for which the finite-cap
constraint is active in the final tied-momentum diagnostic.
}
\label{tab:app_nanogpt_real_width}
\begin{tabular}{llccccc}
\toprule
Width $d$ & Optimizer & Params & Train loss $\downarrow$ & Val. loss $\downarrow$ & Val. PPL $\downarrow$ \\
\midrule
768 & SCION-style Sign & 123.4M & 3.9881 $\pm$ 0.0079 & 3.9260 $\pm$ 0.0014 & 50.70 $\pm$ 0.07 \\
768 & AF-Muon & 123.4M & \textbf{3.9561 $\pm$ 0.0017} & \textbf{3.8897 $\pm$ 0.0051} & \textbf{48.90 $\pm$ 0.25} \\
\midrule
1024 & SCION-style Sign & 202.2M & 3.9205 $\pm$ 0.0035 & 3.8464 $\pm$ 0.0005 & 46.82 $\pm$ 0.02 \\
1024 & AF-Muon & 202.2M & \textbf{3.8822 $\pm$ 0.0026} & \textbf{3.8068 $\pm$ 0.0026} & \textbf{45.00 $\pm$ 0.12} \\
\midrule
1536 & SCION-style Sign & 416.6M & 3.8475 $\pm$ 0.0020 & 3.7573 $\pm$ 0.0041 & 42.83 $\pm$ 0.18 \\
1536 & AF-Muon & 416.6M & \textbf{3.8136 $\pm$ 0.0031} & \textbf{3.7253 $\pm$ 0.0051} & \textbf{41.48 $\pm$ 0.21} \\
\bottomrule
\end{tabular}
\end{table}

AF-Muon remains better than the Sign endpoint at all three real widths. The
paired AF-Muon minus Sign validation-loss gaps are $-0.0363\pm0.0063$ at
$d=768$, $-0.0397\pm0.0023$ at $d=1024$, and $-0.0321\pm0.0085$ at $d=1536$.
Thus increasing real hidden width attenuates neither the sign nor the
consistency of the finite-cap advantage in this NanoGPT setting, although the
row-clipping diagnostic moves closer to the Sign endpoint as width increases.

\begin{table}[ht]
\centering
\caption{
Per-seed real-width NanoGPT results for the width ablation in
Table~\ref{tab:app_nanogpt_real_width}.
}
\label{tab:app_nanogpt_real_width_per_seed}
\begin{tabular}{llcccc}
\toprule
Width $d$ & Optimizer & Seed & Train loss $\downarrow$ & Val. loss $\downarrow$ & Val. PPL $\downarrow$ \\
\midrule
768 & SCION-style Sign & 43 & 3.9971 & 3.9250 & 50.65 \\
768 & SCION-style Sign & 44 & 3.9844 & 3.9253 & 50.67 \\
768 & SCION-style Sign & 45 & 3.9827 & 3.9276 & 50.79 \\
768 & AF-Muon & 43 & 3.9558 & 3.8867 & 48.75 \\
768 & AF-Muon & 44 & 3.9546 & 3.8956 & 49.19 \\
768 & AF-Muon & 45 & 3.9580 & 3.8869 & 48.76 \\
\midrule
1024 & SCION-style Sign & 43 & 3.9235 & 3.8465 & 46.83 \\
1024 & SCION-style Sign & 44 & 3.9214 & 3.8469 & 46.85 \\
1024 & SCION-style Sign & 45 & 3.9166 & 3.8459 & 46.80 \\
1024 & AF-Muon & 43 & 3.8850 & 3.8092 & 45.11 \\
1024 & AF-Muon & 44 & 3.8818 & 3.8071 & 45.02 \\
1024 & AF-Muon & 45 & 3.8799 & 3.8040 & 44.88 \\
\midrule
1536 & SCION-style Sign & 43 & 3.8487 & 3.7555 & 42.75 \\
1536 & SCION-style Sign & 44 & 3.8451 & 3.7544 & 42.71 \\
1536 & SCION-style Sign & 45 & 3.8485 & 3.7621 & 43.04 \\
1536 & AF-Muon & 43 & 3.8168 & 3.7308 & 41.71 \\
1536 & AF-Muon & 44 & 3.8106 & 3.7243 & 41.44 \\
1536 & AF-Muon & 45 & 3.8134 & 3.7207 & 41.29 \\
\bottomrule
\end{tabular}
\end{table}

\begin{table}[ht]
\centering
\caption{
Paired AF-Muon minus SCION-style Sign gaps for the real-width NanoGPT ablation.
Negative values indicate that AF-Muon is better.
}
\label{tab:app_nanogpt_real_width_gaps}
\begin{tabular}{lccc}
\toprule
Width $d$ & Seeds & $\Delta$ Val. loss & $\Delta$ Val. PPL \\
\midrule
768 & 43,44,45 & $-0.0363 \pm 0.0058$ & $-1.81 \pm 0.28$ \\
1024 & 43,44,45 & $-0.0397 \pm 0.0023$ & $-1.82 \pm 0.10$ \\
1536 & 43,44,45 & $-0.0321 \pm 0.0085$ & $-1.35 \pm 0.36$ \\
\bottomrule
\end{tabular}
\end{table}

\begin{figure}[ht]
\centering

\textbf{(a) Hidden width $d=768$}
\includegraphics[
  width=\linewidth,
  trim=0.0in 0.30in 0.0in 0.15in,
  clip
]{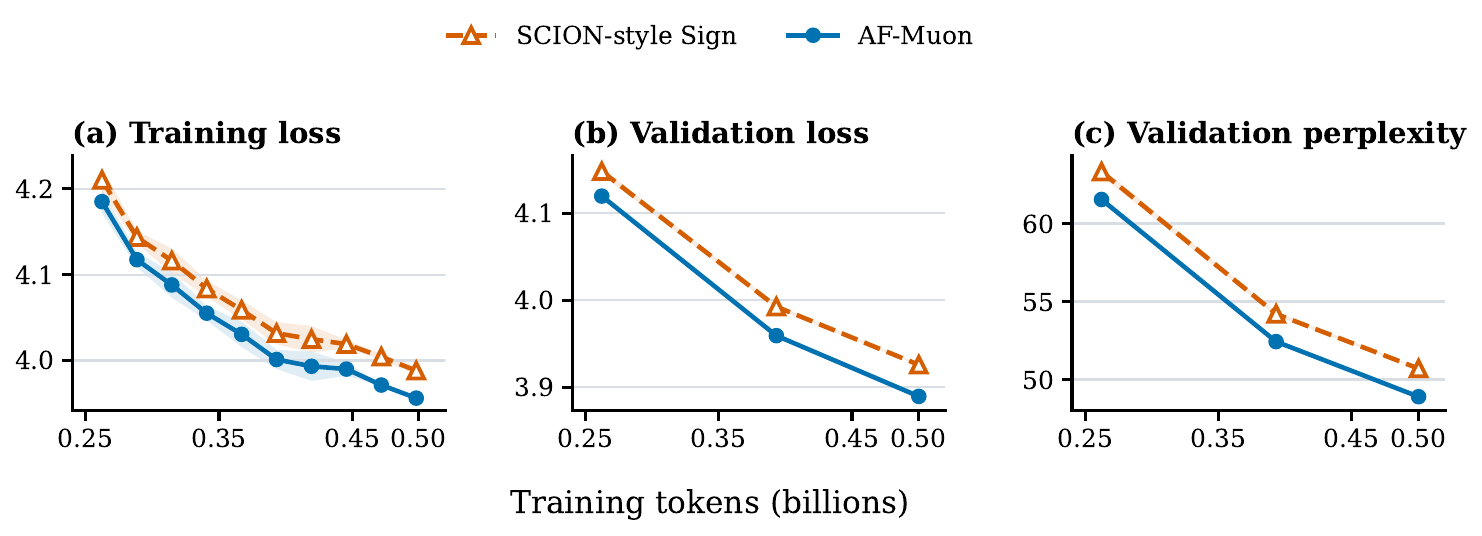}

\vspace{0.6em}
\textbf{(b) Hidden width $d=1024$}
\includegraphics[
  width=\linewidth,
  trim=0.0in 0.30in 0.0in 0.15in,
  clip
]{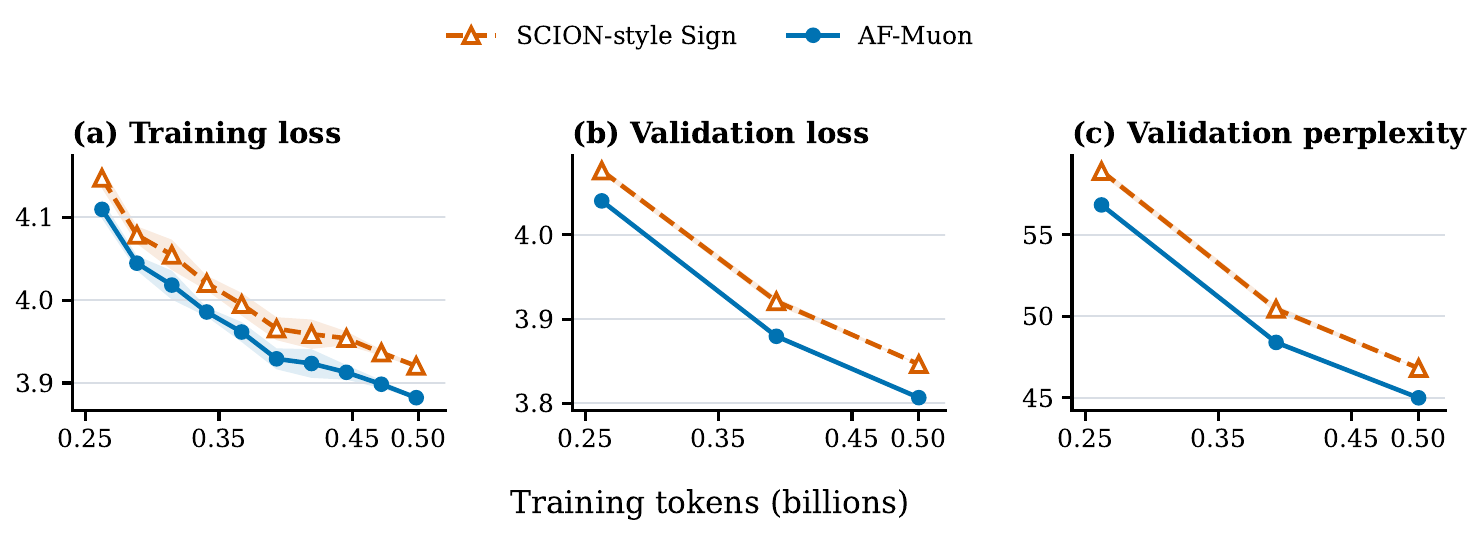}

\vspace{0.6em}
\textbf{(c) Hidden width $d=1536$}
\includegraphics[
  width=\linewidth,
  trim=0.0in 0.10in 0.0in 0.15in,
  clip
]{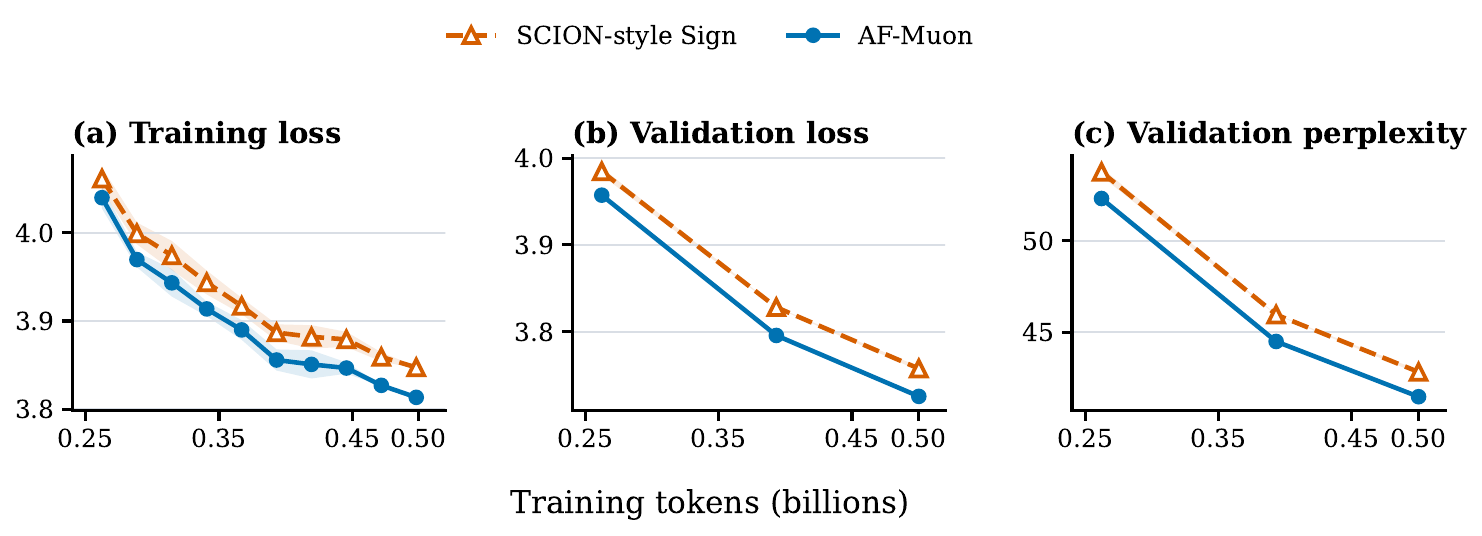}

\caption{
Real-width NanoGPT training dynamics at fixed 50k vocabulary and 500M training tokens.
Each row compares AF-Muon and the SCION-style Sign endpoint for one hidden
width $d \in \{768,1024,1536\}$. Curves show the mean over three paired seeds,
with shaded regions denoting $\pm 1$ standard error.
}
\label{fig:nanogpt_real_width_curves}
\end{figure}

\clearpage
\subsection{Batch-size Sensitivity}
\label{app:nanogpt_batch_sensitivity}

We next test whether the AF-Muon tied-table advantage depends on the particular
large-batch setting used in the main NanoGPT experiment. We use the same
NanoGPT-style tied-embedding model, FineWeb tokenization, optimizer
hyperparameters, FP32 trainable parameters and optimizer state with BF16
autocast, matrix learning rate $0.02$, auxiliary/vector learning rate
$3\times10^{-4}$, matrix weight decay $0.1$, and gradient clipping at norm
$1.0$. We vary only the global batch size, keeping the micro-batch size fixed at 16 and changing gradient accumulation accordingly. All methods are compared at the same token budget of 458.8M tokens.

\begin{table}[ht]
\centering
\caption{
NanoGPT batch-size sensitivity at a matched 458.8M-token comparison point.
All runs use seed 43. Tokens/update is the number of training tokens per
optimizer step.
}
\label{tab:app_nanogpt_batch_sensitivity}
\resizebox{\linewidth}{!}{%
\begin{tabular}{cccccc}
\toprule
Batch & Tokens/update & Hybrid Muon & SCION-style Sign & AF-Muon & AF--Sign \\
\midrule
32  & 32,768  & 4.2078 & 4.1472 & \textbf{4.1378} & -0.0094 \\
64  & 65,536  & 3.9918 & 3.9738 & \textbf{3.9514} & -0.0224 \\
128 & 131,072 & 3.8571 & 3.8559 & \textbf{3.8180} & -0.0379 \\
256 & 262,144 & 3.7950 & 3.7890 & \textbf{3.7544} & -0.0346 \\
512 & 524,288 & 3.8200 & 3.7649 & \textbf{3.7400} & -0.0249 \\
\bottomrule
\end{tabular}%
}
\end{table}

AF-Muon gives the lowest validation loss at every tested batch size, from
32k to 524k tokens per optimizer update. The advantage over the Sign endpoint is smallest at the smallest batch and becomes clearer for the intermediate and large-batch settings. This indicates that the finite-cap tied-table update is not narrowly tuned to the default large-batch protocol: it remains beneficial across a broad range of effective batch sizes under the same matrix-update and
precision settings.
% \begin{figure}[ht]
%     \centering
%     \includegraphics[width=\linewidth]{figs/batch_size_ablation/nanogpt_batch_sensitivity_458m.pdf}
%     \caption{
%     NanoGPT batch-size sensitivity at a matched 458.8M-token comparison point. Each point is a seed-43 run with the same model, data, optimizer
%     hyperparameters, FP32 trainable parameters and optimizer state with BF16
%     autocast, matrix learning rate $0.02$, auxiliary/vector learning rate
%     $3\times10^{-4}$, matrix weight decay $0.1$, and gradient clipping at norm $1.0$. The x-axis varies only the number of tokens per optimizer update. AF-Muon obtains the lowest training loss, validation loss, and validation perplexity across all tested batch sizes.
%     }
%     \label{fig:app_nanogpt_batch_sensitivity}
% \end{figure}

Figure~\ref{fig:app_nanogpt_batch_sensitivity} visualizes the same matched-token comparison. AF-Muon remains below both baselines across the full tested range of global batch sizes, indicating that the finite-cap tied-table update is not narrowly tuned to a single effective batch size.

%--------------------
% \newpage
\subsection{Auxiliary Learning-rate Sensitivity}
\label{app:nanogpt_auxlr_sensitivity}

\paragraph{Auxiliary/fallback learning-rate sensitivity.}
Throughout the main experiments, we use the standard Muon parameter grouping:
hidden matrices are updated by Muon with learning rate $0.02$, while parameters
outside the hidden matrix group, such as embeddings, classifier heads,
normalization scales, and biases, are assigned to an auxiliary AdamW-style branch
with learning rate $3\times 10^{-4}$ in the reference Muon recipe
\citep{jordan2024muon}. The public Muon implementation also describes this
learning rate as being in spectral-norm units for hidden matrices and recommends
AdamW for the non-hidden parameters. We apply this grouping convention across
models, replacing the AdamW treatment of the tied vocabulary table by the
SCION-style Sign endpoint or by AF-Muon where appropriate, while keeping the
hidden-matrix Muon learning rate fixed.

To test whether the NanoGPT comparison depends on a single auxiliary scale,
Table~\ref{tab:app_nanogpt_auxlr_sensitivity} sweeps the auxiliary/fallback
learning rate around and beyond the Muon default. We use the same NanoGPT-style
tied-embedding model and FineWeb tokenization as in
Appendix~\ref{app:nanogpt_tied_vocab}. All runs use seed 43, FP32 trainable
parameters and optimizer state with BF16 autocast, matrix learning rate $0.02$,
matrix weight decay $0.1$, gradient clipping at norm $1.0$, batch size 512,
micro-batch size 16, and 524{,}288 tokens per optimizer update. We vary the
auxiliary/fallback learning rate over
\[
7.5\times10^{-5},\;1.5\times10^{-4},\;3\times10^{-4},\;
6\times10^{-4},\;1.2\times10^{-3},\;2.4\times10^{-3},\;9.6\times10^{-3},
\]
corresponding to $0.25\times$ through $32\times$ the default
$3\times10^{-4}$. For Hybrid Muon, this branch includes the tied vocabulary
table; for the AdamW-free tied-table arms, the tied table is updated by the
Sign or AF-Muon LMO and the sweep tests the remaining auxiliary/vector scale.
All methods are compared at the same 458.8M-token point validation loss.

\begin{table}[ht]
\centering
\caption{
NanoGPT auxiliary/fallback learning-rate sensitivity at a matched 458.8M-token comparison point. All runs use seed 43. The hidden-matrix Muon learning rate is fixed at $0.02$; the auxiliary/fallback learning-rate scale is varied. For Hybrid Muon this branch includes the tied vocabulary table, while the AdamW-free arms use their tied-table LMO and vary the remaining auxiliary/vector scale.
}
\label{tab:app_nanogpt_auxlr_sensitivity}
\resizebox{\linewidth}{!}{%
\begin{tabular}{ccccccc}
\toprule
Aux/tied LR & Mult. & Hybrid Muon & SCION-style Sign & AF-Muon & AF--Sign & AF--Hybrid \\
\midrule
$7.5\times10^{-5}$ & $0.25\times$ & 4.1498 & 3.7743 & \textbf{3.7557} & -0.0185 & -0.3941 \\
$1.5\times10^{-4}$ & $0.5\times$  & 3.9716 & 3.7664 & \textbf{3.7472} & -0.0192 & -0.2244 \\
$3.0\times10^{-4}$ & $1.0\times$  & 3.8182 & 3.7726 & \textbf{3.7337} & -0.0389 & -0.0845 \\
$6.0\times10^{-4}$ & $2.0\times$  & 3.7641 & 3.7522 & \textbf{3.7284} & -0.0238 & -0.0357 \\
$1.2\times10^{-3}$ & $4.0\times$  & 3.7304 & 3.7393 & \textbf{3.7220} & -0.0173 & -0.0084 \\
$2.4\times10^{-3}$ & $8.0\times$  & 3.7233 & 3.7283 & \textbf{3.7208} & -0.0075 & -0.0024 \\
$9.6\times10^{-3}$ & $32.0\times$ & 3.7393 & \textbf{3.7371} & 3.7372 & 0.0001 & -0.0021 \\
\bottomrule
\end{tabular}%
}
\end{table}

The sweep shows that Hybrid Muon is sensitive to the auxiliary AdamW branch: increasing this learning rate substantially narrows the gap relative to AF-Muon.
However, AF-Muon remains competitive across the extended range and attains the best loss at most tested settings before the largest rates degrade all methods.
We therefore view this as a sensitivity diagnostic under literature-motivated Muon defaults, not as a claim of compute-optimal tuning for every optimizer.

% \begin{figure}[ht]
%     \centering
%     \includegraphics[width=\linewidth]{figs/lr_ablation/nanogpt_auxlr_sensitivity.pdf}
%     \caption{
%     NanoGPT auxiliary/tied learning-rate sensitivity at a matched 458.8M-token
%     comparison point. The matrix learning rate is fixed at $0.02$, while the
%     auxiliary/tied learning rate is swept over a $16\times$ range around the
%     default value $3\times10^{-4}$. Each point is a seed-43 run using the same tied-embedding model, data, batch size, precision, matrix weight decay, and gradient clipping protocol as the main NanoGPT experiment. AF-Muon remains best across the full sweep.
%     }
%     \label{fig:app_nanogpt_auxlr_sensitivity}
% \end{figure}

% Figure~\ref{fig:app_nanogpt_auxlr_sensitivity} visualizes the same sweep.
% Hybrid Muon is highly sensitive to reducing the auxiliary AdamW learning rate: at $0.25\times$ and $0.5\times$ the tied/auxiliary branch substantially under-trains. SCION-style Sign is flatter across the sweep, but AF-Muon remains below both baselines at every tested multiplier. This supports the view that the AF-Muon improvement is not caused by a fragile choice of the default auxiliary/tied learning rate.
%-------------------- cap sweep
% \newpage
\subsection{AF-Muon Cap and Scale Sensitivity}
\label{app:nanogpt_cap_scale_sensitivity}
We evaluate sensitivity to the tied-table coordinate cap $c$ and
update scale $s$ using the NanoGPT-style tied-vocabulary model
and large-batch protocol of Appendix~\ref{app:nanogpt_tied_vocab}. We test
$c\in\{1,2,3,4,6,10,\infty\}$ and $s\in\{0.5,1\}$ with three matched seeds per configuration, for a total of 42 training runs. All configurations use the same NanoGPT tied-vocabulary task and are evaluated at optimizer step 750, corresponding to 393.2M training tokens. The endpoints $c=1$ and $c=\infty$ recover Sign and row-RMS, respectively.

\begin{table}[ht]
\centering
\caption{
AF-Muon cap and scale sensitivity on the NanoGPT tied-vocabulary task at the
393.2M-token intermediate comparison point. Results are mean $\pm$ SD over
three paired seeds. The default setting used in the main experiments is $c=3$,
$s=0.5$.
}
\label{tab:app_nanogpt_cap_scale_sensitivity_step750}
\resizebox{\linewidth}{!}{%
\begin{tabular}{ccccccc}
\toprule
Cap $c$ & Scale $s$ & Train loss & Val. loss & Val. PPL & Tied RMS & Tied max \\
\midrule
1 & 0.5 & 3.7880 $\pm$ 0.0208 & 3.7831 $\pm$ 0.0033 & 43.95 $\pm$ 0.15 & 0.1270 $\pm$ 0.0008 & 0.6653 $\pm$ 0.0033 \\
2 & 0.5 & 3.7885 $\pm$ 0.0250 & 3.7826 $\pm$ 0.0021 & 43.93 $\pm$ 0.09 & 0.1444 $\pm$ 0.0004 & 1.1725 $\pm$ 0.0324 \\
3 & 0.5 & \textbf{3.7872 $\pm$ 0.0253} & \textbf{3.7815 $\pm$ 0.0026} & \textbf{43.88 $\pm$ 0.12} & 0.1418 $\pm$ 0.0029 & 1.5712 $\pm$ 0.0923 \\
4 & 0.5 & 3.7892 $\pm$ 0.0241 & 3.7856 $\pm$ 0.0062 & 44.06 $\pm$ 0.27 & 0.1409 $\pm$ 0.0039 & 1.7930 $\pm$ 0.1915 \\
6 & 0.5 & 3.7910 $\pm$ 0.0232 & 3.7845 $\pm$ 0.0005 & 44.01 $\pm$ 0.02 & 0.1408 $\pm$ 0.0017 & 2.1051 $\pm$ 0.2569 \\
10 & 0.5 & 3.7909 $\pm$ 0.0244 & 3.7867 $\pm$ 0.0014 & 44.11 $\pm$ 0.06 & 0.1399 $\pm$ 0.0018 & 2.5313 $\pm$ 0.2197 \\
$\infty$ & 0.5 & 3.7919 $\pm$ 0.0227 & 3.7844 $\pm$ 0.0010 & 44.01 $\pm$ 0.05 & 0.1404 $\pm$ 0.0019 & 2.6954 $\pm$ 0.0666 \\
\midrule
1 & 1 & 3.8168 $\pm$ 0.0301 & 3.8109 $\pm$ 0.0064 & 45.19 $\pm$ 0.29 & 0.2407 $\pm$ 0.0039 & 1.2568 $\pm$ 0.0039 \\
2 & 1 & 3.8250 $\pm$ 0.0243 & 3.8214 $\pm$ 0.0042 & 45.67 $\pm$ 0.19 & 0.2778 $\pm$ 0.0043 & 2.3705 $\pm$ 0.0609 \\
3 & 1 & 3.8288 $\pm$ 0.0221 & 3.8124 $\pm$ 0.0065 & 45.26 $\pm$ 0.30 & 0.2755 $\pm$ 0.0045 & 3.1119 $\pm$ 0.2910 \\
4 & 1 & 3.8291 $\pm$ 0.0289 & 3.8252 $\pm$ 0.0049 & 45.84 $\pm$ 0.23 & 0.2701 $\pm$ 0.0011 & 4.0566 $\pm$ 0.0744 \\
6 & 1 & 3.8285 $\pm$ 0.0213 & 3.8214 $\pm$ 0.0071 & 45.67 $\pm$ 0.32 & 0.2692 $\pm$ 0.0029 & 4.8737 $\pm$ 0.7576 \\
10 & 1 & 3.8199 $\pm$ 0.0220 & 3.8160 $\pm$ 0.0042 & 45.42 $\pm$ 0.19 & 0.2688 $\pm$ 0.0049 & 5.6160 $\pm$ 0.2148 \\
$\infty$ & 1 & 3.8244 $\pm$ 0.0186 & 3.8171 $\pm$ 0.0092 & 45.47 $\pm$ 0.42 & 0.2695 $\pm$ 0.0035 & 7.6845 $\pm$ 0.1437 \\
\bottomrule
\end{tabular}%
}
\end{table}

\begin{table}[ht]
\centering
\caption{
Tied-table diagnostics for the NanoGPT cap and scale sweep at
393.2M training tokens. Results are mean $\pm$ SD over three
matched seeds. Coordinate and row fractions describe cap activity; tail energy is computed from the row-RMS endpoint before applying the finite-cap oracle. Cosine similarity and linear-objective ratio compare the normalized AF-Muon direction with Sign evaluated on the same oracle input within each run.
}
\label{tab:app_nanogpt_cap_scale_diagnostics_step750}
\resizebox{\linewidth}{!}{%
\begin{tabular}{cccccccc}
\toprule
Cap $c$ & Scale $s$ & Matrix RMS & Coord. capped & Rows clipped & Tail energy & Cos(AF, Sign) & Obj. ratio \\
\midrule
1 & 0.5 & 0.0493 $\pm$ 0.0002 & 1.0000 $\pm$ 0.0000 & 1.0000 $\pm$ 0.0000 & 0.8005 $\pm$ 0.0002 & 1.000 $\pm$ 0.000 & 1.000 $\pm$ 0.000 \\
2 & 0.5 & 0.0501 $\pm$ 0.0001 & 0.0579 $\pm$ 0.0000 & 1.0000 $\pm$ 0.0000 & 0.2723 $\pm$ 0.0021 & 0.818 $\pm$ 0.000 & 1.246 $\pm$ 0.000 \\
3 & 0.5 & 0.0500 $\pm$ 0.0001 & 0.0028 $\pm$ 0.0001 & 0.8949 $\pm$ 0.0042 & 0.0303 $\pm$ 0.0014 & 0.799 $\pm$ 0.000 & 1.253 $\pm$ 0.000 \\
4 & 0.5 & 0.0500 $\pm$ 0.0002 & 0.0001 $\pm$ 0.0000 & 0.0484 $\pm$ 0.0050 & 0.0012 $\pm$ 0.0002 & 0.798 $\pm$ 0.000 & 1.253 $\pm$ 0.000 \\
6 & 0.5 & 0.0500 $\pm$ 0.0001 & 0.0000 $\pm$ 0.0000 & 0.0002 $\pm$ 0.0004 & 0.0000 $\pm$ 0.0000 & 0.798 $\pm$ 0.000 & 1.253 $\pm$ 0.000 \\
10 & 0.5 & 0.0499 $\pm$ 0.0001 & 0.0000 $\pm$ 0.0000 & 0.0000 $\pm$ 0.0000 & 0.0000 $\pm$ 0.0000 & 0.798 $\pm$ 0.000 & 1.253 $\pm$ 0.000 \\
$\infty$ & 0.5 & 0.0500 $\pm$ 0.0000 & 0.0000 $\pm$ 0.0000 & 0.0000 $\pm$ 0.0000 & 0.0000 $\pm$ 0.0000 & 0.798 $\pm$ 0.000 & 1.253 $\pm$ 0.000 \\
\midrule
1 & 1 & 0.0557 $\pm$ 0.0003 & 1.0000 $\pm$ 0.0000 & 1.0000 $\pm$ 0.0000 & 0.7988 $\pm$ 0.0009 & 1.000 $\pm$ 0.000 & 1.000 $\pm$ 0.000 \\
2 & 1 & 0.0589 $\pm$ 0.0002 & 0.0588 $\pm$ 0.0003 & 1.0000 $\pm$ 0.0000 & 0.3024 $\pm$ 0.0077 & 0.817 $\pm$ 0.000 & 1.251 $\pm$ 0.002 \\
3 & 1 & 0.0594 $\pm$ 0.0007 & 0.0032 $\pm$ 0.0003 & 0.9195 $\pm$ 0.0180 & 0.0450 $\pm$ 0.0133 & 0.799 $\pm$ 0.001 & 1.255 $\pm$ 0.002 \\
4 & 1 & 0.0594 $\pm$ 0.0002 & 0.0004 $\pm$ 0.0001 & 0.2476 $\pm$ 0.0625 & 0.0196 $\pm$ 0.0093 & 0.797 $\pm$ 0.001 & 1.257 $\pm$ 0.002 \\
6 & 1 & 0.0593 $\pm$ 0.0002 & 0.0000 $\pm$ 0.0000 & 0.0238 $\pm$ 0.0384 & 0.0016 $\pm$ 0.0027 & 0.797 $\pm$ 0.001 & 1.255 $\pm$ 0.002 \\
10 & 1 & 0.0593 $\pm$ 0.0003 & 0.0000 $\pm$ 0.0000 & 0.0000 $\pm$ 0.0000 & 0.0000 $\pm$ 0.0000 & 0.797 $\pm$ 0.000 & 1.254 $\pm$ 0.000 \\
$\infty$ & 1 & 0.0596 $\pm$ 0.0002 & 0.0000 $\pm$ 0.0000 & 0.0000 $\pm$ 0.0000 & 0.0000 $\pm$ 0.0000 & 0.797 $\pm$ 0.000 & 1.254 $\pm$ 0.000 \\
\bottomrule
\end{tabular}%
}
\end{table}
%%%%%%%%%%%%%%%%%%%%

\paragraph{Update scale has the larger effect in this sweep.}
Reducing $s$ from $1$ to $0.5$ lowers mean validation loss for every tested cap.
At $s=0.5$, the mean losses across caps lie in a narrow band. Thus, calibrating
the tied-table update magnitude has a larger effect than selecting the precise
cap within the tested grid. In particular, comparisons between the default
$(c,s)=(3,0.5)$ and the standard Sign endpoint $(1,1)$ reflect changes in both
geometry and scale.

\paragraph{The finite-cap default remains competitive at matched scale.}
The default $(c,s)=(3,0.5)$ achieves the lowest mean validation loss in this
intermediate sweep: $3.7815\pm0.0026$, compared with $3.7831\pm0.0033$ for
scale-matched Sign and $3.7844\pm0.0010$ for scale-matched row-RMS. These margins
are small, so we do not interpret the observed ordering as establishing
statistical superiority. At $s=1$, Sign has the lowest mean loss, further
indicating that no cap is uniformly best across tested scales.

\paragraph{Selective coordinate control motivates the default.}
At the same 393.2M-token audit, $c=3,s=0.5$ caps only
$0.0028\pm0.0001$ of coordinates while affecting
$0.8949\pm0.0042$ of vocabulary rows. The cap therefore acts on a small fraction
of coordinates but across most rows. Its normalized direction attains a
same-state linear-objective ratio of $1.253\pm0.000$ relative to Sign. By
construction, the normalized direction satisfies $\|u\|_\infty\le c$; for rows
where the row constraint is active, this also bounds coordinate concentration.
These properties distinguish the finite-cap update from Sign, which assigns
equal magnitude to nonzero coordinates, and from row-RMS, which lacks the
additional coordinate constraint.

\paragraph{Row-RMS tail-energy diagnostic.}
To characterize the coordinate concentration encountered by the cap, we compute
the stabilized row-RMS endpoint
\[
R_i(B)=
\frac{B_i}{\max\{\operatorname{RMS}(B_i),\varepsilon\}}
\]
and define its tail-energy fraction at threshold $c$ as
\[
E_{\mathrm{tail}}(B;c)
=
\frac{
\sum_{i,j} R_{ij}(B)^2
\mathbf{1}\{|R_{ij}(B)|>c\}
}{
\sum_{i,j} R_{ij}(B)^2
},
\]
with value zero when the denominator is zero. This diagnostic measures the fraction of squared row-RMS endpoint energy carried by coordinates exceeding the proposed cap. It is evaluated before applying the finite-cap oracle and does not measure energy removed, linear-objective retention, or validation
performance.
At threshold $c=1$, approximately $80\%$ of the row-RMS endpoint energy lies in coordinates exceeding the threshold. This describes the pre-oracle row-RMS direction; the completed Sign direction has equal magnitudes on its nonzero coordinates. For the default $(c,s)=(3,0.5)$,
$(3.03\pm0.14)\%$ of the squared row-RMS endpoint energy
lies in coordinates whose absolute values exceed $3$.

Together, these results support $c=3,s=0.5$ as a practical default with selective coordinate control and competitive validation performance. They also limit the claim: the sweep supports the combined scale-calibrated AF-Muon recipe, but does not establish a universal performance advantage for the finite
cap over appropriately scaled endpoints.
%-------------------- Decomposition ablation
% \newpage
\subsection{Sparse-input and Dense-output Gradient Sources}
\label{app:nanogpt_tied_source_ablation}

A tied input/output vocabulary table receives two structurally different
gradient sources: sparse input-lookup gradients from the embedding use, and
dense output-classifier gradients from the language-modeling head. We isolate these sources by running AF-Muon on the same NanoGPT-style tied-embedding model while replacing the tied-table gradient before the optimizer step. The three variants use only the sparse input component, only the dense output component, or the normal sum of both components. All other parameters, including the Transformer matrices and one-dimensional auxiliary parameters, are trained with the standard AF-Muon recipe. All runs use seed 43, FP32 trainable parameters and optimizer state with BF16 autocast, matrix learning rate $0.02$, auxiliary/vector learning rate $3\times10^{-4}$, matrix weight decay $0.1$, zero auxiliary/tied weight decay, gradient clipping at norm $1.0$, batch size 512, micro-batch size 16, and 524,288 tokens per optimizer update. We compare all variants at 458.8M training tokens.

\begin{table}[ht]
\centering
\caption{
NanoGPT tied-table gradient-source ablation at a matched 458.8M-token
comparison point. Only the gradient used for the tied input/output vocabulary table is changed; all other model parameters use the same AF-Muon update.
}
\label{tab:app_nanogpt_tied_source_ablation}
\resizebox{\linewidth}{!}{%
\begin{tabular}{lcccccc}
\toprule
Tied-table gradient & Train loss & Val. loss & Val. PPL & Input rows & Output rows & Input-output cos. \\
\midrule
Sparse input only & 4.0353 & 4.0513 & 57.47 & 0.601 & -- & -- \\
Dense output only & 3.7471 & 3.7634 & 43.10 & -- & 1.000 & -- \\
Sparse input + dense output & \textbf{3.7217} & \textbf{3.7395} & \textbf{42.08} & 0.601 & 1.000 & 0.003 \\
\bottomrule
\end{tabular}%
}
\end{table}

The dense output-classifier gradient is the dominant tied-table signal:
removing it and using only sparse input-lookup gradients substantially worsens validation loss. However, the sparse input signal is not redundant. Using only the dense output component reaches validation loss $3.7634$, while the full tied gradient improves to $3.7395$. The input/output gradient cosine is near zero, showing that the sparse lookup component supplies a distinct direction rather than a scaled copy of the dense classifier gradient.

\begin{figure}[ht]
    \centering
    \includegraphics[width=\linewidth]{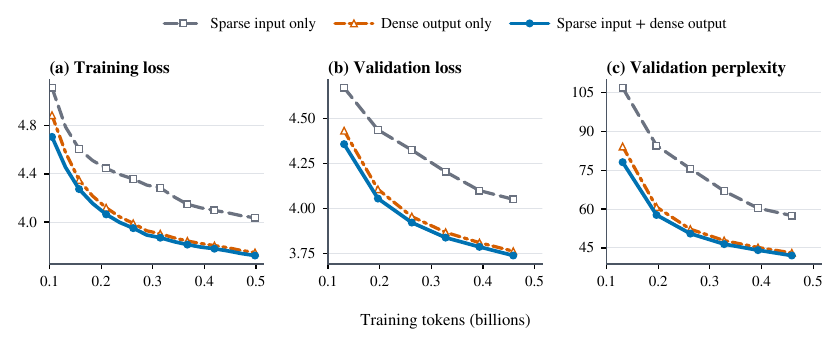}
    \caption{
    Training dynamics for the NanoGPT tied-table gradient-source ablation. Curves show seed-43 runs after the first 100M training tokens. All variants use the same AF-Muon optimizer for non-tied parameters; only the gradient supplied to the tied input/output vocabulary table is changed. The dense output-only variant learns much faster than the sparse input-only variant, but the full sparse-input plus dense-output tied gradient remains best.
    }
    \label{fig:app_nanogpt_tied_source_ablation_curves}
\end{figure}

Figure~\ref{fig:app_nanogpt_tied_source_ablation_curves} shows that the advantage of the full tied gradient persists throughout training: the sparse input-only variant remains far behind, while the full sparse-plus-dense update consistently improves over the dense output-only update.
%--------------------
% \newpage
\subsection{Boundary case: homogeneous dense-parameter reuse}
\label{app:homogeneous-dense-sharing}

AF-Muon is motivated by a parameter tensor that is shared across
\emph{heterogeneous} functional roles. This raises a natural boundary question:
does every form of parameter sharing require an intersection geometry, or is
ordinary Muon already sufficient when the shared uses have the same operator
type?

To examine this distinction, we construct a small diagnostic in which the same
MLP up- and down-projection matrices are reused at four successive depths. This
setting produces several gradient contributions for one stored parameter, but
unlike a tied embedding/output table, every use remains a dense linear map with
the same spectral geometry.

\paragraph{Shared-use gradient decomposition.}

Let a matrix \(W\) be used at \(L\) different depths. Introduce conceptual
copies \(W^{(1)},\ldots,W^{(L)}\) and write the loss before imposing parameter
sharing as
\[
\widetilde{\mathcal L}
\left(
W^{(1)},\ldots,W^{(L)}
\right).
\]
The shared model is obtained by imposing
\[
W^{(1)}=\cdots=W^{(L)}=W.
\]
The gradient on the stored shared parameter is therefore
\begin{equation}
\label{eq:shared-dense-gradient-sum}
G
:=
\nabla_W\mathcal L
=
\sum_{\ell=1}^{L}G_\ell,
\qquad
G_\ell
:=
\left.
\nabla_{W^{(\ell)}}\widetilde{\mathcal L}
\right|_{W^{(1)}=\cdots=W^{(L)}=W}.
\end{equation}
Thus, parameter sharing causes autograd to accumulate the use-specific
gradients \(G_\ell\) into one stored gradient \(G\).

The important distinction is that the individual \(G_\ell\) may differ because
each depth sees different activations and residual states, but all uses of
\(W\) still represent the same type of dense linear operator. Consequently,
they induce the same spectral constraint.

\paragraph{Why the intersection collapses.}

Let the feasible update set for each dense use be the same spectral ball
\[
\mathcal C_{\mathrm{sp}}
=
\left\{
U:\|U\|_{\mathrm{op}}\le r
\right\}.
\]
If every occurrence of the shared matrix has this geometry, then intersecting
the role constraints produces no new feasible set:
\begin{equation}
\label{eq:homogeneous-intersection-collapse}
\bigcap_{\ell=1}^{L}\mathcal C_{\mathrm{sp}}
=
\mathcal C_{\mathrm{sp}}.
\end{equation}
Moreover, because one update \(U\) is applied to every occurrence of \(W\), its
first-order effect is
\[
\sum_{\ell=1}^{L}\langle G_\ell,U\rangle
=
\left\langle
\sum_{\ell=1}^{L}G_\ell,U
\right\rangle
=
\langle G,U\rangle.
\]
The correct shared-parameter LMO is therefore
\begin{equation}
\label{eq:shared-dense-lmo}
D_{\mathrm{sp}}(G)
\in
\arg\max_{U\in\mathcal C_{\mathrm{sp}}}
\left\langle
\sum_{\ell=1}^{L}G_\ell,U
\right\rangle.
\end{equation}
In the exact spectral case, \(D_{\mathrm{sp}}(G)\) is the polar factor of the
accumulated gradient \(G\). In the implementation, it is approximated by the
same Newton--Schulz map used by Muon.
% This is different from independently computing one Muon direction per use and
% then summing:
% \[
% V_{\mathrm{raw}}
% =
% \sum_{\ell=1}^{L}D_{\mathrm{sp}}(G_\ell).
% \]
% Because the LMO is nonlinear,
% \[
% D_{\mathrm{sp}}
% \left(
% \sum_{\ell=1}^{L}G_\ell
% \right)
% \neq
% \sum_{\ell=1}^{L}D_{\mathrm{sp}}(G_\ell)
% \]
% in general.

This is different from independently computing one Muon direction per use and
then summing, $V_{\mathrm{raw}}=\sum_{\ell=1}^{L}D_{\mathrm{sp}}(G_\ell)$.
Because the LMO is nonlinear,
$D_{\mathrm{sp}}\!\left(\sum_{\ell=1}^{L}G_\ell\right)
\neq \sum_{\ell=1}^{L}D_{\mathrm{sp}}(G_\ell)$ in general.
Furthermore, \(V_{\mathrm{raw}}\) generally lies outside the
single shared spectral ball because its operator norm can grow with \(L\).
Consequently, the raw difference between these two expressions is not a fair
same-radius comparison.

We instead rescale the summed per-use direction to the spectral radius of the
aggregate Muon direction:
\begin{equation}
\label{eq:equal-radius-rolewise-muon}
\widetilde V
=
\frac{
\left\|D_{\mathrm{sp}}(G)\right\|_{\mathrm{op}}
}{
\left\|V_{\mathrm{raw}}\right\|_{\mathrm{op}}
}
V_{\mathrm{raw}}.
\end{equation}
We then measure its equal-radius linear-objective retention:
\begin{equation}
\label{eq:shared-dense-objective-retention}
R_{\mathrm{obj}}
=
\frac{
\langle G,\widetilde V\rangle
}{
\langle G,D_{\mathrm{sp}}(G)\rangle
}.
\end{equation}
For an exact LMO and exact operator norms, \(R_{\mathrm{obj}}\le 1\). A value
below one measures the linearized objective lost by applying the nonlinear
spectral map separately to each use before combining the directions.

\paragraph{Diagnostic setup.}

We use a four-depth causal diagnostic model with hidden size \(512\),
feed-forward size \(2048\), and a vocabulary of \(49{,}152\) tokens. The same
MLP up-projection and down-projection matrices are reused at every depth, while
the normalization parameters remain depth-specific. The input embedding,
positional embedding, and output head are untied so that the experiment does
not contain the tied-vocabulary mechanism studied in the main paper.

The model contains \(52.6\)M parameters and is trained on \(5.0\)M predicted
tokens from FineWeb. We use FP32 parameters and optimizer state, no autocast,
Muon learning rate \(0.02\), auxiliary learning rate \(3\times10^{-4}\),
matrix weight decay \(0.1\), and auxiliary AdamW weight decay \(0.01\). Only
the shared dense matrices are relevant to this diagnostic. Because Hybrid Muon
and AF-Muon use the same Muon branch for these matrices, this is a diagnostic
of the common hidden-matrix update rather than a comparison between complete
optimizers.

For the final audit batch, we replace the shared matrix by \(L\) equal-valued
differentiable copies and compute the use-specific gradients
\(G_1,\ldots,G_L\). We validate the decomposition by comparing
\(\sum_\ell G_\ell\) with the gradient obtained from the genuinely shared
forward pass.

We report the pairwise depth-gradient cosine
\[
C_{\ell k}
=
\frac{
\langle G_\ell,G_k\rangle
}{
\|G_\ell\|_F\|G_k\|_F
},
\]
the coefficient of variation of the depth-gradient norms, and the cancellation
ratio
\begin{equation}
\label{eq:shared-dense-cancellation}
\kappa
=
\frac{
\left\|\sum_{\ell=1}^{L}G_\ell\right\|_F
}{
\sum_{\ell=1}^{L}\|G_\ell\|_F
}.
\end{equation}
Here \(\kappa=1\) indicates perfectly aligned contributions, whereas smaller
values indicate increasing cancellation or directional diversity.

\begin{table}[t]
\centering
\small
\caption{
\textbf{Homogeneous dense-sharing diagnostic.}
One up-projection and one down-projection are reused at four depths.
Statistics are measured at the final audit after \(5.0\)M predicted tokens.
Objective retention compares the equal-spectral-radius sum of per-depth Muon
directions with Muon applied once to the accumulated gradient. The experiment
uses one seed and is intended as a boundary diagnostic rather than a
performance comparison.
}
\label{tab:shared_dense_boundary}
\resizebox{\linewidth}{!}{
\begin{tabular}{lcccccccc}
\toprule
Matrix
& Mean pair cosine
& Min. pair cosine
& Negative pairs
& Norm CV
& Cancellation \(\kappa\)
& Direction cosine
& Objective retained
& Split error \\
\midrule
Up projection
& \(0.3396\)
& \(0.0507\)
& \(0.0\%\)
& \(0.3247\)
& \(0.6803\)
& \(0.6785\)
& \(74.08\%\)
& \(5.22\times10^{-8}\) \\
Down projection
& \(0.5014\)
& \(0.2251\)
& \(0.0\%\)
& \(0.2800\)
& \(0.7697\)
& \(0.7764\)
& \(83.85\%\)
& \(5.11\times10^{-8}\) \\
\bottomrule
\end{tabular}}
\end{table}

\paragraph{Results.}

The split-gradient reconstruction is numerically exact to floating-point
precision. The shared and split computations have identical loss and logits,
and the relative gradient reconstruction errors are
\(5.22\times10^{-8}\) and \(5.11\times10^{-8}\) for the up and down matrices,
respectively. No nonfinite values are observed.

The use-specific gradients are not identical. Their mean pairwise cosines are
\(0.3396\) for the up projection and \(0.5014\) for the down projection, and
their norm coefficients of variation are \(0.3247\) and \(0.2800\).
Nevertheless, every pairwise cosine remains positive: the minimum cosines are
\(0.0507\) and \(0.2251\), and the fraction of negative depth pairs is zero for
both matrices. The cancellation ratios are \(0.6803\) and \(0.7697\), indicating
moderate directional diversity but no severe destructive conflict.

Muon applied to the accumulated gradient differs substantially from the raw sum
of per-depth Muon directions, with direction cosines \(0.6785\) and \(0.7764\).
This difference is expected from the nonlinearity of the spectral LMO and is
not evidence of an optimizer failure. After matching spectral radius, the
per-depth-then-sum construction retains only \(74.08\%\) of the aggregate
Muon linear objective for the up projection and \(83.85\%\) for the down
projection. Thus, applying Muon once to the accumulated gradient is materially
better aligned with the shared parameter's first-order objective than applying
Muon independently to each occurrence and subsequently combining the
directions.

Training itself remains stable throughout the diagnostic. Validation loss
decreases from approximately \(10.91\) at initialization to \(6.35\) after
\(5.0\)M tokens, the final reported gradient norm is \(0.71\), and no
nonfinite parameters, gradients, or optimizer states are observed. We do not
interpret the absolute loss as a competitive language-modeling result because
the model is an intentionally restricted shared-MLP diagnostic without
attention or cross-token mixing.

\paragraph{Boundary with tied-table sharing.}

This experiment identifies an important boundary for role-aware optimizer
design. Repeated parameter use is not by itself sufficient to require a new
LMO. In the shared dense-layer setting, different use sites generate different
gradients, but every use induces the same spectral update geometry. The
intersection of their feasible sets therefore collapses to the original
spectral ball, and the standard Muon rule applied to the accumulated gradient
is the appropriate shared-parameter LMO.

The tied embedding/output table studied in the main paper lies on the other
side of this boundary. Its input uses act as vocabulary-indexed lookup rows,
whereas its output use acts as a dense vocabulary classifier. These roles
motivate distinct row-wise and entrywise controls:
\[
\mathcal C_{\mathrm{tie},c}
=
\left\{
U:
\max_i\frac{\|u_i\|_2}{\sqrt d}\le 1,\quad
\max_{i,j}\frac{|U_{ij}|}{c}\le 1
\right\}.
\]
Their intersection does not collapse and produces the finite-cap clipped-row
LMO. The relevant distinction is therefore not merely whether a tensor is
aliased, but whether its aliases impose heterogeneous update geometries:
\[
\boxed{
\text{homogeneous reuse}
\;\Longrightarrow\;
\text{accumulate gradients and apply the existing LMO},
}
\]
whereas
\[
\boxed{
\text{heterogeneous role sharing}
\;\Longrightarrow\;
\text{construct a joint feasible geometry}.
}
\]

The dense-sharing diagnostic therefore serves as a negative boundary control:
it shows why AF-Muon's new tied-table geometry is specialized to heterogeneous
lookup/classifier aliasing rather than being a generic modification for every
shared parameter.

\end{document}